\newtheorem{theorem}{Theorem}[section]
\newtheorem{proposition}[theorem]{Proposition}
\newtheorem{definition}[theorem]{Definition}
\icmltitlerunning{Learning Autoencoders with Relational Regularization}
\begin{document}

\twocolumn[
\icmltitle{Learning Autoencoders with Relational Regularization}



\icmlsetsymbol{equal}{*}

\begin{icmlauthorlist}
\icmlauthor{Hongteng Xu}{equal,in,du}
\icmlauthor{Dixin Luo}{equal,du}
\icmlauthor{Ricardo Henao}{du}
\icmlauthor{Svati Shah}{du}
\icmlauthor{Lawrence Carin}{du}
\end{icmlauthorlist}

\icmlaffiliation{du}{Duke University, Durham, NC, USA}
\icmlaffiliation{in}{Infinia ML Inc., Durham, NC, USA}

\icmlcorrespondingauthor{Hongteng Xu}{hongtengxu313@gmail.com}
\icmlcorrespondingauthor{Dixin Luo}{dixin.luo@duke.edu}

\icmlkeywords{Autoencoder, relational regularization, fused Gromov-Wasserstein}

\vskip 0.3in
]



\printAffiliationsAndNotice{\icmlEqualContribution} 

\begin{abstract}
A new algorithmic framework is proposed for learning autoencoders of data distributions. 
We minimize the discrepancy between the model and target distributions, with a \emph{relational regularization} on the learnable latent prior. 
This regularization penalizes the fused Gromov-Wasserstein (FGW) distance between the latent prior and its corresponding posterior, allowing one to flexibly learn a structured prior distribution associated with the generative model. 
Moreover, it helps co-training of multiple autoencoders even if they have heterogeneous architectures and incomparable latent spaces. 
We implement the framework with two scalable algorithms, making it applicable for both probabilistic and deterministic autoencoders. 
Our relational regularized autoencoder (RAE) outperforms existing methods, $e.g.$, the variational autoencoder, Wasserstein autoencoder, and their variants, on generating images. 
Additionally, our relational co-training strategy for autoencoders achieves encouraging results in both synthesis and real-world multi-view learning tasks. 
The code is at \url{https://github.com/HongtengXu/Relational-AutoEncoders}.
\end{abstract}

\section{Introduction}
Autoencoders have been used widely in many challenging machine learning tasks for generative modeling, $e.g.$, image~\cite{kingma2013auto,tolstikhin2018wasserstein} and sentence ~\cite{bowman2016generating,wang2019topic} generation.
Typically, an autoencoder assumes that the data in the sample space $\mathcal{X}$ may be mapped to a low-dimensional manifold, which can be represented in a latent space $\mathcal{Z}$.
The autoencoder fits the unknown data distribution $p_x$ via a latent-variable model denoted $p_G$, specified by a prior distribution $p_z$ on latent code $z\in\mathcal{Z}$ and a generative model $G:\mathcal{Z}\mapsto\mathcal{X}$ mapping the latent code to the data $x\in\mathcal{X}$. 
Learning seeks to minimize the discrepancy between $p_x$ and $p_G$. 
According to the choice of the discrepancy, we can derive different autoencoders.
For example, the variational autoencoder~\cite{kingma2013auto} applies the KL-divergence as the discrepancy and learns a probabilistic autoencoder via maximizing the evidence lower bound (ELBO).
The Wasserstein autoencoder (WAE)~\cite{tolstikhin2018wasserstein} minimizes a relaxed form of the Wasserstein distance between $p_x$ and $p_G$, and learns a deterministic autoencoder. 
In general, the objective function approximating the discrepancy consists of a reconstruction loss of observed data and a regularizer penalizing the difference between the prior distribution $p_z$ and the posterior derived by encoded data, $i.e.$, $q_{z|x}$.
Although existing autoencoders have achieved success in many generative tasks, they often suffer from the following two problems.

\textbf{Regularizer misspecification}
Typical autoencoders, like the VAE and WAE, fix the $p_z$ as a normal distribution, which often leads to the problem of over-regularization. 
Moreover, applying such an unstructured prior increases the difficulties in conditional generation tasks.
To avoid oversimplified priors, the Gaussian mixture VAE (GMVAE)~\cite{dilokthanakul2016deep} and the VAE with VampPrior~\cite{tomczak2018vae} characterize their priors as learnable mixture models. 
However, without side information~\cite{wang2019topic}, jointly learning the autoencoder and the prior suffers from a high risk of under-regularization, which is sensitive to the setting of hyperparameters ($e.g.$, the number of mixture components and the initialization of the prior).

\textbf{Co-training of heterogeneous autoencoders}
Solving a single task often relies on the data in different domains ($i.e.$, multi-view data).
For example, predicting the mortality of a patient may require both her clinical record and genetic information. 
In such a situation, we may need to learn multiple autoencoders to extract latent variables as features from different views.
Traditional multi-view learning strategies either assume that the co-trained autoencoders share the same latent distributions~\cite{wang2015deep,ye2016learning}, or assume that there exists an explicit transform between different latent spaces~\cite{wang2016coupled}. 
These assumptions are questionable in practice, as the corresponding autoencoders can have heterogeneous architectures and incomparable latent spaces. 
How to co-train such heterogeneous autoencoders is still an open problem.

To overcome the aforementioned problems, we propose a new \textbf{R}elational regularized \textbf{A}uto\textbf{E}ncoder (RAE).
As illustrated in Figure~\ref{fig:scheme_1}, we formulate the prior $p_z$ as a Gaussian mixture model. 
Differing from existing methods, however, we leverage the Gromov-Wasserstein (GW) distance~\cite{memoli2011gromov} to regularize the structural difference between the prior and the posterior in a relational manner, $i.e.$, comparing the distance between samples from the prior with samples from the posterior, and restricting their difference. 
Considering this relational regularizer allows us to implement the discrepancy between $p_z$ and $q_{z|x}$ as the fused Gromov-Wasserstein (FGW) distance~\cite{vayer2018fused}. 
Besides imposing structural constraints on the prior distribution within a single autoencoder, for multiple autoencoders with different latent spaces ($e.g.$, the 2D and 3D latent spaces shown in Figure~\ref{fig:scheme_2}) we can train them jointly by applying the relational regularizer to their posterior distributions. 

The proposed relational regularizer is applicable for both probabilistic and deterministic autoencoders, corresponding to approximating the FGW distance as hierarchical FGW and sliced FGW, respectively. 
We demonstrate the rationality of these two approximations and analyze their computational complexity. 
Experimental results show that $i$) learning RAEs helps achieve structured prior distributions and also suppresses the under-regularization problem, outperforming related approaches in image-generation tasks; and $ii$) the proposed relational co-training strategy is beneficial for learning heterogeneous autoencoders, which has potential for multi-view learning tasks.

\section{Relational Regularized Autoencoders}
\subsection{Learning mixture models as structured prior}
Following prior work with autoencoders~\cite{tolstikhin2018wasserstein,kolouri2018sliced}, we fit the model distribution $p_G$ by minimizing its Wasserstein distance to the data distribution $p_x$, $i.e.$, $\min D_{\text{w}}(p_x, p_G)$. 
According to Theorem 1 in~\cite{tolstikhin2018wasserstein}, we can relax the Wasserstein distance and formulate the learning problem as follows:
\begin{eqnarray}\label{eq:typical}
\min_{G,Q}~\underbrace{\mathbb{E}_{p_x}\mathbb{E}_{q_{z|x; Q}}[d(x, G(z))]}_{\text{reconstruction loss}} + \underbrace{\gamma D(\mathbb{E}_{p_x}[q_{z|x; Q}],p_z)}_{\text{distance(posterior, prior)}},
\end{eqnarray}
where $G:\mathcal{Z}\mapsto\mathcal{X}$ is the target generative model (decoder); $q_{z|x;Q}$ is the posterior of $z$ given $x$, parameterized by an encoder $Q:\mathcal{X}\mapsto\mathcal{Z}$;
$d$ represents the distance between samples; and 
$D$ is an arbitrary discrepancy between distributions. 
Accordingly, $q_{z;Q}=\mathbb{E}_{p_x}[q_{z|x; Q}]$ is the marginal distribution derived from the posterior.
Parameter $\gamma$ achieves a trade-off between reconstruction loss and the regularizer.

Instead of fixing $p_z$ as a normal distribution, we seek to learn a structured prior associated with the autoencoder:
\begin{eqnarray}\label{eq:ae+prior}
\sideset{}{}\min_{G,Q,p_z\in\mathcal{P}} \mathbb{E}_{p_x}\mathbb{E}_{q_{z|x; Q}}[d(x, G(z))]  + \gamma D(q_{z;Q},p_z).
\end{eqnarray}
where $\mathcal{P}$ is the set of valid prior distributions, which is often assumed as a set of (Gaussian) mixture models~\cite{dilokthanakul2016deep,tomczak2018vae}. 
Learning the structured prior allows one to explore the clustering structure of the data and achieve conditional generation ($i.e.$, sampling latent variables from a single component of the prior and generating samples accordingly). 

\begin{figure}[t]
    \centering
    \subfigure[Proposed RAE]{
    \includegraphics[height=4.4cm]{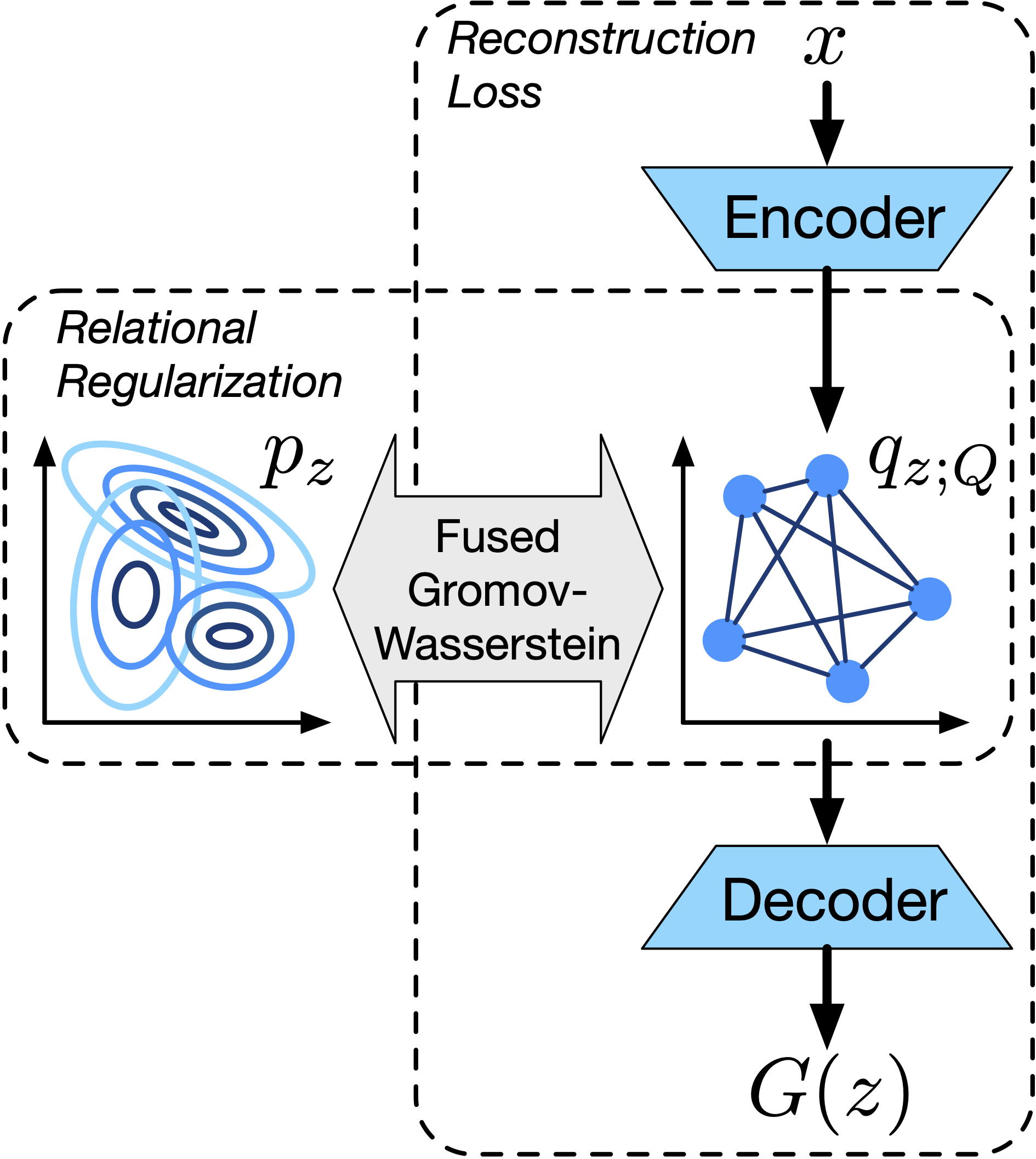}\label{fig:scheme_1}
    }
    \subfigure[Relational Co-training]{
    \includegraphics[height=4.4cm]{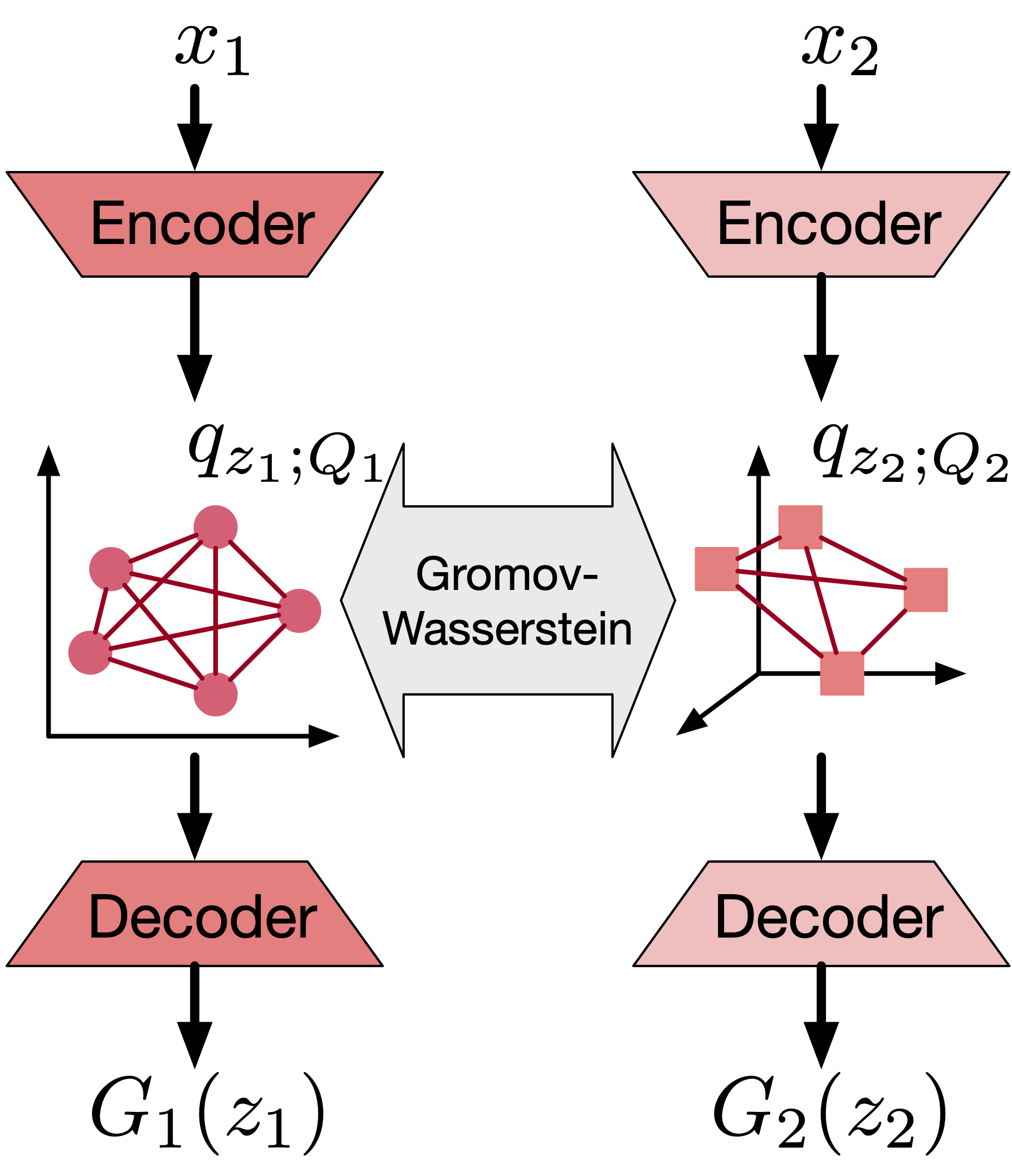}\label{fig:scheme_2}
    }
    \vspace{-10pt}
    \caption{(a) Learning a single autoencoder with relational regularization. (b) Relational co-training of the autoencoders with incomparable latent spaces.}
    \label{fig:scheme}
\end{figure}

\subsection{Relational regularization via Gromov-Wasserstein}
Jointly learning the prior and the autoencoder may lead to under-regularization in the training phase -- it is easy to fit $p_z$ to $q_{z; Q}$ without harm to the reconstruction loss. 
Solving this problem requires introduction of structural constraints when comparing these two distributions, motivating a relational regularized autoencoder (RAE). 
In particular, besides commonly-used regularizers like the KL divergence~\cite{dilokthanakul2016deep} and the Wasserstein distance~\cite{titouan2019sliced}, which achieve \emph{direct} comparisons of the distributions, we consider a \emph{relational} regularizer based on the Gromov-Wasserstein (GW) distance~\cite{memoli2011gromov} in our learning problem:
\begin{eqnarray}\label{eq:ae+prior+gw}
\begin{aligned}
&\sideset{}{_{G,Q,p_z\in\mathcal{P}}}\min~\mathbb{E}_{p_x}\mathbb{E}_{q_{z|x;Q}}[d(x, G(z))] \\
&\quad+ \gamma(\underbrace{(1-\beta) D(q_{z;Q},p_z)}_{\text{direct comparison}} + \underbrace{\beta D_{\text{gw}}(q_{z;Q}, p_z)}_{\text{relational comparison}}).
\end{aligned}
\end{eqnarray}
where $\beta\in[0, 1]$ controls the trade-off between the two regularizers, and
$D_{\text{gw}}$ is the GW distance defined as follows.
\begin{definition}\label{def:gwd}
Let $(\mathcal{X}, d_x, p_x)$ and $(\mathcal{Y}, d_y, p_y)$ be two metric measure spaces, where $(\mathcal{X}, d_x)$ is a compact metric space and $p_x$ is a probability measure on $\mathcal{X}$ (with $(\mathcal{Y}, d_y, p_y)$ defined in the same way). 
The Gromov-Wasserstein distance $D_{\text{gw}}(p_x, p_y)$ is defined as
\begin{eqnarray*}
\begin{aligned}
&\sideset{}{_{\pi\in \Pi(p_x, p_y)}}\inf\sideset{}{_{\mathcal{X}\times \mathcal{Y}}}\smallint\sideset{}{_{\mathcal{X}\times \mathcal{Y}}}\smallint r_{x, y, x', y'}\mathrm{d}\pi(x, y)\mathrm{d}\pi(x',y')\\
=&\sideset{}{_{\pi\in \Pi(p_x, p_y)}}\inf\mathbb{E}_{(x,y,x',y')\sim \pi\times\pi}[r_{x, y, x', y'}]
\end{aligned}
\end{eqnarray*}
where $r_{x, y, x', y'}=|d_x(x, x')-d_y(y,y')|^2$, and $\Pi(p_x,p_y)$ is the set of all probability measures on $\mathcal{X}\times \mathcal{Y}$ with $p_x$ and $p_y$ as marginals.
\end{definition}
The $r_{x, y, x', y'}$ defines a relational loss, comparing the difference between the pairs of samples from the two distributions. 
Accordingly, the GW distance corresponds to the minimum expectation of the relational loss. 
The optimal joint distribution $\pi^*$ corresponding to the GW distance is called the optimal transport between the two distributions. 

The $D_{\text{gw}}(q_{z; Q}, p_z)$ in (\ref{eq:ae+prior+gw}) penalizes the structural difference between the two distributions, mutually enhancing the clustering structure of the prior and that of the posterior. 
We prefer using the GW distance to implement the relational regularizer, because of the ease by which it may be combined with existing regularizers, allowing design of scalable learning algorithms.
In particular, when the direct regularizer is the Wasserstein distance~\cite{titouan2019sliced}, $i.e.$, $D=D_{\text{w}}$, we can combine it with the $D_{\text{gw}}$ and derive a new regularizer as follows:
\begin{eqnarray}\label{eq:fgwd}
\begin{aligned}
&(1-\beta) D_{\text{w}}(p_x,p_y) + \beta D_{\text{gw}}(p_x, p_y)\\
=&(1-\beta)\sideset{}{_{\pi\in \Pi(p_x, p_y)}}\inf\sideset{}{_{\mathcal{X}\times \mathcal{Y}}}\smallint c_{x,y}\mathrm{d}\pi(x,y) + \\
 &\beta\inf_{\pi\in \Pi(p_x, p_y)}\sideset{}{_{\mathcal{X}\times \mathcal{Y}}}\smallint\sideset{}{_{\mathcal{X}\times \mathcal{Y}}}\smallint r_{x, y, x', y'}\mathrm{d}\pi(x, y)\mathrm{d}\pi(x',y')\\
\leq&\inf_{\pi\in \Pi(p_x, p_y)}\Bigl((1-\beta)\underbrace{\sideset{}{_{\mathcal{X}\times \mathcal{Y}}}\smallint c_{x,y}\mathrm{d}\pi(x,y)}_{\text{Wasserstein term}} + \\
 &\beta\underbrace{\sideset{}{_{\mathcal{X}\times \mathcal{Y}}}\smallint\sideset{}{_{\mathcal{X}\times \mathcal{Y}}}\smallint r_{x, y, x', y'}\mathrm{d}\pi(x, y)\mathrm{d}\pi(x',y')}_{\text{Gromov-Wasserstein term}}\Bigr)\\
=&D_{\text{fgw}}(p_x, p_y; \beta),
\end{aligned}
\end{eqnarray}
where $c:\mathcal{X}\times \mathcal{Y}\mapsto \mathbb{R}$ is a direct loss function between the two spaces. 
The new regularizer enforces a shared optimal transport for the Wasserstein and Gromov-Wasserstein terms, corresponding to the fused Gromov-Wasserstein (FGW) distance~\cite{vayer2018fused} between the distributions. 
The rationality of this combination has two perspectives.
First, the optimal transport indicates the correspondence between two spaces~\cite{memoli2011gromov,xu2019gromov}. 
In the following section, we show that this optimal transport maps encoded data to the clusters defined by the prior. 
Enforcing shared optimal transport helps ensure the consistency of the clustering structure. 
Additionally, as shown in (\ref{eq:fgwd}), $D_{\text{fgw}}(p_x, p_y; \beta)\geq (1-\beta) D_{\text{w}}(p_x,p_y) + \beta D_{\text{gw}}(p_x, p_y)$. 
When replacing the regularizers in (\ref{eq:ae+prior+gw}) with the FGW regularizer, we minimize an upper bound of the original objective function, useful from the viewpoint of optimization.

Therefore, we learn an autoencoder with relational regularization by solving the following optimization problem:
\begin{eqnarray}\label{eq:ae+fgw}
\min_{G,Q,p_z\in\mathcal{P}}\mathbb{E}_{p_x}\mathbb{E}_{q_{z|x;Q}}[d(x, G(z))] + \gamma D_{\text{fgw}}(q_{z;Q}, p_z;\beta),
\end{eqnarray}
where the prior $p_z$ is parameterized as a Gaussian mixture model (GMM) with $K$ components $\{\mathcal{N}(\mu_k,\Sigma_k)\}_{k=1}^{K}$. 
We set the probability of each component as $\frac{1}{K}$.
The autoencoder can be either probabilistic or deterministic, leading to different learning algorithms.

\section{Learning algorithms}
\subsection{Probabilistic autoencoder with hierarchical FGW}\label{sec:hfgw}
When the autoencoder is probabilistic, for each sample $x$, the encoder $Q$ outputs the mean and the logarithmic variance of the posterior $q_{z|x; Q}$. 
Accordingly, the marginal distribution $q_{z; Q}$ becomes a GMM as well, with number of components equal to the batch size, and the regularizer corresponds to the FGW distance between two GMMs. 
Inspired by the hierarchical Wasserstein distance~\cite{chen2018optimal,yurochkin2019hierarchical,lee2019hierarchical}, we leverage the structure of the GMMs and propose a hierarchical FGW distance to replace the original regularizer. 
In particular, given two GMMs, we define the hierarchical FGW distance between them as follows.
\begin{definition}[Hierarchical FGW]\label{def:hfgw}
Let $p=\sum_{k=1}^{K}a_kp_k$ and $q=\sum_{n=1}^{N}b_n q_n$ be two GMMs. 
$\{p_k\}_{k=1}^{K}$ and $\{q_n\}_{n=1}^{N}$ are $M$-dimensional Gaussian distributions.
$a=[a_k]\in\Delta^{K-1}$, $b=[b_n]\in\Delta^{N-1}$ are the distribution of the Gaussian components. 
For $\beta\in [0, 1]$, the hierarchical fused Gromov-Wasserstein distance between these two GMMs is
\begin{eqnarray}\label{eq:hfgw1}
\begin{aligned}
&D_{\text{hfgw}}(p,q; \beta)\\
=&\sideset{}{_{_{{T}=[t_{kn}]\in \Pi({a},{b})}}}\min(1-\beta)\sideset{}{_{k,n}}\sum D_{\text{w}}(p_k, q_n)t_{kn} + \\
&\beta\sideset{}{_{k,k',n,n'}}\sum |D_{\text{w}}(p_k, p_{k'}) - D_{\text{w}}(q_{n}, q_{n'})|^2 t_{kn}t_{k'n'}\\
=&D_{\text{fgw}}(a,b;\beta).
\end{aligned}
\end{eqnarray}
\end{definition}

As shown in (\ref{eq:hfgw1}), the hierarchical FGW corresponds to an FGW distance between the distributions of the Gaussian components, whose ground distance is the Wasserstein distance between the Gassuain components. 
Figures~\ref{fig:fgw} and~\ref{fig:hfgw} further illustrate the difference between the FGW and our hierarchical FGW.
For the two GMMs, instead of computing the optimal transport between them in the sample space, the hierarchical FGW builds optimal transport between their Gaussian components. 
Additionally, we have
\begin{proposition}
$D_{\text{hfgw}}(p, q; \beta) = D_{\text{fgw}}(p, q)$ when $\Sigma=0$ for all the Gaussian components.
\end{proposition}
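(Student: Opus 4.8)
The plan is to show that once every covariance degenerates, both distances collapse to the \emph{same} finite-dimensional program over coupling matrices, so their optimal values must agree. First I would note that setting $\Sigma_k=0$ turns each Gaussian component into a Dirac mass, $p_k=\mathcal{N}(\mu_k,0)=\delta_{\mu_k}$ and $q_n=\delta_{\nu_n}$, so the two GMMs become the discrete measures $p=\sum_k a_k\delta_{\mu_k}$ and $q=\sum_n b_n\delta_{\nu_n}$ supported on their means. Next I would evaluate the pairwise Wasserstein distances entering Definition~\ref{def:hfgw}: the only coupling of two Diracs $\delta_\mu,\delta_\nu$ is $\delta_{(\mu,\nu)}$, so $D_{\text{w}}(\delta_\mu,\delta_\nu)$ equals the ground cost between $\mu$ and $\nu$. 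Hence $D_{\text{w}}(p_k,q_n)$ reduces to the direct loss $c(\mu_k,\nu_n)$, while $D_{\text{w}}(p_k,p_{k'})$ and $D_{\text{w}}(q_n,q_{n'})$ reduce to the sample-space distances $d_x(\mu_k,\mu_{k'})$ and $d_y(\nu_n,\nu_{n'})$; this is just the closed-form Wasserstein distance between Gaussians specialized to zero covariance.

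I would then rewrite the sample-space $D_{\text{fgw}}(p,q;\beta)$ from (\ref{eq:fgwd}) for the discrete $p,q$. The key measure-theoretic fact is that any coupling $\pi\in\Pi(p,q)$ of two discrete measures is itself supported on the product of supports $\{\mu_k\}\times\{\nu_n\}$, so it is determined by a matrix $[\pi_{kn}]\in\Pi(a,b)$, and the infimum over the infinite-dimensional set $\Pi(p,q)$ becomes a minimum over this finite set. Substituting the Dirac supports into the integrals of (\ref{eq:fgwd}) turns the Wasserstein term into $\sum_{k,n}c(\mu_k,\nu_n)\pi_{kn}$ and the Gromov-Wasserstein term into $\sum_{k,k',n,n'}|d_x(\mu_k,\mu_{k'})-d_y(\nu_n,\nu_{n'})|^2\pi_{kn}\pi_{k'n'}$. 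Comparing term by term with (\ref{eq:hfgw1}): by the first paragraph the integrand-level costs coincide exactly with the component-level Wasserstein distances used in $D_{\text{hfgw}}$, and the feasible set is $\Pi(a,b)$ in both cases. Since the two problems minimize the same objective over the same domain, they attain the same value, giving $D_{\text{hfgw}}(p,q;\beta)=D_{\text{fgw}}(p,q;\beta)$.

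The main obstacle is the bookkeeping in the reduction step, namely justifying that an optimal transport for the continuous FGW problem may be taken supported on the finite grid of means, so that restricting to coupling matrices loses nothing. I expect this to follow from standard properties of couplings of discrete measures rather than any delicate argument, and the matching of ground costs to be routine once the zero-covariance Wasserstein formula is invoked. The conceptual content is simply that $D_{\text{hfgw}}$ was defined to mirror $D_{\text{fgw}}$ one level up in the hierarchy, so when the components degenerate to points the two levels collapse onto each other.
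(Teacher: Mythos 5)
Your argument is correct: with $\Sigma=0$ every component degenerates to a Dirac at its mean, the closed-form Wasserstein distances between components reduce to the (squared-Euclidean) ground costs between means, couplings of the resulting discrete measures are exactly the matrices in $\Pi(a,b)$, and the two minimization problems become literally identical. The paper states this proposition without supplying a proof (the supplement only proves Theorem~\ref{thm:solution}), so there is nothing to compare against, but yours is plainly the intended argument; the only convention to keep in mind is that the sample-space cost in $D_{\text{fgw}}$ must be taken as the squared Euclidean distance so that it matches $D_{\text{w}}$ of Definition~\ref{def:wg} evaluated at Diracs, which is consistent with the paper's usage throughout.
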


\begin{figure}[t]
    \centering
    \subfigure[\tiny{FGW}]{
    \includegraphics[height=1.8cm]{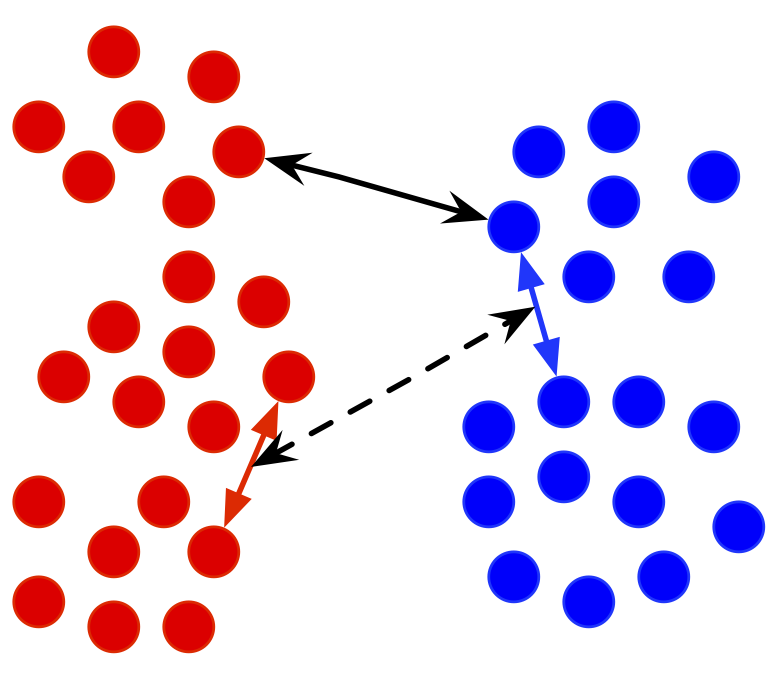}\label{fig:fgw}
    }\quad
    \subfigure[\tiny{Hierarchical FGW}]{
    \includegraphics[height=1.8cm]{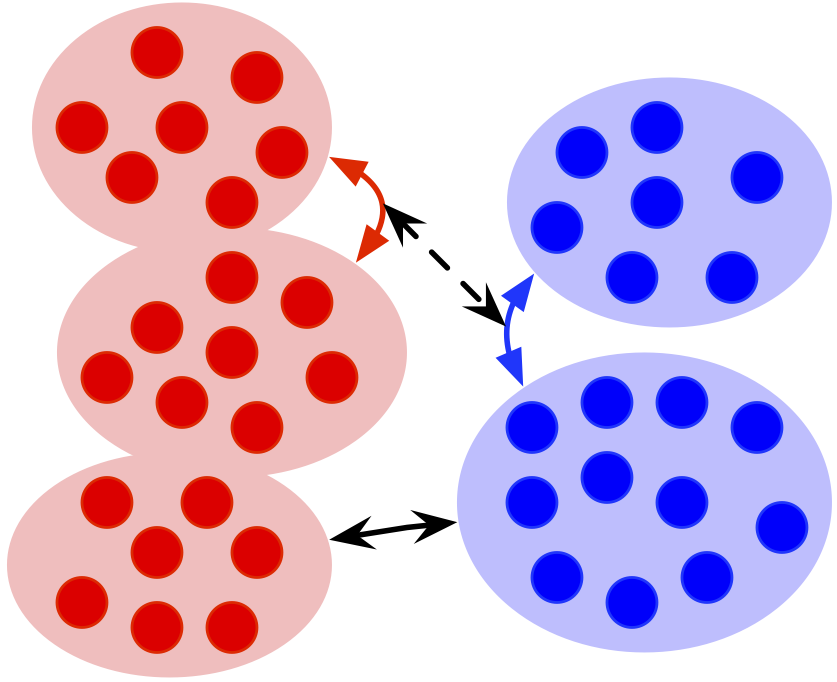}\label{fig:hfgw}
    }
    \subfigure[\tiny{Sliced FGW}]{
    \includegraphics[height=1.8cm]{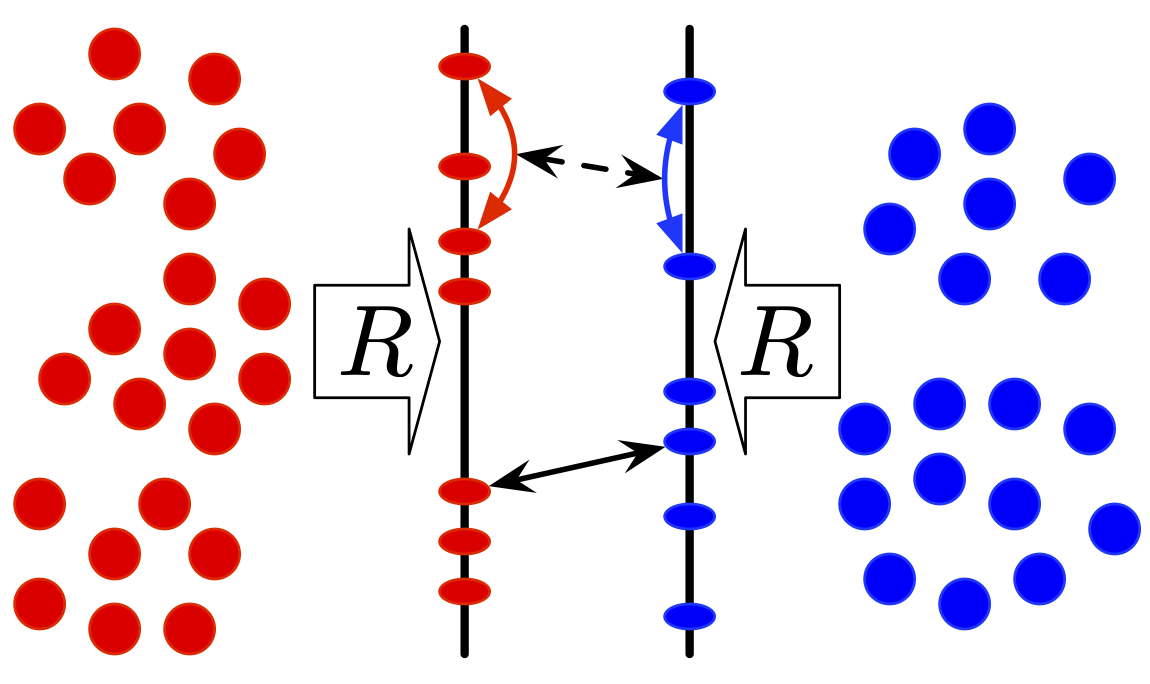}\label{fig:sfgw}
    }
    \vspace{-10pt}
    \caption{Illustrations of FGW, hierarchical FGW, and sliced FGW. The black solid arrow represents the distance between samples while the black dotted arrow represents the relational loss between sample pairs. In (a, c), the red and blue arrows represent the Euclidean distance between samples. In (b), the red and blue arrows represent the Wasserstein distance between Gaussian components.}
    \label{fig:cmp_fgw}
\end{figure}

Replacing the FGW with the hierarchical FGW, we convert an optimization problem of a continuous distribution (the $\pi$ in (\ref{eq:fgwd})) to a much simpler optimization problem of a discrete distribution (the $T$ in (\ref{eq:hfgw1})). 
Rewriting (\ref{eq:hfgw1}) in matrix form, we compute the hierarchical FGW distance via solving the following non-convex optimization problem:
\begin{eqnarray}\label{eq:hfgw2}
\begin{aligned}
D_{\text{hfgw}}(p,q; \beta) = \sideset{}{_{T\in \Pi(a,b)}}\min\langle D - 2\beta D_p T D_q^{\top}, T \rangle,
\end{aligned}
\end{eqnarray}
where $\langle\cdot,\cdot\rangle$ indicates the inner product between matrices,  $\Pi(a,b)=\{T\geq 0|T1_N=a,T^{\top}1_K=b\}$, and $1_N$ is an $N$-dimensional all-one vector. 
The optimal transport matrix $T$ is a joint distribution of the Gaussian components in the two GMMs, where  
$D_p =[D_{\text{w}}(p_k, p_{k'})]\in\mathbb{R}^{K\times K}$ and $D_q=[D_{\text{w}}(q_n, q_{n'})]\in\mathbb{R}^{N\times N}$, whose elements are the Wasserstein distances between Gaussian components, and 
\begin{eqnarray}\label{eq:D}
\begin{aligned}
D = (1-\beta)D_{pq} + \frac{\beta}{K}(D_p\odot D_p)
+\frac{\beta}{N}({D}_q\odot{D}_q)^{\top}
\end{aligned}
\end{eqnarray}
with $D_{pq} =[D_{\text{w}}(p_k, q_n)]\in\mathbb{R}^{K\times N}$ and $\odot$ represents the Hadamard product. 
The Wasserstein distance between Gaussian distributions has a closed form:
\begin{definition}\label{def:wg}
Let $p=\mathcal{N}(u_p, {\Sigma}_p)$ and $q=\mathcal{N}({u}_q, {\Sigma}_q)$ be two $N$-dimensional Gaussian distributions, where ${u}$ and ${\Sigma}$ represent the mean and the covariance matrix, respectively. 
The Wasserstein distance $D_{\text{w}}(p, q)$ is
\begin{eqnarray}\label{eq:wg}
\|{u}_p - {u}_q\|_2^2 + \text{trace}({\Sigma}_p+{\Sigma}_q - 2({\Sigma}_p^{\frac{1}{2}}{\Sigma}_q{\Sigma}_p^{\frac{1}{2}})^{\frac{1}{2}}).
\end{eqnarray}
\end{definition}
When the covariance matrices are diagonal, $i.e.$, ${\Sigma}=\text{diag}({\sigma}^2)$, where ${\sigma}=[\sigma_n]\in\mathbb{R}^N$ is the standard deviation, (\ref{eq:wg}) can be rewritten as
\begin{eqnarray}\label{eq:wg2}
D_{\text{w}}(p, q) = \|{u}_p - {u}_q\|_2^2 + \|{\sigma}_p - {\sigma}_q\|_2^2.
\end{eqnarray}
We solve (\ref{eq:hfgw2}) via the proximal gradient method in~\cite{xu2019gromov}, with further details in the Supplementary Material.

The hierarchical FGW is a good substitute for the original FGW, imposing structural constraints while being more efficient computationally.
Plugging the hierarchical FGW and its computation into (\ref{eq:ae+fgw}), we apply Algorithm~\ref{alg:ae+hfgw} to learn the proposed RAE. 
Note that taking advantage of the Envelope Theorem~\cite{afriat1971theory}, we treat the optimal transport matrix as constant when applying backpropagation, reducing computational complexity significantly.
The optimal transport matrix maps the components in the $q_{z; Q}$ to those in the $p_z$. 
Because the components in the $q_{z; Q}$ correspond to samples and the components in the $p_z$ correspond to clusters, this matrix indicates the clustering structure of the samples.

\begin{algorithm}[t]
\small{
	\caption{Learning RAE with hierarchical FGW}
	\label{alg:ae+hfgw}
	\begin{algorithmic}[1]
	    \STATE \textbf{Input} Samples in $\mathcal{X}$
	    \STATE \textbf{Output} The autoencoder $\{G, Q\}$ and the prior with $K$ Gaussian components $p_z=\frac{1}{K}\sum_k\mathcal{N}(z; \mu_k,\text{diag}(\sigma_k))$.
	    \STATE \textbf{for} each epoch
	    \STATE \quad\textbf{for} each batch of samples $\{x_n\}_{n=1}^{N}\subset \mathcal{X}$
	    \STATE \quad\quad $\mu_n, \log(\sigma_n^2)=Q(x_n)$ for $n=1,...,N$.
	    \STATE \quad\quad Reparameterize $z_n=\mu_n + \epsilon \sigma_n$, where $\epsilon\sim \mathcal{N}(0, I)$.
	    \STATE \quad\quad $q_{z;Q}=\frac{1}{N}\sum_n \mathcal{N}(z;\mu_n,\text{diag}(\sigma_n))$.
	    \STATE \quad\quad Calculate $D_p$, $D_q$ via (\ref{eq:wg2}), calculate $D$ via (\ref{eq:D})
	    \STATE \quad\quad Obtain optimal transport $T^*$ via solving (\ref{eq:hfgw2}).
	    \STATE \quad\quad $L_{\text{reconstruction}}=\sum_n d(x_n, G(z_n))$.
	    \STATE \quad\quad $D_{\text{hfgw}}(q_{z;Q}, p_z;\beta)=\langle D - 2\beta D_p T^* D_q^{\top}, T^* \rangle$.
	    \STATE \quad\quad Update $G, Q, p_z = \text{Adam}(L_{\text{reconstruction}} + \gamma D_{\text{hfgw}})$.
	\end{algorithmic}
}
\end{algorithm}

\subsection{Deterministic autoencoder with sliced FGW}
When the autoencoder is deterministic, its encoder outputs the latent codes corresponding to observed samples. 
These latent codes can be viewed as the samples of $q_{z; Q}$. 
For the prior $p_z$, we can also generate samples with the help of the reparameterization trick.
In such a situation, we estimate the FGW distance in (\ref{eq:ae+fgw}) based on the samples of the two distributions. 
For arbitrary two metric measure spaces $(\mathcal{X}, d_x, p_x)$ and $(\mathcal{Y}, d_y, p_y)$, the empirical FGW between their samples $\{x_i\}_{i=1}^N$ and $\{y_j\}_{j=1}^{N}$ is
\begin{eqnarray}\label{eq:efgw}
\begin{aligned}
&\widehat{D}_{\text{fgw}}(p_x, p_y;\beta) \\
=&\sideset{}{_{T\in\Pi(\frac{1}{N}1_N, \frac{1}{N}1_N)}}\min(1-\beta)\sideset{}{_{i,j=1}^{N}}\sum d(x_{i}, y_{j}) t_{ij} + \\
&\quad\quad\beta\sideset{}{_{i,i',j,j'}}\sum|d(x_{i}, x_{i'})-d(y_{j}, y_{j'})|^2t_{ij}t_{i'j'}.
\end{aligned}
\end{eqnarray}

We can rewrite this empirical FGW in matrix form as (\ref{eq:hfgw2}), and solve it by the proximal gradient method discussed above. 
When the samples are in 1D space and the metric is the Euclidean distance, however, according to the sliced GW distance in~\cite{titouan2019sliced} and the sliced Wasserstein distance in~\cite{kolouri2016sliced}, the optimal transport matrix corresponds to a permutation matrix and the $\widehat{D}_{\text{fgw}}(p_x, p_y;\beta)$ can be rewritten as:
\begin{eqnarray}\label{eq:efgw1D}
\begin{aligned}
&\widehat{D}_{\text{fgw}}(p_x, p_y;\beta)
=\min_{\sigma\in \mathcal{P}_N}\frac{1-\beta}{N}\sideset{}{_{i=1}^{N}}\sum(x_{i} - y_{\sigma(i)})^2+\\
&\quad\quad\quad\frac{\beta}{N}\sideset{}{_{i,j=1}^{N}}\sum((x_{i} - x_{j})^2 - (y_{\sigma(i)} - y_{\sigma(j)})^2)^2,
\end{aligned}
\end{eqnarray}
where $\mathcal{P}_N$ is the set of all permutations of $\{1,...,N\}$. 
Without loss of generality, we assume the 1D samples are sorted, $i.e.$, $x_1\leq ...\leq x_N$ and $y_1\leq ...\leq y_N$, and demonstrate that the solution of (\ref{eq:efgw1D}) is characterized by the following theorem.
\begin{theorem}\label{thm:solution}
For ${x},{y}\in\{{x}=[x_i],{y}=[y_j]\in\mathbb{R}^{N}\times\mathbb{R}^N|x_1\leq...\leq x_N, y_1\leq ...\leq y_N\}$, we denote their zero-mean translations as $x'$ and $y'$, respectively. 
The solution of (\ref{eq:efgw1D}) satisfies: 
1) When $(\sum_i x_i'y_i' +\frac{1-\beta}{8\beta})^2\geq (\sum_i x_i' y_{n+1-i}' +\frac{1-\beta}{8\beta})^2$, the solution is the identity permutation $\sigma(i)=i$. 
2) Otherwise, the solution is the anti-identity permutation $\sigma(i)=n+1-i$.
\end{theorem}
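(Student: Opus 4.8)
The plan is to collapse the combinatorial minimization over $\mathcal{P}_N$ in~(\ref{eq:efgw1D}) to a one-dimensional comparison by showing that the objective depends on $\sigma$ only through the centred correlation $s(\sigma):=\sum_{i=1}^N x_i' y_{\sigma(i)}'$. First I would expand each squared term and discard every summand invariant under $\sigma$. For the Wasserstein part, $\sum_i (x_i-y_{\sigma(i)})^2=\sum_i x_i^2+\sum_i y_i^2-2\sum_i x_i y_{\sigma(i)}$, so only $-2\sum_i x_i y_{\sigma(i)}$ survives. Translating both samples to zero mean leaves all pairwise differences untouched, and the induced cross term $2(\bar x-\bar y)\sum_i(x_i'-y_{\sigma(i)}')$ vanishes because $\sum_i x_i'=\sum_i y_{\sigma(i)}'=0$; hence I may work throughout with the centred vectors $x',y'$, and the Wasserstein contribution equals $-\tfrac{2(1-\beta)}{N}s(\sigma)$ up to a constant.

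The substance of the proof is the relational term. Writing $\sum_{i,j}\big((x_i-x_j)^2-(y_{\sigma(i)}-y_{\sigma(j)})^2\big)^2$ as $\sum_{i,j}(x_i-x_j)^4+\sum_{i,j}(y_i-y_j)^4-2\sum_{i,j}(x_i-x_j)^2(y_{\sigma(i)}-y_{\sigma(j)})^2$, the first two sums are $\sigma$-independent (the second since $\sigma$ is a bijection), leaving only $\Phi(\sigma):=\sum_{i,j}(x_i-x_j)^2(y_{\sigma(i)}-y_{\sigma(j)})^2$. I would expand $\Phi$ into elementary moments of the matched pairs $(x_i',y_{\sigma(i)}')$ and use the centring identities $\sum_i x_i'=\sum_i y_{\sigma(i)}'=0$ to annihilate every term carrying a bare first moment. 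The key reduction I need is that, after dropping the remaining $\sigma$-independent second moments, $\Phi(\sigma)$ is controlled by $s(\sigma)^2$; granting this, the whole objective in~(\ref{eq:efgw1D}) becomes, up to an additive constant, the single concave quadratic $\phi(s(\sigma))=-\tfrac{8\beta}{N}s(\sigma)^2-\tfrac{2(1-\beta)}{N}s(\sigma)$.

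Once the objective is written as $\phi(s(\sigma))$, the dichotomy follows cleanly. By the rearrangement inequality, as $\sigma$ ranges over $\mathcal{P}_N$ the correlation $s(\sigma)$ attains its maximum at the identity $\sigma(i)=i$ and its minimum at the anti-identity $\sigma(i)=N+1-i$ (both $x'$ and $y'$ are sorted ascending), with all other permutations lying in between. Since $\phi$ is concave it lies above the chord joining its endpoint values, so $\phi(s(\sigma))\ge\min\{\phi(s_1),\phi(s_2)\}$ for every $\sigma$, where $s_1=\sum_i x_i'y_i'$ and $s_2=\sum_i x_i'y_{N+1-i}'$; the minimiser is therefore the identity or the anti-identity. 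Comparing the two endpoints, $\phi(s_1)\le\phi(s_2)$ iff $(s_1^2-s_2^2)+2c(s_1-s_2)\ge0$ with $c=\tfrac{1-\beta}{8\beta}$, i.e.\ iff $(s_1+c)^2\ge(s_2+c)^2$ after completing the square, which is precisely the stated condition for $\sigma(i)=i$.

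I expect the reduction of $\Phi(\sigma)$ in the second step to be the real obstacle rather than the final completing-the-square computation. The naive expansion of $\Phi$ also produces the quartic cross-moment $\sum_i x_i'^2 y_{\sigma(i)}'^2$, which is a genuine rearrangement of $(x_i'^2)$ and $(y_i'^2)$ and is not obviously a function of $s(\sigma)$; the crux is to show that, at least across the identity and anti-identity configurations that the concavity argument singles out, this term is absorbed so that the comparison is governed exactly by $\phi$ and the constant $c=\tfrac{1-\beta}{8\beta}$. Pinning down the precise contribution of this quartic term—and thereby confirming that it does not perturb the threshold—is where the careful bookkeeping will be concentrated.
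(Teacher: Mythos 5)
Your overall route is the same as the paper's (Theorems~\ref{thm:trans1} and~\ref{thm:solution1}): centre the samples, expand both squared terms, discard the $\sigma$-independent summands, reduce the objective to a quadratic in the correlation $s(\sigma)=\sum_i x_i'y_{\sigma(i)}'$, and finish with the rearrangement inequality plus completing the square; your constant $c=\tfrac{1-\beta}{8\beta}$ is exactly the paper's $\alpha/8$ with $\alpha=\tfrac{1-\beta}{\beta}$, and your concavity/chord endgame is a clean (and correct) way to restrict attention to the two extreme permutations. However, the step you defer with ``granting this'' is not bookkeeping --- it is the entire content of the theorem. After centring, the $\sigma$-dependent part of the objective in~(\ref{eq:efgw1D}) is exactly $-4\beta\sum_i x_i'^2 y_{\sigma(i)}'^2-\tfrac{8\beta}{N}s(\sigma)^2-\tfrac{2(1-\beta)}{N}s(\sigma)$ up to $\sigma$-independent constants, and the quartic cross-moment $\sum_i x_i'^2y_{\sigma(i)}'^2$ is genuinely not a function of $s(\sigma)$: sorting $x'$ does not sort $x'^2$, so this term is a free rearrangement that your concave quadratic $\phi$ never sees. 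As written, your argument proves the dichotomy only for the truncated objective $\phi(s(\sigma))$, not for~(\ref{eq:efgw1D}).

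For comparison, the paper disposes of this term in Theorem~\ref{thm:solution1} by translating $x\mapsto x+b^*{1}_N$ with $b^*$ chosen so that $g(x,y,b^*)=\sum_i f(x_i+b^*,y_{\sigma(i)})=0$, after which only $\tfrac{\alpha}{2}s+2s^2$ survives and the completing-the-square step goes through. Note, though, that $(b^*)^2=\sum_i x_i'^2y_{\sigma(i)}'^2/\sum_i y_i'^2$ depends on $\sigma$, while the translation-invariance lemma (Theorem~\ref{thm:trans1}) only licenses shifts that are fixed across all permutations; applying a $\sigma$-dependent shift amounts to subtracting the $\sigma$-dependent quantity $N\sum_i x_i'^2y_{\sigma(i)}'^2$ from $W$. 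So you have put your finger precisely on the step where the proof's heavy lifting happens, but you have not supplied the missing argument, and the device the paper uses at that same point deserves the same scrutiny you are applying to your own sketch. To close the gap you would need either a direct proof that the full $\sigma$-dependent expression above (quartic cross-moment included) is still extremized at the identity or anti-identity, or a rigorous justification of the cancellation.
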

The proof of Theorem~\ref{thm:solution} is provided in the Supplementary Material. 
Consequently, for the samples in 1D space, we can calculate the empirical FGW distance via permuting the samples. 
To leverage this property for high-dimensional samples, we propose the following sliced FGW distance:
\begin{definition}[Sliced FGW]
Let $\mathcal{S}^{M-1}=\{\theta\in\mathbb{R}^M | \|\theta\|_2=1\}$ be the $M$-dimensional hypersphere and $u_{\mathcal{S}^{M-1}}$ the uniform measure on $\mathcal{S}^{M-1}$. 
For each $\theta$, we denote the projection on $\theta$ as $R_\theta$, where $R_{\theta}(x)=\langle x, \theta\rangle$. 
For $(\mathcal{X}, d_x, p_x)$ and $(\mathcal{Y}, d_y, p_y)$, we define their sliced fused Gromov-Wasserstein distance as
\begin{eqnarray*}
D_{\text{sfgw}}(p_x, p_y; \beta) = \mathbb{E}_{\theta\sim u_{\mathcal{S}^{M-1}}}[D_{\text{fgw}}(R_{\theta}\#p_x, R_{\theta}\#p_y;\beta])],
\end{eqnarray*}
where $R_{\theta}\#p$ represents the distribution after the projection, and $D_{\text{fgw}}(R_{\theta}\#p_x, R_{\theta}\#p_y)$ is the FGW distance between $(R_{\theta}(\mathcal{X}), d_{R_{\theta}(x)}, R_{\theta}\#p_x)$ and $(R_{\theta}(\mathcal{Y}), d_{R_{\theta}(y)}, R_{\theta}\#p_y)$.
\end{definition}

According to this definition, the sliced FGW projects the original metric measure spaces into 1D spaces, and calculates the FGW distance between these spaces. 
The sliced FGW corresponds to the expectation of the FGW distances under different projections.
We can approximate the sliced FGW distance based on the samples of the distributions as well.
In particular, given $\{x_i\}_{i=1}^{N}$ from $\mathcal{X}$, $\{y_i\}_{i=1}^{N}$ from $\mathcal{Y}$, and $L$ projections $\{R_{\theta_l}\}_{l=1}^{L}$, the empirical sliced FGW is
\begin{eqnarray}\label{eq:esfgw}
\begin{aligned}
&\widehat{D}_{\text{sfgw}}(p_x, p_y; \beta) \\
=&\frac{1}{L}\sideset{}{_{l=1}^{L}}\sum\widehat{D}_{\text{fgw}}(R_{\theta_l}\#p_x, R_{\theta_l}\#p_y;\beta)\\
=&\frac{1}{L}\sideset{}{_{l=1}^{L}}\sum\min_{\sigma\in \mathcal{P}_N}\frac{1-\beta}{N}\sideset{}{_{i=1}^{N}}\sum(x_{i,\theta_l} - y_{\sigma(i),\theta_l})^2+\\
&\frac{\beta}{N}\sideset{}{_{i,j=1}^{N}}\sum((x_{i,\theta_l} - x_{j,\theta_l})^2 - (y_{\sigma(i),\theta_l} - y_{\sigma(j),\theta_l})^2)^2,
\end{aligned}
\end{eqnarray}
where $x_{i,\theta_l}=R_{\theta_l}(x_i)$ represents the projected sample. 
Figure~\ref{fig:sfgw} further illustrates the principle of the sliced FGW distance. 
Replacing the empirical FGW with the empirical sliced FGW, we learn the relational regularized autoencoder via Algorithm~\ref{alg:ae+sfgw}.
\begin{algorithm}[t]
\small{
    \caption{Learning RAE with sliced FGW}	
    \label{alg:ae+sfgw}
	\begin{algorithmic}[1]
	    \STATE \textbf{Input} Samples in $\mathcal{X}$
	    \STATE \textbf{Output} The autoencoder $\{G, Q\}$ and the prior with $K$ Gaussian components $p_z=\frac{1}{K}\sum_k\mathcal{N}(z; \mu_k,\text{diag}(\sigma_k))$.
	    \STATE \textbf{for} each epoch
	    \STATE \quad\textbf{for} each batch of samples $\{x_n\}_{n=1}^{N}\subset \mathcal{X}$
	    \STATE \quad\quad\textbf{for} $n=1,...,N$
	    \STATE \quad\quad\quad Samples of $q_{z;Q}$: $z_n=Q(x_n)$.
	    \STATE \quad\quad\quad Samples of $p_z$: $k\sim \text{Categorical}(K)$, $z_{n}'=\mu_k + \epsilon\sigma_k$. 
	    \STATE \quad\quad \textbf{for} $l=1,...,L$
	    \STATE \quad\quad\quad Create a random projection $\theta_l\in\mathcal{S}^{M-1}$
	    \STATE \quad\quad\quad $z_{n,\theta_l}=R_{\theta_l}z_n$, $z_{n,\theta_l}'=R_{\theta_l}z_n'$ for $n=1,...,N$.
	    \STATE \quad\quad\quad Sort $\{z_{n,\theta_l}\}_{n=1}^{N}$ and $\{z_{n,\theta_l}'\}_{n=1}^{N}$, respectively.
	    \STATE \quad\quad\quad Calculate $\widehat{D}_{\text{fgw}}(R_{\theta_l}\#q_{z;Q}, R_{\theta_l}\#p_z;\beta])$ based on \\ \quad\quad\quad\quad sorted samples and Theorem~\ref{thm:solution}.
	    \STATE \quad\quad $L_{\text{reconstruction}}=\sum_n d(x_n, G(z_n))$.
	    \STATE \quad\quad Calculate $\widehat{D}_{\text{sfgw}}(q_{z;Q}, p_z;\beta)$ via (\ref{eq:esfgw}).
	    \STATE \quad\quad Update $G, Q, p_z = \text{Adam}(L_{\text{reconstruction}} + \gamma \widehat{D}_{\text{sfgw}})$.
	\end{algorithmic}
}
\end{algorithm}

\subsection{Comparisons on computational complexity}
Compared with calculating empirical FGW distance directly, our hierarchical FGW and sliced FGW have much lower computational complexity. 
Following notation in the previous two subsections, we denote the batch size as $N$, the number of Gaussian components in the prior as $K$, and the dimension of the latent code as $M$. 
If we apply the proximal gradient method in~\cite{xu2019gromov} to calculate the empirical FGW directly, the computational complexity is $\mathcal{O}(JN^3)$, where $J$ is the number of Sinkhorn iterations used in the algorithm. 
For our hierarchical FGW, we apply the proximal gradient method to a problem with a much smaller size ($i.e.$, solving (\ref{eq:hfgw2})) because of $K\ll N$ in general. 
Accordingly, the computational complexity becomes $\mathcal{O}(JN^2K)$.
For our sliced FGW, we apply $L$ random projections to project the latent codes to 1D spaces, whose complexity is $\mathcal{O}(LMN)$. 
For each pair of projected samples, we sort them with $\mathcal{O}(N\log N)$ operations and compute (\ref{eq:efgw1D}) with $O(N^2)$ operations.
Overall, the computational complexity of our sliced FGW is $\mathcal{O}(LN(M+\log N + N))$. 
Because $J\approx L$ in general, the computational complexity of the sliced FGW is comparable to that of the hierarchical FGW.

\section{Relational Co-Training of Autoencoders}
Besides learning a single autoencoder, we can apply our relational regularization to learn multiple autoencoders. 
As shown in Figure~\ref{fig:scheme_2}, when learning two autoencoders we can penalize the GW distance between their posterior distributions, and accordingly the learning problem becomes:
\begin{eqnarray}\label{eq:cotrain}
\begin{aligned}
&\min_{\{G_s,Q_s\}_{s=1}^{2}}\sideset{}{_{s=1}^{2}}\sum\Bigl(\mathbb{E}_{p_{x_s}}\mathbb{E}_{q_{z_s|x_s; Q_s}}[d(x_s, G_s(z_s))]+\\
&\gamma (1-\tau)D(q_{z_s;Q_s},p_{z_s})\Bigr) +2\gamma\tau D_{\text{gw}}(q_{z_1;Q_1}, q_{z_2;Q_2}).
\end{aligned}
\end{eqnarray}
The regularizer $D$ quantifies the discrepancy between the marginalized posterior and the prior, the prior distributions can be predefined or learnable parameters, $\tau\in[0, 1]$ achieves a trade-off between $D$ and the relational regularizer $D_{\text{gw}}$, and  
$\gamma$ controls the overall significance of these two kinds of regularizers. 
When learning probabilistic autoencoders, we set $D$ to the hierarchical Wasserstein distance between GMMs~\cite{chen2018optimal} and approximate the relational regularizer by a hierarchical GW distance, equivalent to the hierarchical FGW with $\beta=1$.
When learning deterministic autoencoders, we set $D$ to the sliced Wasserstein distance used in~\cite{kolouri2018sliced} and approximate the relational regularizer via the sliced GW~\cite{titouan2019sliced} (the sliced FGW with $\beta=1$).

The main advantage of the proposed relational regularization is that it is applicable for co-training heterogeneous autoencoders. 
As shown in (\ref{eq:cotrain}), the data used to train the autoencoders can come from different domains and with different data distributions. 
To fully capture the information in each domain, sometimes the autoencoders have heterogeneous architectures, and the corresponding latent codes are in incomparable spaces, $e.g.$, with different dimensions. 
Taking the GW distance as the relational regularizer, we impose a constraint on the posterior distributions defined in different latent spaces, encouraging structural similarity between them. 
This regularizer helps avoid over-regularization because it does not enforce a shared latent distribution across different domains.
Moreover, the proposed regularizer is imposed on the posterior distributions. 
In other words, it does not require samples from different domains to be paired. 

According to the analysis above, our relational co-training strategy has potential for multi-view learning, especially in the scenario with unpaired samples. 
In particular, given the data in different domains, we first learn their latent codes via solving (\ref{eq:cotrain}). 
Concatenating the latent codes in different domains, we can use the concatenation of the latent codes as the features for downstream learning tasks.

\section{Related Work}
\textbf{Gromov-Wasserstein distance}
The GW distance has been used as a metric for shape registration~\cite{memoli2009spectral,memoli2011gromov}, vocabulary set alignment~\cite{alvarez2018gromov}, and graph matching~\cite{chowdhury2018gromov,vayer2018optimal,xu2019gromov}.
The work in~\cite{peyre2016gromov} proposes an entropy-regularized GW distance and calculates it based on Sinkhorn iterations~\cite{cuturi2013sinkhorn}. 
Following this direction, the work in~\cite{xu2019gromov} replaces the entropy regularizer with a Bregman proximal term. 
The work in~\cite{xu2019gwf} proposes an ADMM-based method to calculate the GW distance. 
To further reduce the computational complexity, the recursive GW distance~\cite{xu2019scalable} and the sliced GW distance~\cite{titouan2019sliced} have been proposed. 
For generative models, the work in~\cite{bunne2019learning} leverages the GW distance to learn coupled adversarial generative networks. 
However, none of the existing autoencoders consider using the GW distance as their regularizer.

\textbf{Autoencoders}
The principle of the autoencoder is to minimize the discrepancy between the data and model distributions. 
The common choices of the discrepancy include the KL divergence~\cite{kingma2013auto,dilokthanakul2016deep,tomczak2018vae,takahashi2019variational} and the Wasserstein distance~\cite{tolstikhin2018wasserstein,kolouri2018sliced}, which lead to different learning algorithms. 
Our relational regularized autoencoder can be viewed as a new member of the Wasserstein autoencoder family.
Compared with the MMD and the GAN loss used in WAE~\cite{tolstikhin2018wasserstein}, and the sliced Wasserstein distance used in~\cite{kolouri2018sliced}, our FGW-based regularizer imposes relational constraints and allows the learning of an autoencoder with structured prior distribution. 

\begin{table}[t]
\begin{small}
    \caption{Comparisons for different autoencoders}\label{tab:mix}
    \centering
    \begin{tabular}{
    @{\hspace{3pt}}c@{\hspace{3pt}}|
    @{\hspace{3pt}}c@{\hspace{3pt}}
    @{\hspace{3pt}}c@{\hspace{3pt}}
    @{\hspace{3pt}}c@{\hspace{3pt}}}
    \hline\hline
    Method    
    &$Q:\mathcal{X}\mapsto\mathcal{Z}$ 
    &$p_z$ 
    &$D(q_{z;Q},p_z)$\\ \hline
    VAE
    &Probabilistic
    &$\mathcal{N}(z;0, I)$
    &KL\\
    WAE
    &Deterministic
    &$\mathcal{N}(z;0, I)$
    &MMD\\
    SWAE
    &Deterministic
    &$\mathcal{N}(z;0, I)$
    &$D_{\text{w}}$\\
    GMVAE     
    &Probabilistic          
    &$\frac{1}{K}\sum_{k}\mathcal{N}(z;{u}_k,{\Sigma}_k)$ 
    &KL \\
    VampPrior 
    &Probabilistic          
    &$\frac{1}{K}\sum_{k}\mathcal{N}(z;Q(x_k))$                 
    &KL \\ \hline
    \multirow{2}{*}{\textbf{Our RAE}} 
    &Probabilistic  
    &\multirow{2}{*}{$\frac{1}{K}\sum_{k}\mathcal{N}(z;{u}_k,{\Sigma}_k)$}
    &$D_{\text{hfgw}}$\\
    &Deterministic 
    &
    &$\widehat{D}_{\text{sfgw}}$\\
    \hline\hline
    \end{tabular}
\end{small}
\end{table}

\textbf{Co-training methods}
For data in different domains, a commonly-used co-training strategy maps them to a shared latent space, and encourages  similarity between their latent codes.
This co-training strategy suppresses the risk of overfitting for each model and enhances their generalization power, which achieves encouraging performance in multi-view learning~\cite{kumar2011co,chen2017multi,sindhwani2005co}. 
However, this strategy assumes that the latent codes yield the same distribution, which may lead to over regularization.
Additionally, it often requires well-aligned data, $i.e.$, the samples in different domains are paired. 
Our relational co-training strategy provides a potential solution to relax these restrictions for practical applications. 

\section{Experiments}
\subsection{Image generation}
We test our relational regularized autoencoder (\textbf{RAE}) for image-generation tasks and compare it with the following alternatives: the variational autoencoder (VAE)~\cite{kingma2013auto}, the Wasserstein autoencoder (WAE)~\cite{tolstikhin2018wasserstein}, the sliced Wasserstein autoencoder (SWAE)~\cite{kolouri2018sliced}, the Gaussian mixture VAE (GMVAE)~\cite{dilokthanakul2016deep}, and the VampPrior~\cite{tomczak2018vae}. 
Table~\ref{tab:mix} lists the main differences between our RAE and these baselines. 

We test the methods on the MNIST~\cite{lecun1998gradient} and CelebA datasets~\cite{liu2015faceattributes}. 
For fairness, all the autoencoders have the same DCGAN-style architecture~\cite{radford2015unsupervised} and are learned with the same hyperparameters: the learning rate is $0.001$; the optimizer is Adam~\cite{kingma2014adam} with $\beta_1=0.5$ and $\beta_2=0.999$; the number of epochs is $50$; the batch size is $100$; the weight of regularizer $\gamma$ is $1$; the dimension of latent code is $8$ for MNIST and $64$ for CelebA. 
For the autoencoders with structured priors, we set the number of the Gaussian components to be $10$ and initialize their prior distributions at random. 
For the proposed RAE, the hyperparameter $\beta$ is set to be $0.1$, which empirically makes the Wasserstein term and the GW term in our FGW distance have the same magnitude. 
The probabilistic RAE calculates the hierarchical FGW based on the proximal gradient method with $20$ iterations, and the deterministic RAE calculates the sliced FGW with $50$ random projections.
All the autoencoders use Euclidean distance as the distance between samples, thus the reconstruction loss is the mean-square-error (MSE). 
We implement all the autoencoders with PyTorch and train them on a single NVIDIA GTX 1080 Ti GPU.
More implementation details, $e.g.$, the architecture of the autoencoders, are provided in Supplementary Material.

\begin{table}[t]
\centering
\begin{small}
    \caption{Comparisons on learning image generator}\label{tab:gen_image}
    \begin{tabular}{
    @{\hspace{3pt}}c@{\hspace{3pt}}|
    @{\hspace{3pt}}c@{\hspace{3pt}}|
    c@{\hspace{3pt}}c@{\hspace{3pt}}|
    c@{\hspace{3pt}}c@{\hspace{3pt}}}
    \hline\hline
    Encoder  & \multirow{2}{*}{Method} &\multicolumn{2}{c|}{MNIST}  &\multicolumn{2}{c}{CelebA} \\
    $Q:\mathcal{X}\mapsto\mathcal{Z}$&  &Rec. loss &FID &Rec. loss &FID\\
    \hline
    \multirow{4}{*}{Probabilistic} &VAE &16.60     &156.11 &96.36  &59.99\\
    &GMVAE                              &16.76     &60.88  &108.13 &353.17\\
    &VampPrior                          &22.41     &127.81 &---    &---\\
    &RAE                                &14.14     &41.99  &63.21  &52.20\\
    \hline
    \multirow{3}{*}{Deterministic} &WAE &9.97      &52.78  &63.83  &52.07\\
    &SWAE                               &11.10     &35.63  &87.02  &88.91\\
    &RAE                                &10.37     &49.39  &64.49  &51.45\\
    \hline\hline
    \end{tabular}
\end{small}
\end{table}

\begin{table}[t]
\centering
\begin{small}
    \caption{Runtime per epoch (second) when training various models}\label{tab:time}
    \begin{tabular}{
    @{\hspace{3pt}}c@{\hspace{5pt}}|
    @{\hspace{5pt}}c@{\hspace{5pt}}
    @{\hspace{5pt}}c@{\hspace{5pt}}
    @{\hspace{5pt}}c@{\hspace{5pt}}
    @{\hspace{5pt}}c@{\hspace{3pt}}}
    \hline\hline
    Dataset &WAE   &SWAE  &P-RAE &D-RAE\\ \hline
    MNIST   &25.6  &24.7  &26.9  &24.8\\
    CelebA  &602.2 &553.4 &618.5 &569.7\\
    \hline\hline
    \end{tabular}
\end{small}
\end{table}

\begin{figure*}[t]
    \centering
    \subfigure[MNIST]{
    \includegraphics[height=4.8cm]{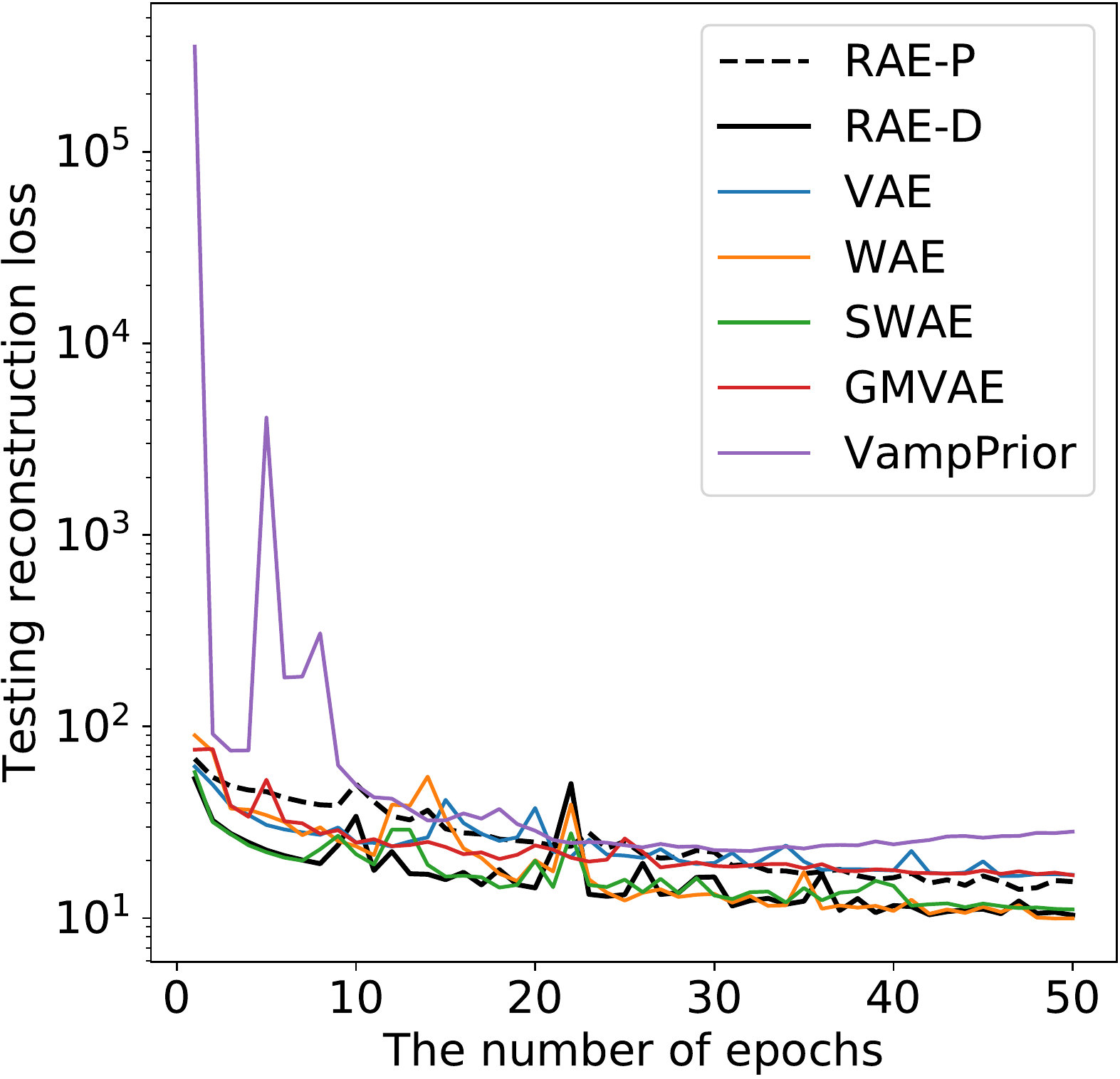}\label{fig:converge1}
    }
    \subfigure[Enlarged (a)]{
    \includegraphics[height=4.8cm]{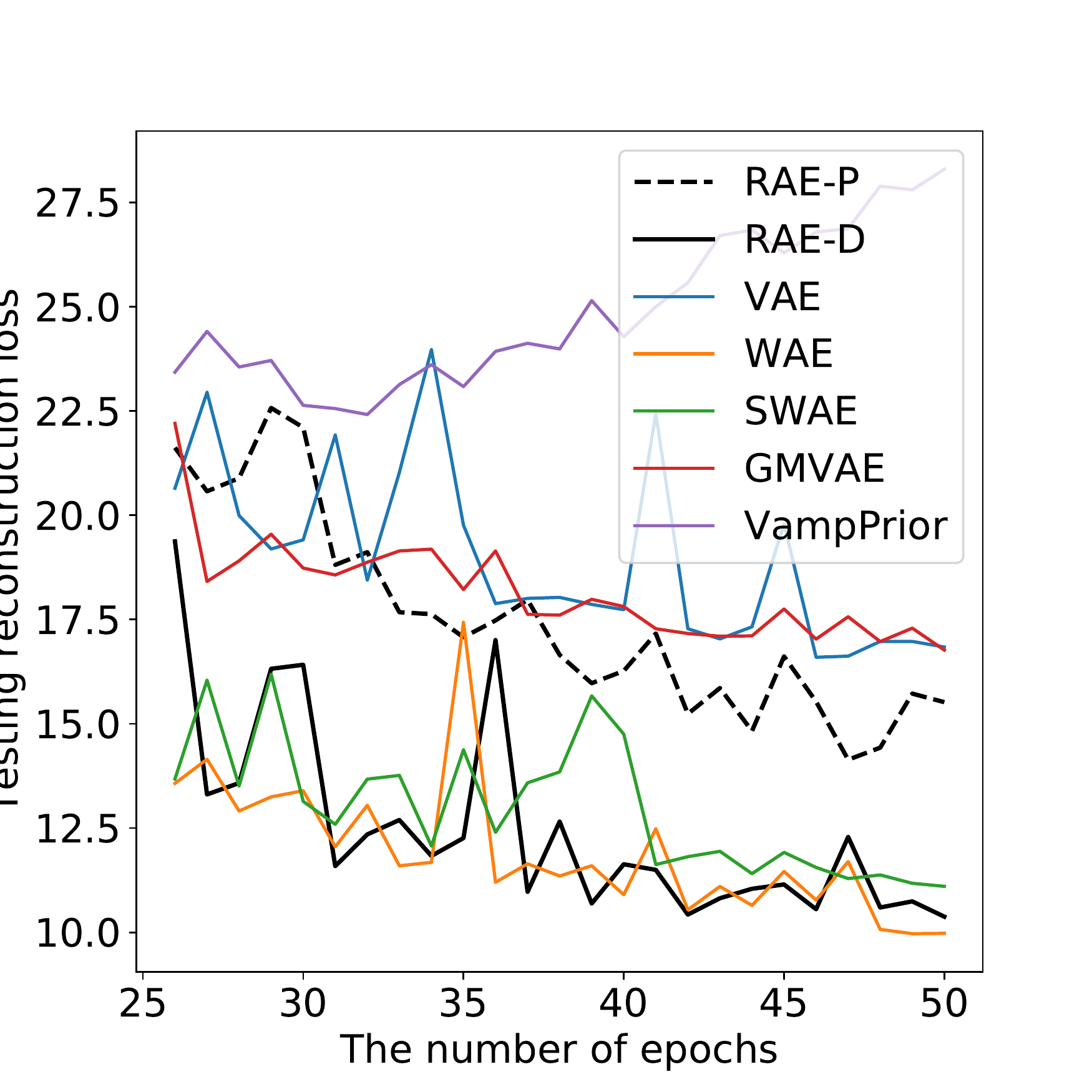}\label{fig:converge2}
    }
    \subfigure[CelebA]{
    \includegraphics[height=4.8cm]{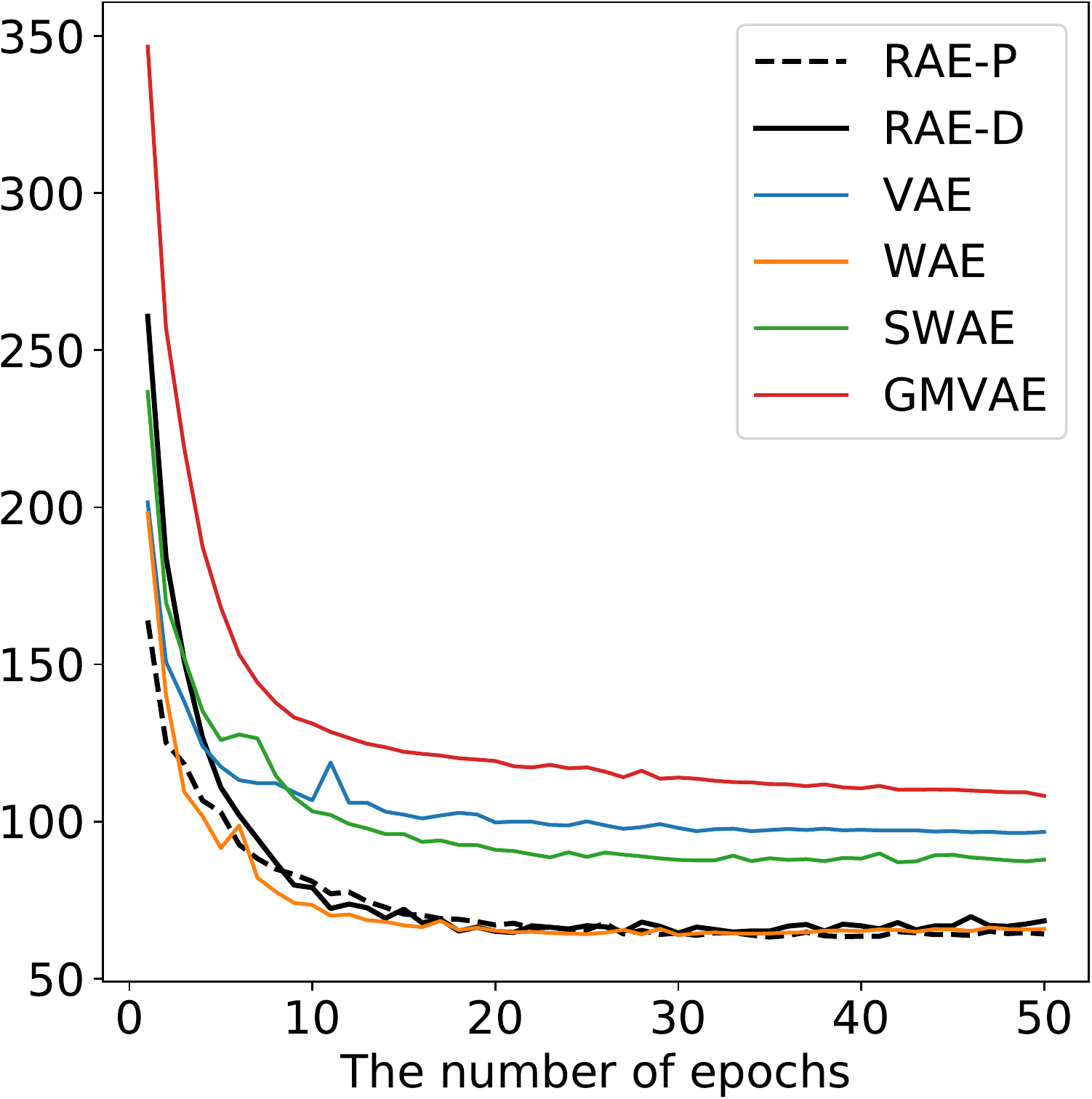}
    }
    \vspace{-10pt}
    \caption{Comparisons for various methods on their convergence.}
    \label{fig:converge}
\end{figure*}

\begin{figure*}[t]
    \centering
    \includegraphics[width=1\linewidth]{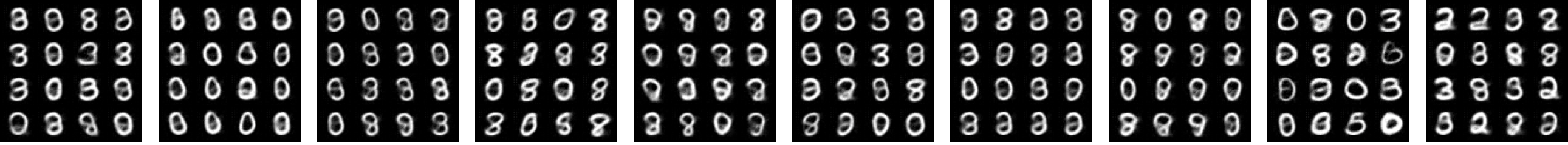}\\
    \vspace{-3pt}\text{(a) VampPrior}
    \includegraphics[width=1\linewidth]{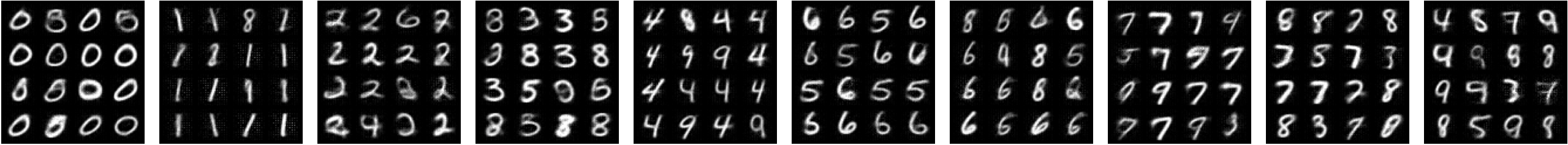}\\
    \vspace{-3pt}\text{(b) GMVAE}
    \includegraphics[width=1\linewidth]{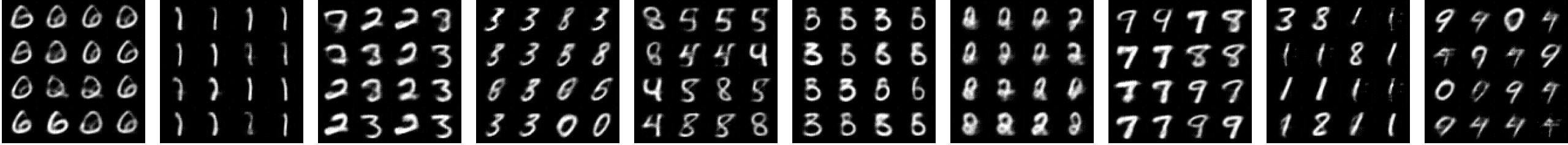}\\
    \vspace{-3pt}\text{(c) Probabilistic RAE}
    \includegraphics[width=1\linewidth]{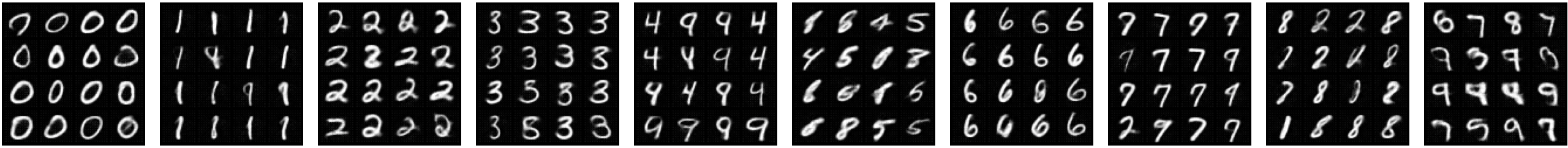}\\
    \vspace{-3pt}\text{(d) Deterministic RAE}
    \vspace{-10pt}
    \caption{Comparisons on conditional digit generation.}
    \label{fig:mnist_cg}
\end{figure*}

For each dataset, we compare the proposed RAE with the baselines on $i$) the reconstruction loss on testing samples; $ii$) the Fr\'echet Inception Distance (FID) between 10,000 testing samples and 10,000 randomly generated samples. 
We list the performance of various autoencoders in Table~\ref{tab:gen_image}.
Among probabilistic autoencoders, our RAE consistently achieves the best performance on both testing reconstruction loss and FID score. 
When learning deterministic autoencoders, our RAE is at least comparable to the considered alternatives on these measurements. 
Figure~\ref{fig:converge} compares the autoencoders on their convergence of the reconstruction loss. 
The convergence of our RAE is almost the same as that of state-of-the-art methods, which further verifies its feasibility.

\begin{figure*}[t]
    \centering
    \includegraphics[width=0.95\linewidth]{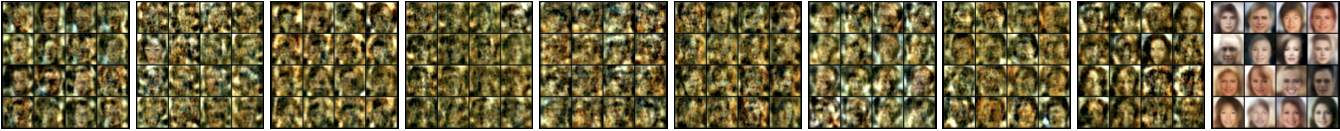}\\
    \vspace{-3pt}\text{(a) GMVAE}
    \includegraphics[width=0.95\linewidth]{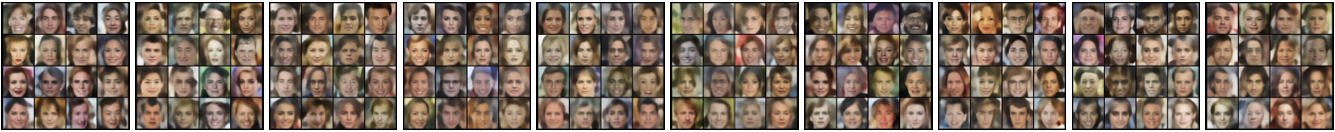}\\
    \vspace{-3pt}\text{(b) Probabilistic RAE}
    \includegraphics[width=0.95\linewidth]{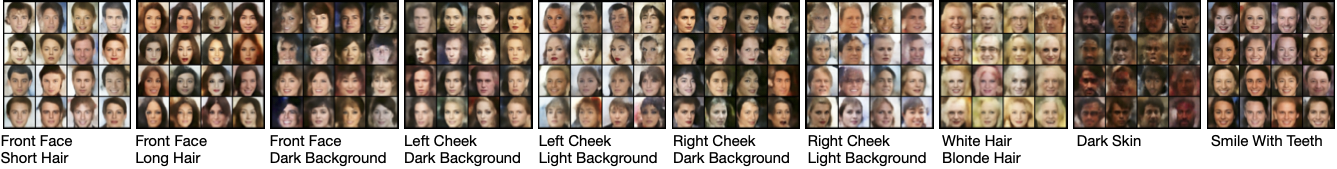}\\
    \vspace{-3pt}\text{(c) Deterministic RAE}
    \vspace{-10pt}
    \caption{Comparisons on conditional face generation.}
    \label{fig:celeba_cg}
\end{figure*}

For the autoencoders learning GMMs as their priors, we further make comparisons for them in conditional generation tasks, $i.e.$, generating samples conditioned on specific Gaussian components. 
Figures~\ref{fig:mnist_cg} and~\ref{fig:celeba_cg} visualize the generation results for various methods. 
For the MNIST dataset, the GMVAE, our probabilistic RAE, and deterministic RAE achieve desired generation results. 
The images conditioned on different Gaussian components correspond to different digits/writing styles. 
The VampPrior, however, suffers from a problem of severe mode collapse. 
The images conditioned on different Gaussian components are similar to each other and with limited modes -- most of them are ``0'', ``2'',  ``3'', and ``8''. 
As shown in Table~\ref{tab:mix}, for each Gaussian component of the prior, the VampPrior parameterizes it by passing a landmark $x_k$ through the encoder. 
Because the landmarks are in the sample space, this implicit model requires more parameters, making it sensitive to initialization with a high risk of overfitting. 
Figures~\ref{fig:converge1} and \ref{fig:converge2} verify our claim: the testing loss of the VampPrior is unstable and does not converge well during training. 
For the CelebA dataset, the GMVAE fails to learn a GMM-based prior. 
As shown in Figure~\ref{fig:celeba_cg}(a), the GMVAE trains a single Gaussian distribution, while ignoring the remaining components. 
As a result, only one Gaussian component can generate face images. 
Our probabilistic and deterministic RAE, by contrast, learn their GMM-based prior successfully. 
In particular, all the components of our probabilistic RAE can generate face images, but the components are indistinguishable. 
Our deterministic RAE achieves the best performance in this conditional generation task -- different components can generate semantically-meaningful images with interpretable diversity. 
For each component, we add some tags to highlight semantic meaning.
The visual comparisons for various autoencoders on their reconstructed and generated samples are shown in Supplementary Material.

\subsection{Multi-view learning via co-training autoencoders}
We test our relational co-training strategy on four multi-view learning datasets~\cite{li2015large}:\footnote{\url{https://github.com/yeqinglee/mvdata}} \textbf{Caltech101-7} is a subset of the Caltech-101 dataset~\cite{fei2004learning} with 1,474 images in $7$ classes. 
Each image is represented by $48$-dimensional Gabor features and $40$-dimensional Wavelet moments.
\textbf{Caltech101-20} is a subset of the Caltech-101 with 2,386 images in $20$ classes. The features are the same with the Caltech101-7. 
\textbf{Handwritten} is a dataset with 2,000 images corresponding to $10$ digits. 
Each image has $240$-dimensional pixel-based features and $76$-dimensional Fourier coefficients.
\textbf{Cathgen} is a real-world dataset of 8,000 patients. For each patient, we seek to leverage $44$-dimensional clinical features and $67$-dimensional genetic features to predict the happening of myocardial infarction, $i.e.$, a binary classification task.

\begin{table*}[t]
\centering
\begin{small}
    \caption{Comparisons on classification accuracy (\%)}\label{tab:mvl}
    \begin{tabular}{
    @{\hspace{4pt}}l@{\hspace{4pt}}|
    @{\hspace{4pt}}c@{\hspace{4pt}}|
    @{\hspace{4pt}}c@{\hspace{4pt}}|
    @{\hspace{4pt}}c@{\hspace{4pt}}|
    @{\hspace{4pt}}c@{\hspace{4pt}}|
    @{\hspace{4pt}}c@{\hspace{4pt}}}
    \hline\hline
    Method &Data type &Caltech101-7 &Caltech101-20 &Handwritten  &Cathgen\\
    \hline
    Independent AEs &Unpaired 
    &56.92$\pm$1.67 
    &33.07$\pm$2.09  
    &52.09$\pm$5.82  
    &64.36$\pm$1.93 \\
    AEs+CoReg~\cite{sindhwani2005co}      &Paired   
    &76.58$\pm$1.38  
    &60.25$\pm$1.66  
    &56.20$\pm$5.25  
    &66.79$\pm$1.30 \\
    CCA~\cite{via2007learning}            &Paired   
    &78.33$\pm$1.88  
    &52.27$\pm$2.32  
    &66.28$\pm$5.02  
    &65.28$\pm$2.17 \\
    AEs+CCA~\cite{wang2015deep}        &Paired   
    &80.24$\pm$1.22  
    &62.37$\pm$1.35  
    &69.72$\pm$4.64  
    &66.89$\pm$1.57 \\    
    AEs+W          &Unpaired 
    &83.07$\pm$1.69  
    &\textbf{69.58}$\pm$2.03  
    &71.21$\pm$5.55  
    &66.06$\pm$1.68 \\
    AEs+GW (Ours)       &Unpaired 
    &\textbf{84.29}$\pm$1.74  
    &69.39$\pm$2.01  
    &\textbf{72.36}$\pm$3.82  
    &\textbf{66.99}$\pm$1.77 \\
    \hline\hline
    \end{tabular}
\end{small}
\end{table*}

For each dataset, we use 80\% of the data for training, 10\% for validation, and the remaining 10\% for testing. 
We test various multi-view learning methods. 
For each method, we first learn two autoencoders for the data in different views in an unsupervised way, and then concatenate the latent codes of the autoencoders as the features and train a classifier based on softmax regression.
When learning autoencoders, our relational co-training method solves (\ref{eq:cotrain}) with $\gamma=1$ and $\tau=0.5$. 
The influence of $\tau$ on the learning results is shown in Supplementary Material.
For simplification, we set the prior distributions as normal distributions in (\ref{eq:cotrain}). 
The autoencoders are probabilistic, whose encoders and decoders are MLPs. 
Each autoencoder has $20$-dimensional latent codes, and 
more implementation details are provided in Supplementary Material.
We set $D$ as the hierarchical Wasserstein distance and the relational regularizer as the hierarchical GW distance. 
In addition to the proposed method, denoted as \textbf{AEs+GW}, we consider the following baselines: $i$) learning two variational autoencoders independently (\textbf{Independent AEs}); $ii$) learning two variational autoencoders jointly with a least-square co-regularization~\cite{sindhwani2005co} (\textbf{AEs+CoReg}); $iii$) learning latent representations via canonical correlation analysis (\textbf{CCA})~\cite{via2007learning}; $iv$) learning two autoencoders jointly with a CCA-based regualrization (\textbf{AEs+CCA})~\cite{wang2015deep}; $v$) learning two autoencoders by replacing the $D_{\text{gw}}$ in (\ref{eq:cotrain}) with a Wasserstein regularizer (\textbf{AEs+W}). 
The AE+CoReg penalizes the Euclidean distance between the latent codes from different views, which needs paired samples. 
The remaining methods penalize the discrepancy between the distributions of the latent codes, which are applicable for unpaired samples.
The classification accuracy in Table~\ref{tab:mvl} demonstrates effectiveness of our relational co-training strategy, as the proposed method outperforms the baselines consistently across different datasets.

\section{Conclusions}
A new framework has been proposed for learning autoencoders with relational regularization. 
Leveraging the GW distance, this framework allows the learning of structured prior distributions associated with the autoencoders and prevents the model from under-regularization.
Besides learning a single autoencoder, the proposed relational regularizer is beneficial for co-training heterogeneous autoencoders.
In the future, we plan to make this relational regularizer applicable for co-training more than two autoencoders and further reduce its computational complexity. 

\textbf{Acknowledgements} This research was supported in part by DARPA, DOE, NIH, ONR and NSF. We
thank Dr. Hongyuan Zha for helpful discussions. 

\newpage
\bibliography{rae}

\begin{thebibliography}{39}
\providecommand{\natexlab}[1]{#1}
\providecommand{\url}[1]{\texttt{#1}}
\expandafter\ifx\csname urlstyle\endcsname\relax
  \providecommand{\doi}[1]{doi: #1}\else
  \providecommand{\doi}{doi: \begingroup \urlstyle{rm}\Url}\fi

\bibitem[Afriat(1971)]{afriat1971theory}
Afriat, S.
\newblock Theory of maxima and the method of lagrange.
\newblock \emph{SIAM Journal on Applied Mathematics}, 20\penalty0 (3):\penalty0
  343--357, 1971.

\bibitem[Alvarez-Melis \& Jaakkola(2018)Alvarez-Melis and
  Jaakkola]{alvarez2018gromov}
Alvarez-Melis, D. and Jaakkola, T.
\newblock Gromov-wasserstein alignment of word embedding spaces.
\newblock In \emph{Proceedings of the 2018 Conference on Empirical Methods in
  Natural Language Processing}, pp.\  1881--1890, 2018.

\bibitem[Bowman et~al.(2016)Bowman, Vilnis, Vinyals, Dai, Jozefowicz, and
  Bengio]{bowman2016generating}
Bowman, S., Vilnis, L., Vinyals, O., Dai, A., Jozefowicz, R., and Bengio, S.
\newblock Generating sentences from a continuous space.
\newblock In \emph{Proceedings of The 20th SIGNLL Conference on Computational
  Natural Language Learning}, pp.\  10--21, 2016.

\bibitem[Bunne et~al.(2019)Bunne, Alvarez-Melis, Krause, and
  Jegelka]{bunne2019learning}
Bunne, C., Alvarez-Melis, D., Krause, A., and Jegelka, S.
\newblock Learning generative models across incomparable spaces.
\newblock In \emph{International Conference on Machine Learning}, pp.\
  851--861, 2019.

\bibitem[Chen \& Denoyer(2017)Chen and Denoyer]{chen2017multi}
Chen, M. and Denoyer, L.
\newblock Multi-view generative adversarial networks.
\newblock In \emph{Joint European Conference on Machine Learning and Knowledge
  Discovery in Databases}, pp.\  175--188. Springer, 2017.

\bibitem[Chen et~al.(2018)Chen, Georgiou, and Tannenbaum]{chen2018optimal}
Chen, Y., Georgiou, T.~T., and Tannenbaum, A.
\newblock Optimal transport for gaussian mixture models.
\newblock \emph{IEEE Access}, 7:\penalty0 6269--6278, 2018.

\bibitem[Chowdhury \& M{\'e}moli(2018)Chowdhury and
  M{\'e}moli]{chowdhury2018gromov}
Chowdhury, S. and M{\'e}moli, F.
\newblock The {G}romov-{W}asserstein distance between networks and stable
  network invariants.
\newblock \emph{arXiv preprint arXiv:1808.04337}, 2018.

\bibitem[Cuturi(2013)]{cuturi2013sinkhorn}
Cuturi, M.
\newblock Sinkhorn distances: Lightspeed computation of optimal transport.
\newblock In \emph{Advances in neural information processing systems}, pp.\
  2292--2300, 2013.

\bibitem[Dilokthanakul et~al.(2016)Dilokthanakul, Mediano, Garnelo, Lee,
  Salimbeni, Arulkumaran, and Shanahan]{dilokthanakul2016deep}
Dilokthanakul, N., Mediano, P.~A., Garnelo, M., Lee, M.~C., Salimbeni, H.,
  Arulkumaran, K., and Shanahan, M.
\newblock Deep unsupervised clustering with gaussian mixture variational
  autoencoders.
\newblock \emph{arXiv preprint arXiv:1611.02648}, 2016.

\bibitem[Fei-Fei et~al.(2004)Fei-Fei, Fergus, and Perona]{fei2004learning}
Fei-Fei, L., Fergus, R., and Perona, P.
\newblock Learning generative visual models from few training examples: An
  incremental bayesian approach tested on 101 object categories.
\newblock In \emph{CVPR workshop}, pp.\  178--178. IEEE, 2004.

\bibitem[Kingma \& Ba(2014)Kingma and Ba]{kingma2014adam}
Kingma, D.~P. and Ba, J.
\newblock Adam: A method for stochastic optimization.
\newblock \emph{arXiv preprint arXiv:1412.6980}, 2014.

\bibitem[Kingma \& Welling(2013)Kingma and Welling]{kingma2013auto}
Kingma, D.~P. and Welling, M.
\newblock Auto-encoding variational bayes.
\newblock \emph{arXiv preprint arXiv:1312.6114}, 2013.

\bibitem[Kolouri et~al.(2016)Kolouri, Zou, and Rohde]{kolouri2016sliced}
Kolouri, S., Zou, Y., and Rohde, G.~K.
\newblock Sliced {W}asserstein kernels for probability distributions.
\newblock In \emph{Proceedings of the IEEE Conference on Computer Vision and
  Pattern Recognition}, pp.\  5258--5267, 2016.

\bibitem[Kolouri et~al.(2018)Kolouri, Pope, Martin, and
  Rohde]{kolouri2018sliced}
Kolouri, S., Pope, P.~E., Martin, C.~E., and Rohde, G.~K.
\newblock Sliced-wasserstein autoencoder: an embarrassingly simple generative
  model.
\newblock \emph{arXiv preprint arXiv:1804.01947}, 2018.

\bibitem[Kumar \& Daum{\'e}(2011)Kumar and Daum{\'e}]{kumar2011co}
Kumar, A. and Daum{\'e}, H.
\newblock A co-training approach for multi-view spectral clustering.
\newblock In \emph{Proceedings of the 28th International Conference on Machine
  Learning (ICML-11)}, pp.\  393--400, 2011.

\bibitem[LeCun et~al.(1998)LeCun, Bottou, Bengio, and
  Haffner]{lecun1998gradient}
LeCun, Y., Bottou, L., Bengio, Y., and Haffner, P.
\newblock Gradient-based learning applied to document recognition.
\newblock \emph{Proceedings of the IEEE}, 86\penalty0 (11):\penalty0
  2278--2324, 1998.

\bibitem[Lee et~al.(2019)Lee, Dabagia, Dyer, and Rozell]{lee2019hierarchical}
Lee, J., Dabagia, M., Dyer, E., and Rozell, C.
\newblock Hierarchical optimal transport for multimodal distribution alignment.
\newblock In \emph{Advances in Neural Information Processing Systems}, pp.\
  13453--13463, 2019.

\bibitem[Li et~al.(2015)Li, Nie, Huang, and Huang]{li2015large}
Li, Y., Nie, F., Huang, H., and Huang, J.
\newblock Large-scale multi-view spectral clustering via bipartite graph.
\newblock In \emph{AAAI}, 2015.

\bibitem[Liu et~al.(2015)Liu, Luo, Wang, and Tang]{liu2015faceattributes}
Liu, Z., Luo, P., Wang, X., and Tang, X.
\newblock Deep learning face attributes in the wild.
\newblock In \emph{ICCV}, 2015.

\bibitem[M{\'e}moli(2009)]{memoli2009spectral}
M{\'e}moli, F.
\newblock Spectral {G}romov-{W}asserstein distances for shape matching.
\newblock In \emph{ICCV Workshops}, pp.\  256--263, 2009.

\bibitem[M{\'e}moli(2011)]{memoli2011gromov}
M{\'e}moli, F.
\newblock Gromov--wasserstein distances and the metric approach to object
  matching.
\newblock \emph{Foundations of computational mathematics}, 11\penalty0
  (4):\penalty0 417--487, 2011.

\bibitem[Peyr{\'e} et~al.(2016)Peyr{\'e}, Cuturi, and Solomon]{peyre2016gromov}
Peyr{\'e}, G., Cuturi, M., and Solomon, J.
\newblock Gromov-wasserstein averaging of kernel and distance matrices.
\newblock In \emph{International Conference on Machine Learning}, pp.\
  2664--2672, 2016.

\bibitem[Radford et~al.(2015)Radford, Metz, and
  Chintala]{radford2015unsupervised}
Radford, A., Metz, L., and Chintala, S.
\newblock Unsupervised representation learning with deep convolutional
  generative adversarial networks.
\newblock \emph{arXiv preprint arXiv:1511.06434}, 2015.

\bibitem[Sindhwani et~al.(2005)Sindhwani, Niyogi, and Belkin]{sindhwani2005co}
Sindhwani, V., Niyogi, P., and Belkin, M.
\newblock A co-regularization approach to semi-supervised learning with
  multiple views.
\newblock In \emph{Proceedings of ICML workshop on learning with multiple
  views}, volume 2005, pp.\  74--79, 2005.

\bibitem[Takahashi et~al.(2019)Takahashi, Iwata, Yamanaka, Yamada, and
  Yagi]{takahashi2019variational}
Takahashi, H., Iwata, T., Yamanaka, Y., Yamada, M., and Yagi, S.
\newblock Variational autoencoder with implicit optimal priors.
\newblock In \emph{Proceedings of the AAAI Conference on Artificial
  Intelligence}, volume~33, pp.\  5066--5073, 2019.

\bibitem[Titouan et~al.(2019)Titouan, Flamary, Courty, Tavenard, and
  Chapel]{titouan2019sliced}
Titouan, V., Flamary, R., Courty, N., Tavenard, R., and Chapel, L.
\newblock Sliced gromov-wasserstein.
\newblock In \emph{Advances in Neural Information Processing Systems}, pp.\
  14726--14736, 2019.

\bibitem[Tolstikhin et~al.(2018)Tolstikhin, Bousquet, Gelly, and
  Sch{\"o}lkopf]{tolstikhin2018wasserstein}
Tolstikhin, I., Bousquet, O., Gelly, S., and Sch{\"o}lkopf, B.
\newblock Wasserstein auto-encoders.
\newblock In \emph{International Conference on Learning Representations (ICLR
  2018)}. OpenReview. net, 2018.

\bibitem[Tomczak \& Welling(2018)Tomczak and Welling]{tomczak2018vae}
Tomczak, J. and Welling, M.
\newblock Vae with a vampprior.
\newblock In \emph{International Conference on Artificial Intelligence and
  Statistics}, pp.\  1214--1223, 2018.

\bibitem[Vayer et~al.(2018{\natexlab{a}})Vayer, Chapel, Flamary, Tavenard, and
  Courty]{vayer2018fused}
Vayer, T., Chapel, L., Flamary, R., Tavenard, R., and Courty, N.
\newblock Fused {G}romov-{W}asserstein distance for structured objects:
  theoretical foundations and mathematical properties.
\newblock \emph{arXiv preprint arXiv:1811.02834}, 2018{\natexlab{a}}.

\bibitem[Vayer et~al.(2018{\natexlab{b}})Vayer, Chapel, Flamary, Tavenard, and
  Courty]{vayer2018optimal}
Vayer, T., Chapel, L., Flamary, R., Tavenard, R., and Courty, N.
\newblock Optimal transport for structured data.
\newblock \emph{arXiv preprint arXiv:1805.09114}, 2018{\natexlab{b}}.

\bibitem[V{\'\i}a et~al.(2007)V{\'\i}a, Santamar{\'\i}a, and
  P{\'e}rez]{via2007learning}
V{\'\i}a, J., Santamar{\'\i}a, I., and P{\'e}rez, J.
\newblock A learning algorithm for adaptive canonical correlation analysis of
  several data sets.
\newblock \emph{Neural Networks}, 20\penalty0 (1):\penalty0 139--152, 2007.

\bibitem[Wang et~al.(2016)Wang, Ding, and Fu]{wang2016coupled}
Wang, S., Ding, Z., and Fu, Y.
\newblock Coupled marginalized auto-encoders for cross-domain multi-view
  learning.
\newblock In \emph{IJCAI}, pp.\  2125--2131, 2016.

\bibitem[Wang et~al.(2015)Wang, Arora, Livescu, and Bilmes]{wang2015deep}
Wang, W., Arora, R., Livescu, K., and Bilmes, J.
\newblock On deep multi-view representation learning.
\newblock In \emph{International Conference on Machine Learning}, pp.\
  1083--1092, 2015.

\bibitem[Wang et~al.(2019)Wang, Gan, Xu, Zhang, Wang, Shen, Chen, and
  Carin]{wang2019topic}
Wang, W., Gan, Z., Xu, H., Zhang, R., Wang, G., Shen, D., Chen, C., and Carin,
  L.
\newblock Topic-guided variational auto-encoder for text generation.
\newblock In \emph{Proceedings of the 2019 Conference of the North American
  Chapter of the Association for Computational Linguistics}, pp.\  166--177,
  2019.

\bibitem[Xu(2019)]{xu2019gwf}
Xu, H.
\newblock Gromov-wasserstein factorization models for graph clustering.
\newblock \emph{arXiv preprint arXiv:1911.08530}, 2019.

\bibitem[Xu et~al.(2019{\natexlab{a}})Xu, Luo, and Carin]{xu2019scalable}
Xu, H., Luo, D., and Carin, L.
\newblock Scalable gromov-wasserstein learning for graph partitioning and
  matching.
\newblock \emph{arXiv preprint arXiv:1905.07645}, 2019{\natexlab{a}}.

\bibitem[Xu et~al.(2019{\natexlab{b}})Xu, Luo, Zha, and Duke]{xu2019gromov}
Xu, H., Luo, D., Zha, H., and Duke, L.~C.
\newblock Gromov-wasserstein learning for graph matching and node embedding.
\newblock In \emph{International Conference on Machine Learning}, pp.\
  6932--6941, 2019{\natexlab{b}}.

\bibitem[Ye et~al.(2016)Ye, Wang, McGuinness, Guo, and Gurrin]{ye2016learning}
Ye, T., Wang, T., McGuinness, K., Guo, Y., and Gurrin, C.
\newblock Learning multiple views with orthogonal denoising autoencoders.
\newblock In \emph{International Conference on Multimedia Modeling}, pp.\
  313--324. Springer, 2016.

\bibitem[Yurochkin et~al.(2019)Yurochkin, Claici, Chien, Mirzazadeh, and
  Solomon]{yurochkin2019hierarchical}
Yurochkin, M., Claici, S., Chien, E., Mirzazadeh, F., and Solomon, J.
\newblock Hierarchical optimal transport for document representation.
\newblock \emph{arXiv preprint arXiv:1906.10827}, 2019.

\end{thebibliography}
\bibliographystyle{icml2020}

\newpage
\onecolumn
\appendix
\section{The proximal gradient method}\label{app:1}
Both the hierarchical FGW in~(\ref{eq:hfgw1}) and the empirical FGW in~(\ref{eq:efgw}) can be rewritten in matrix format.
As shown in (\ref{eq:hfgw2}), the calculation of the distance corresponds to solving the following non-convex optimization problem:
\begin{eqnarray}
\sideset{}{_{T\in \Pi(p,q)}}\min\langle D - 2 D_p T D_q^{\top}, T \rangle,
\end{eqnarray}
where $p$ and $q$ are predefined discrete distributions.
This problem can be solved iteratively by the following proximal gradient method~\cite{xu2019gromov}.
In each $j$-th iteration, given current estimation $T^{(j)}$, we consider the following problem with a proximal term
\begin{eqnarray}
\sideset{}{_{T\in \Pi(p,q)}}\min\langle D - 2 D_p T^{(j)} D_q^{\top}, T \rangle + \alpha \text{KL}(T\| T^{(j)}).
\end{eqnarray}
This subproblem can be solved easily via Sinkhorn iterations~\cite{cuturi2013sinkhorn}.
The details of the algorithm are shown in Algorithm~\ref{alg:proximal}.
In our experiments, we set $J=20$ when learning probabilistic RAE (P-RAE). 
The hyperparameter $\alpha$ is set adaptively. 
In particular, in each iteration, given the matrix $C^{(j)}=D-2D_p T^{(j)}D_q^{\top}$, we set $\alpha=0.1\max(C^{(j)})$.
This setting helps us improve the numerical stability when calculating the $\Phi$ in Algorithm~\ref{alg:proximal}.
When learning deterministic RAE (D-RAE), we apply the sliced FGW distance with $L=50$ random projections, such that the training time of the D-RAE and that of the P-RAE are comparable.

\begin{algorithm}[h]
\small{
    \caption{$\min_{T\in \Pi(p,q)}\langle D - 2 D_p T D_q^{\top}, T \rangle$}	
    \label{alg:proximal}
	\begin{algorithmic}[1]
	    \STATE Initialize ${T}^{(0)}=pq^{\top}$, $a=p$
		\STATE \textbf{for} $j=0,...,J-1$ 
		\STATE \quad $\Phi=\exp(-\frac{1}{\alpha}(D-2D_p{T}^{(j)}D_q^{\top}))\odot T^{(j)}$.
		\STATE \quad Sinkhorn iteration: $\bm{b}=\frac{q}{{\Phi}^{\top}{a}}$,  ${a} = \frac{{p}}{{\Phi}{b}}$,
		\STATE \quad $T^{(j+1)} = \text{diag}({a}){\Phi}\text{diag}({b})$.
		\STATE \textbf{Return} ${T}^{(J)}$
	\end{algorithmic}
}
\end{algorithm}

\section{The Proof of Theorem~\ref{thm:solution}}\label{app:2}

\begin{theorem}\label{thm:trans1}
For ${x},{y}\in\mathcal{I}$, where $\mathcal{I}=\{{x}=[x_i],{y}=[y_i]\in\mathbb{R}^{N}\times\mathbb{R}^N|x_1\leq...\leq x_N, y_1\leq ...\leq y_N\}$, the solution of the problem
\begin{eqnarray}\label{eq:prob}
\sideset{}{_{\sigma\in \mathcal{P}_N}}\min \sideset{}{_{i,j}}\sum((x_i - x_j)^2 - (y_{\sigma(i)} - y_{\sigma(j)})^2)^2 + \alpha \sideset{}{_{i}}\sum(x_i - y_{\sigma(i)})^2,
\end{eqnarray}
where $\mathcal{P}_N$ is the set of all permutation of $\{1,...,N\}$, is invariant with respect to any translations of ${x}$ and ${y}$.
\end{theorem}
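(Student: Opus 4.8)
The plan is to show that translating the entries of $x$ and $y$ changes the objective in (\ref{eq:prob}) only by an additive constant that is independent of the permutation $\sigma$; since an additive constant cannot affect which permutation attains the minimum, the optimal $\sigma$ — i.e. the solution — is then translation-invariant. To set this up I would write the objective as $F(\sigma) = G(\sigma) + \alpha W(\sigma)$, where $G(\sigma) = \sum_{i,j}\bigl((x_i-x_j)^2 - (y_{\sigma(i)}-y_{\sigma(j)})^2\bigr)^2$ is the Gromov--Wasserstein term and $W(\sigma) = \sum_i (x_i - y_{\sigma(i)})^2$ is the Wasserstein term, and consider arbitrary translations $x_i \mapsto x_i + s$ and $y_j \mapsto y_j + t$.

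First I would dispose of $G$. Each factor depends only on the pairwise differences $x_i - x_j$ and $y_{\sigma(i)} - y_{\sigma(j)}$, both of which are unchanged when a common constant is added to all of $x$ or all of $y$. Hence $G(\sigma)$ is \emph{exactly} invariant under the translations for every $\sigma$, and contributes nothing further to the argument.

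The substantive step concerns $W$. Writing $c = s - t$ and expanding the translated term gives
\begin{eqnarray*}
\sum_i \bigl((x_i + s) - (y_{\sigma(i)} + t)\bigr)^2 &=& \sum_i (x_i - y_{\sigma(i)})^2 \\
&& + \, 2c\sum_i (x_i - y_{\sigma(i)}) + Nc^2.
\end{eqnarray*}
The key observation — which I expect to be the one subtle point of the proof — is that the cross term does not depend on $\sigma$: since $\sigma$ is a permutation of $\{1,\dots,N\}$, we have $\sum_i y_{\sigma(i)} = \sum_j y_j$, and therefore $\sum_i (x_i - y_{\sigma(i)}) = \sum_i x_i - \sum_j y_j$ regardless of $\sigma$. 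Consequently the translated Wasserstein term equals $W(\sigma)$ plus the $\sigma$-independent quantity $2c\bigl(\sum_i x_i - \sum_j y_j\bigr) + Nc^2$.

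Combining the two pieces, the translated objective equals $F(\sigma)$ plus a constant independent of $\sigma$, so $\arg\min_{\sigma\in\mathcal{P}_N} F$ is unaffected by the translations, which is exactly the claimed invariance. Everything except the cross-term cancellation is immediate from the difference structure of $G$ and the square expansion of $W$, so the proof is short once that cancellation is isolated.
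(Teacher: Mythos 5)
Your proof is correct and follows essentially the same route as the paper's: the Gromov--Wasserstein term is exactly invariant because it depends only on pairwise differences, and the translated Wasserstein term differs from the original by $2(t_x-t_y)\bigl(\sum_i x_i - \sum_j y_j\bigr) + N(t_x-t_y)^2$, which is independent of $\sigma$ precisely because $\sum_i y_{\sigma(i)} = \sum_j y_j$ for any permutation. The paper's proof carries out the identical expansion and cancellation, so there is nothing to add.
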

\begin{proof}
Denote the translations of ${x}$ and ${y}$ as ${x}'={x}+t_x{1}_N$ and ${y}'={y}+t_y{1}_N$, respectively, where $t_x,t_y\in\mathbb{R}$.
We then denote the objective function in (\ref{eq:prob}) as $F({x},{y},\sigma)$. 
Accordingly, we have
\begin{eqnarray}\label{eq:equiv}
\begin{aligned}
F({x}',{y}',\sigma)
= &\sideset{}{_{i,j}}\sum((x_i+t_x - x_j-t_x)^2 - (y_{\sigma(i)}+t_y - y_{\sigma(j)}-t_y)^2)^2\\
&+ \alpha \sideset{}{_{i}}\sum(x_i +t_x- y_{\sigma(i)}-t_y)^2\\
=&\sideset{}{_{i,j}}\sum((x_i - x_j)^2 - (y_{\sigma(i)} - y_{\sigma(j)})^2)^2 \\
&+\alpha \sideset{}{_{i}}\sum\Bigl[(x_i- y_{\sigma(i)})^2 + (t_x - t_y)^2 + 2(x_i-y_{\sigma(i)})(t_x-t_y)\Bigr]\\
=&F({x},{y},\sigma) + \alpha N(t_x-t_y)^2 + 2\alpha(t_x-t_y)\sideset{}{_{i}}\sum(x_i - y_{\sigma(i)})\\
=&F({x},{y},\sigma) + \underbrace{\alpha N(t_x-t_y)^2 + 2\alpha(t_x-t_y)(X-Y)}_{\text{not dependent on $\sigma$}},
\end{aligned}
\end{eqnarray}
where $X=\sum_i x_i$, $Y=\sum_i y_{\sigma(i)}=\sum_i y_i$. 
Based on (\ref{eq:equiv}), we have
\begin{eqnarray}
\sideset{}{_{\sigma\in \mathcal{P}_N}}\min 
F({x}', {y}',\sigma)=\text{Constant} + \sideset{}{_{\sigma\in \mathcal{P}_N}}\min F({x}, {y},\sigma),
\end{eqnarray}
whose solution is invariant.
\end{proof}

\begin{theorem}\label{thm:solution1}
Following the notations in Theorem~\ref{thm:trans1}, for ${x},{y}\in\mathcal{I}$ we denote their zero-mean translations as ${x}'={x}+t_x{1}_N$ and ${y}'={y}+t_y{1}_N$, respectively. 
$\arg\min_{\sigma\in\mathcal{P}_N} F({x},{y},\sigma)$ satisfies:\\
1) When $(\sum_i x_i'y_i' +\frac{\alpha}{8})^2\geq (\sum_i x_i' y_{n+1-i}' +\frac{\alpha}{8})^2$, the solution is the identity permutation $\sigma(i)=i$.\\
2) Otherwise, the solution is the anti-identity permutation $\sigma(i)=n+1-i$.
\end{theorem}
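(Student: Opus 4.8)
The plan is to strip away the translation degrees of freedom, expand the objective into a low‑order polynomial in a single scalar statistic of the permutation, and then minimize that polynomial by a convexity‑plus‑rearrangement argument. First I would invoke Theorem~\ref{thm:trans1}: since the minimizing $\sigma$ is invariant under translating $x$ and $y$, I may replace them by their zero‑mean translates and assume throughout that $\sum_i x_i' = \sum_i y_i' = 0$. Writing $a=x'$, $b=y'$, and $P(\sigma)=\sum_i a_i b_{\sigma(i)}$, the Wasserstein piece $\alpha\sum_i (a_i-b_{\sigma(i)})^2$ equals $\alpha(\|a\|^2+\|b\|^2)-2\alpha P(\sigma)$, so only $-2\alpha P(\sigma)$ depends on $\sigma$.

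Next I would expand the Gromov piece $\sum_{i,j}\bigl((a_i-a_j)^2-(b_{\sigma(i)}-b_{\sigma(j)})^2\bigr)^2$. Expanding the outer square, the terms $\sum_{i,j}(a_i-a_j)^4$ and $\sum_{i,j}(b_{\sigma(i)}-b_{\sigma(j)})^4$ are $\sigma$‑independent, and the cross term $-2\sum_{i,j}(a_i-a_j)^2(b_{\sigma(i)}-b_{\sigma(j)})^2$ must be expanded out. Here the zero‑mean constraints $\sum_i a_i=\sum_i b_i=0$ annihilate all the odd moments, leaving only a constant together with a multiple of $P(\sigma)^2$ and a multiple of the fourth‑order quantity $R(\sigma)=\sum_i a_i^2\,b_{\sigma(i)}^2$. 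The net effect of the expansion is the reduction
\[
F(x,y,\sigma)=\text{const}-8P(\sigma)^2-2\alpha P(\sigma)-4N\,R(\sigma),
\]
so that minimizing $F$ is equivalent to maximizing $8P(\sigma)^2+2\alpha P(\sigma)+4N R(\sigma)$.

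For the $P$‑dependent part I would complete the square: $8P^2+2\alpha P=8\bigl(P+\frac{\alpha}{8}\bigr)^2-\frac{\alpha^2}{8}$, so minimizing in $P$ amounts to maximizing $\bigl(P(\sigma)+\frac{\alpha}{8}\bigr)^2$. By the rearrangement inequality, over $\sigma\in\mathcal{P}_N$ the scalar $P(\sigma)$ attains its maximum at the identity (both sequences sorted) and its minimum at the anti‑identity; since $\bigl(P+\frac{\alpha}{8}\bigr)^2$ is convex in $P$, its maximum over this interval is attained at one of the two endpoints, whichever makes $\bigl|P+\frac{\alpha}{8}\bigr|$ larger. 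Recalling $\alpha=\frac{1-\beta}{\beta}$ and hence $\frac{\alpha}{8}=\frac{1-\beta}{8\beta}$, this is exactly the stated dichotomy: identity when $\bigl(\sum_i x_i'y_i'+\frac{1-\beta}{8\beta}\bigr)^2\ge\bigl(\sum_i x_i'y_{n+1-i}'+\frac{1-\beta}{8\beta}\bigr)^2$, and anti‑identity otherwise.

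The hard part will be the fourth‑order term $-4N R(\sigma)$, which the $P$‑based dichotomy above ignores. I would therefore need a separate argument for two points: (i) that the global minimizer over all of $\mathcal{P}_N$ is monotone, i.e.\ identity or anti‑identity, and (ii) that the $R$‑term does not overturn the endpoint selected by the $P$‑terms. For (i) I would attempt an exchange/uncrossing argument — showing any adjacent inversion in a candidate $\sigma$ can be removed without increasing $F$ — building on the fact that on the line both the Wasserstein and Gromov–Wasserstein couplings are monotone~\cite{titouan2019sliced}. Step (ii) is the genuinely delicate point, because for $N\ge 3$ one has $R(\mathrm{id})-R(\text{anti-id})=\sum_i a_i^2\bigl(b_i^2-b_{n+1-i}^2\bigr)$, which does not vanish in general; reconciling this contribution with the square‑completed $P$‑terms, or isolating the regime (such as the large‑$\alpha$, i.e.\ small‑$\beta$, setting used in the experiments) in which the fourth‑order term is dominated, is where I would concentrate the rigor.
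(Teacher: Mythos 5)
Your reduction is exactly the paper's: after centering, the $\sigma$-dependent part of $F$ is $-8P(\sigma)^2-2\alpha P(\sigma)-4N\,R(\sigma)$ with $P(\sigma)=\sum_i x_i'y_{\sigma(i)}'$ and $R(\sigma)=\sum_i x_i'^2y_{\sigma(i)}'^2$, and your endgame (complete the square in $P$, bound $P(\sigma)$ between its anti-identity and identity values by the rearrangement inequality, and use convexity of $(P+\tfrac{\alpha}{8})^2$ to push the maximum to an endpoint) is precisely the paper's. One small simplification: once $R$ is out of the picture, your step (i) is automatic — the objective is then a convex function of the single scalar $P(\sigma)$, whose range over $\mathcal{P}_N$ is sandwiched between the two monotone couplings, so no separate exchange/uncrossing argument is needed to exclude non-monotone permutations.

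The one ingredient you are missing is the paper's device for killing $R(\sigma)$, and you are right to treat this as the delicate point. The paper sets $g(x,y,b)=\sum_i f(x_i+b,y_{\sigma(i)})$, computes $g(x,y,b)=N\bigl(R(\sigma)-b^2\sum_i y_{\sigma(i)}'^2\bigr)$, picks $b^*$ with $(b^*)^2=R(\sigma)/\sum_i y_i'^2$ so that $g$ vanishes, and then invokes the translation invariance of Theorem~\ref{thm:trans1} to declare the problem equivalent to maximizing $\frac{\alpha}{2}P+2P^2$. But this $b^*$ depends on $\sigma$, whereas Theorem~\ref{thm:trans1} only guarantees invariance of the minimizer under a \emph{fixed} translation; applying a $\sigma$-dependent shift subtracts the $\sigma$-dependent quantity $N R(\sigma)$ from the objective, which is exactly the term at issue. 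So the concern you raise in step (ii) — that $R(\mathrm{id})-R(\text{anti-id})=\sum_i x_i'^2\bigl(y_i'^2-y_{N+1-i}'^2\bigr)$ need not vanish and could in principle overturn the endpoint selected by the $P$-terms — is not resolved by the paper's argument either: you have located a genuine gap in the paper's own proof, not merely in your attempt. Completing the argument rigorously would require the extra analysis you sketch, i.e.\ an explicit accounting of $4N R(\sigma)$ alongside $8\bigl(P(\sigma)+\tfrac{\alpha}{8}\bigr)^2$, or a restriction to a regime in which the $R$-term is provably dominated.
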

\begin{proof}
The proposed problem is equivalent to the following problem
\begin{eqnarray}
\sideset{}{_{\sigma\in \mathcal{P}_N}}\max Z({x},{y},\sigma) =\sideset{}{_{\sigma\in \mathcal{P}_N}}\max\sideset{}{_{i,j}}\sum(x_i-x_j)^2(y_{\sigma(i)}-y_{\sigma(j)})^2 + \alpha\sideset{}{_{i}}\sum x_i y_{\sigma(i)},
\end{eqnarray}
and we denote $X=\sum_i x_i$ and $Y=\sum_i y_i=\sum_i y_{\sigma(i)}$.
Accordingly, we have
\begin{eqnarray}
\begin{aligned}
\sideset{}{_{\sigma\in\mathcal{P}_N}}\max Z({x},{y},\sigma)
=&\sideset{}{_{\sigma\in\mathcal{P}_N}}\max\sideset{}{_{i,j}}\sum(x_i-x_j)^2(y_{\sigma(i)}-y_{\sigma(j)})^2 + \alpha\sideset{}{_{i}}\sum x_i y_{\sigma(i)}\\
=&\sideset{}{_{\sigma\in\mathcal{P}_N}}\max \sideset{}{_{i,j}}\sum (x_i^2 + x_j^2)(y_{\sigma(i)}^2 + y_{\sigma(j)}^2) -2\sideset{}{_{i,j}}\sum x_ix_j(y_{\sigma(i)}^2 + y_{\sigma(j)}^2)\\
&\quad\quad\quad\quad - 2\sideset{}{_{i,j}}\sum(x_i^2 + x_j^2)y_{\sigma(i)}y_{\sigma(j)} + 4\sideset{}{_{i,j}}\sum x_ix_jy_{\sigma(i)}y_{\sigma(j)} +\alpha\sideset{}{_i}\sum x_iy_{\sigma(i)}\\
=&\sideset{}{_{\sigma\in\mathcal{P}_N}}\max 2N\sideset{}{_i}\sum x_i^2y_{\sigma(i)}^2 +2\sideset{}{_i}\sum x_i^2\sideset{}{_i}\sum y_i^2
-2\sideset{}{_{i,j}}\sum x_ix_j(y_{\sigma(i)}^2 + y_{\sigma(j)}^2) \\
&\quad\quad\quad\quad - 2\sideset{}{_{i,j}}\sum(x_i^2 + x_j^2)y_{\sigma(i)}y_{\sigma(j)} + 4\sideset{}{_{i,j}}\sum x_ix_jy_{\sigma(i)}y_{\sigma(j)} +\alpha\sideset{}{_i}\sum x_iy_{\sigma(i)}\\
=&\sideset{}{_{\sigma\in\mathcal{P}_N}}\max 2N\sideset{}{_i}\sum x_i^2y_{\sigma(i)}^2 +2\sideset{}{_i}\sum x_i^2\sideset{}{_i}\sum y_i^2
-4X\sideset{}{_i}\sum x_i y_{\sigma(i)}^2 \\
&\quad\quad\quad\quad - 4Y\sideset{}{_i}\sum x_i^2 y_{\sigma(i)} + 4\sideset{}{_{i,j}}\sum x_ix_jy_{\sigma(i)}y_{\sigma(j)} +\alpha\sideset{}{_i}\sum x_iy_{\sigma(i)}\\ 
=&2\Bigl( \sideset{}{_{\sigma\in\mathcal{P}_N}}\max \sideset{}{_i}\sum N x_i^2y_{\sigma(i)}^2  -2X\sideset{}{_i}\sum x_i y_{\sigma(i)}^2 \\
&\quad\quad\quad\quad - 2Y\sideset{}{_i}\sum x_i^2 y_{\sigma(i)} + 2(\sideset{}{_i}\sum x_i y_{\sigma(i)})^2 +\frac{\alpha}{2}\sideset{}{_i}\sum x_iy_{\sigma(i)}\Bigr) + 2\sideset{}{_i}\sum x_i^2\sideset{}{_i}\sum y_i^2, 
\end{aligned}
\end{eqnarray}
where the last term $2\sideset{}{_i}\sum x_i^2\sideset{}{_i}\sum y_i^2$ does not depend on $\sigma$.
Therefore, we define
\begin{eqnarray}\label{eq:W}
W({x},{y},\sigma):=\sideset{}{_i}\sum f(x_i, y_{\sigma(i)}) +\frac{\alpha}{2}\sideset{}{_i}\sum x_i y_{\sigma(i)}+ 2(\sideset{}{_i}\sum x_i y_{\sigma(i)})^2,
\end{eqnarray}
where 
\begin{eqnarray}
f(x_i,y_{\sigma(i)}):=Nx_i^2y_{\sigma(i)}^2-2X x_i y_{\sigma(i)}^2 - 2Y x_i^2 y_{\sigma(i)},
\end{eqnarray}
such that $\forall {x},{y}\in\mathcal{I}$
\begin{eqnarray}
\begin{aligned}
\arg\sideset{}{_{\sigma\in\mathcal{P}_N}}\max Z({x},{y},\sigma)
=&\arg\sideset{}{_{\sigma\in\mathcal{P}_N}}\max W({x},{y},\sigma)\\
=&\arg\sideset{}{_{\sigma\in\mathcal{P}_N}}\max N\sideset{}{_i}\sum f(x_i,y_{\sigma(i)}) + \frac{\alpha}{2}\sideset{}{_i}\sum x_i y_{\sigma(i)}+ 2(\sideset{}{_i}\sum x_i y_{\sigma(i)})^2.
\end{aligned}
\end{eqnarray}
Furthermore, we define a translated version of $\sum_i f(x_i, y_{\sigma(i)})$ as
\begin{eqnarray}
g({x}, {y}, b) = \sideset{}{_i}\sum f(x_i + b, y_{\sigma(i)}).
\end{eqnarray}
According to the translation invariance shown in Theorem~\ref{thm:trans1}, we assume $X=Y=0$ and want to find a translation $b^*$ such that $g({x},{y},b^*)=0$ and simply the problem. 
Specifically, we have
\begin{eqnarray}
\begin{aligned}
g({x},{y},b)
=& \sideset{}{_i}\sum f(x_i + b, y_{\sigma(i)})\\
=&\sideset{}{_i}\sum N(x_i+b)^2y_{\sigma(i)}^2-2(X+Nb)(x_i+b) y_{\sigma(i)}^2 - 2Y(x_i+b)^2 y_{\sigma(i)}\\
=&\sideset{}{_i}\sum N(x_i^2 + 2bx_i +b^2)y_{\sigma(i)}^2-2(Nbx_i + Nb^2) y_{\sigma(i)}^2\\
=&N\Bigl(\sideset{}{_i}\sum x_i^2y_{\sigma(i)}^2 -b^2\sideset{}{_i}\sum y_{\sigma(i)}^2\Bigr).
\end{aligned}
\end{eqnarray}
Obviously, $b^*=\pm\frac{\sum_i x_i^2y_{\sigma(i)}^2}{\sum_i y_{\sigma(i)}^2}$ makes $g({x},{y},b^*)=0$.
Plugging $X=Y=0$ and $b^*$ into (\ref{eq:W}), we have
\begin{eqnarray}\label{eq:finalobj}
\begin{aligned}
W({x}+b^*{1}_N,{y},\sigma)&=g({x},{y},b^*)+ +\frac{\alpha}{2}\sideset{}{_i}\sum (x_i+b^*) y_{\sigma(i)}+ 2(\sideset{}{_i}\sum (x_i+b^*) y_{\sigma(i)})^2\\
&=\frac{\alpha}{2}\sideset{}{_i}\sum x_i y_{\sigma(i)}+ 2(\sideset{}{_i}\sum x_i y_{\sigma(i)})^2\\
&=2\Bigl(\sideset{}{_i}\sum x_i y_{\sigma(i)} + \frac{\alpha}{8}\Bigr)^2 - \frac{\alpha^2}{32}.
\end{aligned}
\end{eqnarray}
In summary, the original problem is equivalent to $\max_{\sigma\in\mathcal{P}_N}W({x}+b^*{1}_N,{y},\sigma)$. 

For any ${x}, {y}\in \mathcal{I}$, we recall the rearrangement inequality:
\begin{eqnarray}\label{eq:rearrange}
\forall \sigma\in\mathcal{P}_N,\quad \sideset{}{_i}\sum x_i y_{n+1-i}\leq \sideset{}{_i}\sum x_i y_{\sigma(i)}\leq \sideset{}{_i}\sum x_i y_i
\end{eqnarray}
Based on (\ref{eq:rearrange}), the solution of $\max_{\sigma\in\mathcal{P}_N}W({x}+b^*{1}_N,{y},\sigma)$ satisfies:
\begin{itemize}
    \item When $(\sum_i x_iy_i +\frac{\alpha}{8})^2\geq (\sum_i x_i y_{n+1-i} +\frac{\alpha}{8})^2$, the solution is the identity permutation $\sigma(i)=i$.
    \item Otherwise, the solution is the anti-identity permutation $\sigma(i)=n+1-i$.
\end{itemize}
\end{proof}
In our case, $\alpha = \frac{1-\beta}{\beta}$.

\section{Implementation Details}\label{app:3}
For the MNIST dataset, the architecture of the autoencoders is
\begin{eqnarray*}
&\text{Encoder}~Q:\quad &x\in\mathbb{R}^{28\times 28}\rightarrow\text{Conv}_{\text{128}}\rightarrow\text{Conv}_{\text{256}}\rightarrow\text{Conv}_{\text{512}}\rightarrow\text{Conv}_{\text{1024}}\rightarrow\text{FC}_{\text{8}}\rightarrow z\in\mathbb{R}^8\\
&\text{Decoder}~G:\quad &z\in\mathbb{R}^8\rightarrow\text{FC}_{7\times 7\times 1024}\rightarrow\text{FSConv}_{\text{512}}\rightarrow\text{FSConv}_{\text{256}}\rightarrow\text{FSConv}_{\text{1}}\rightarrow x\in\mathbb{R}^{28\times 28},
\end{eqnarray*}
where $\text{Conv}_k$ stands for a convolution with $k$ $4\times 4$ filters, $\text{FSConv}_k$ for the fractional strided convolution with $k$ $4\times 4$ filters, and $\text{FC}_k$ for the fully connected layer mapping to $\mathbb{R}^k$.
Except the last layer of the decoder, each of the remaining convolution layers is followed by the batch normalization and the ReLU activation. 
For probabilistic autoencoders, the last layer of the encoder contains two $\text{FC}$ layers, outputting the mean and the logarithmic variance accordingly.

For the CelebA dataset, the architecture of the autoencoders is
\begin{eqnarray*}
&\text{Encoder}~Q:\quad &x\in\mathbb{R}^{64\times 64\times 3}\rightarrow\text{Conv}_{\text{128}}\rightarrow\text{Conv}_{\text{256}}\rightarrow\text{Conv}_{\text{512}}\rightarrow\text{Conv}_{\text{1024}}\rightarrow\text{FC}_{\text{64}}\rightarrow z\in\mathbb{R}^{64}\\
&\text{Decoder}~G:\quad &z\in\mathbb{R}^{64}\rightarrow\text{FC}_{8\times 8\times 1024}\rightarrow\text{FSConv}_{\text{512}}\rightarrow\text{FSConv}_{\text{256}}\rightarrow\text{FSConv}_{\text{128}}\rightarrow\text{FSConv}_{\text{3}}\rightarrow x\in\mathbb{R}^{64\times 64\times 3},
\end{eqnarray*}
where $\text{Conv}_k$ stands for a convolution with $k$ $5\times 5$ filters, $\text{FSConv}_k$ for the fractional strided convolution with $k$ $5\times 5$ filters, and $\text{FC}_k$ for the fully connected layer mapping to $\mathbb{R}^k$.
Except the last layer of the decoder, each of the remaining convolution layers is followed by the batch normalization and the ReLU activation. 
For probabilistic autoencoders, the last layer of the encoder contains two $\text{FC}$ layers, outputting the mean and the logarithmic variance accordingly. 

For the multi-view learning experiments, the architecture of the autoencoders is
\begin{eqnarray*}
&\text{Encoder}~Q:\quad &x\in\mathbb{R}^{V}\rightarrow\text{FC}_{\text{50}}+\text{ReLU}\rightarrow\text{FC}_{\text{20}}\rightarrow z\in\mathbb{R}^{20}\\
&\text{Decoder}~G:\quad &z\in\mathbb{R}^{20}\rightarrow\text{FC}_{\text{50}}+\text{ReLU}\rightarrow\text{FC}_{V}\rightarrow x\in\mathbb{R}^{V},
\end{eqnarray*}
where $V$ is the dimension of the data.
For probabilistic autoencoders, the last layer of the encoder contains two $\text{FC}$ layers, outputting the mean and the logarithmic variance accordingly. 

It should be noted that our relational co-training strategy is feasible when the latent codes of different views are with different dimensions. 
Here, we set the dimension as $20$ for all the views to guarantee the fairness on comparisons. 
In such a situation, both our relational regularizer and the traditional regularizers used in AE+CoReg and AE+W are applicable. 
Accordingly, the comparisons among them demonstrate the superiority of our strategy.

\begin{figure}[t]
    \centering
    \subfigure[Caltech101-7]{
    \includegraphics[width=0.22\textwidth]{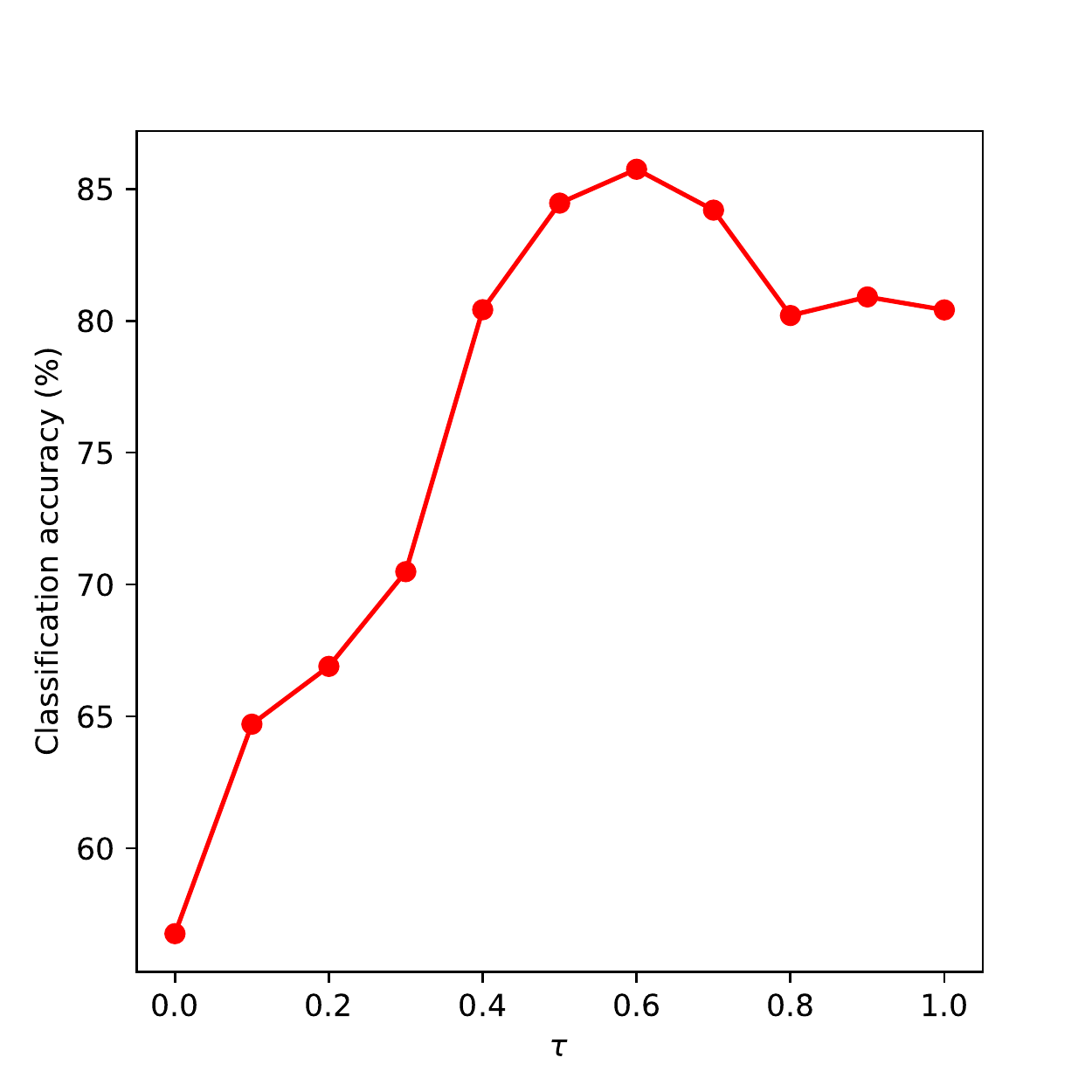}
    }
    \subfigure[Caltech101-20]{
    \includegraphics[width=0.22\textwidth]{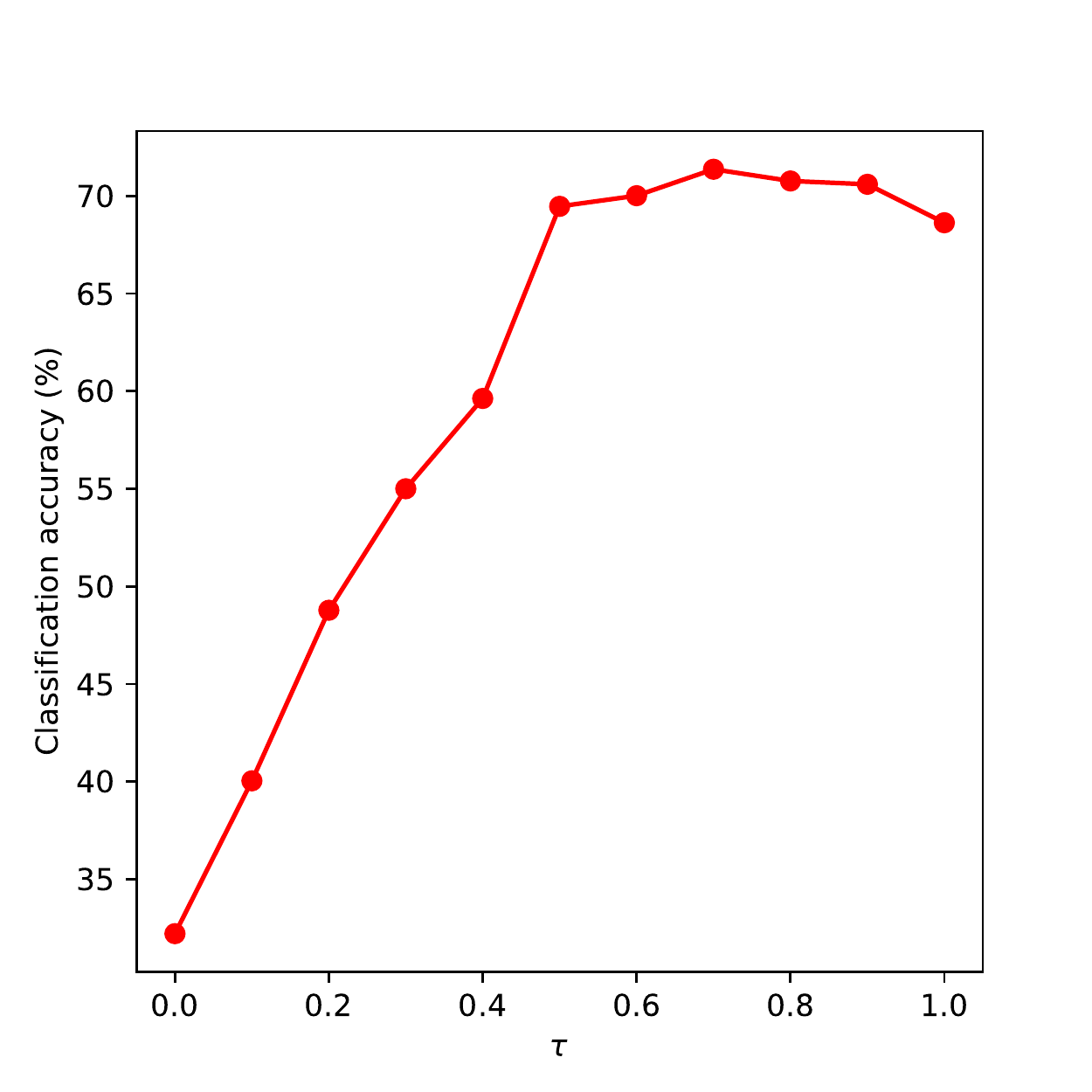}
    }
    \subfigure[Handwritten]{
    \includegraphics[width=0.22\textwidth]{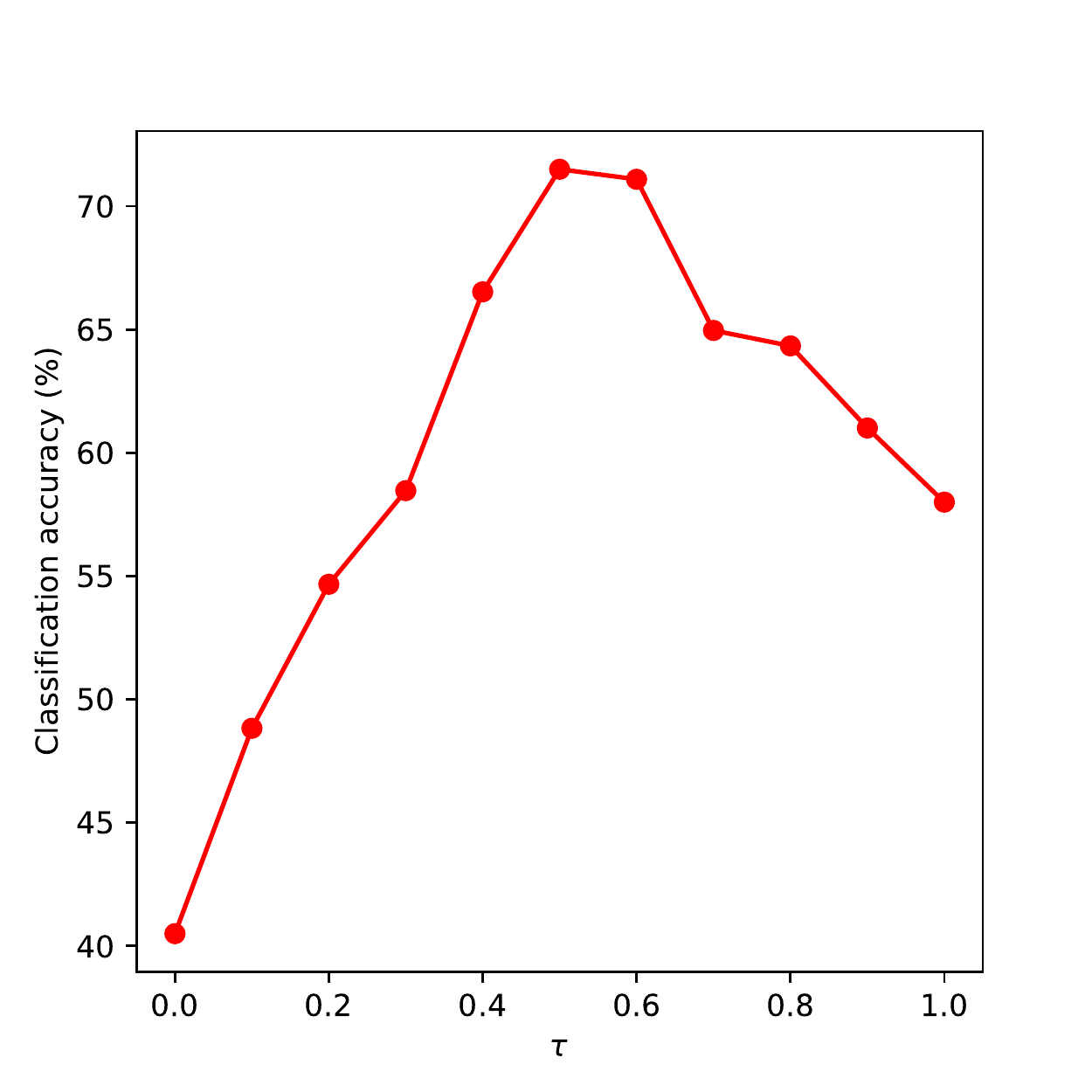}
    }
    \subfigure[Cathgen]{
    \includegraphics[width=0.22\textwidth]{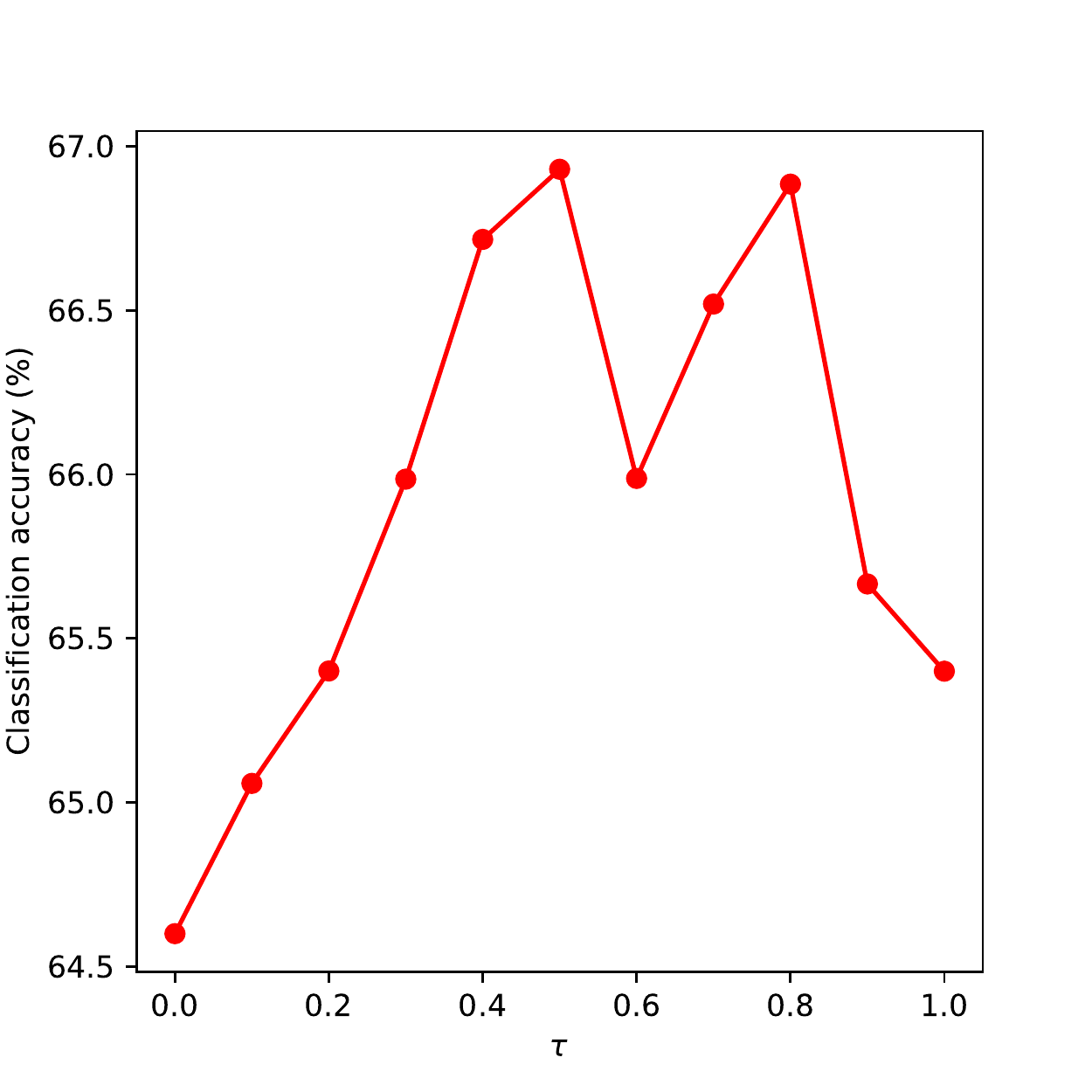}
    }
    \vspace{-10pt}
    \caption{The influences of $\tau$ on the classification accuracy for various datasets.}
    \label{fig:cmp_sample2}
\end{figure}

\begin{figure}[t]
    \centering
    \subfigure[MNIST]{
    \includegraphics[height=2.5cm]{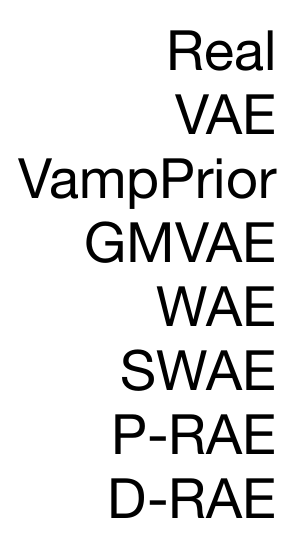}
    \includegraphics[height=2.5cm]{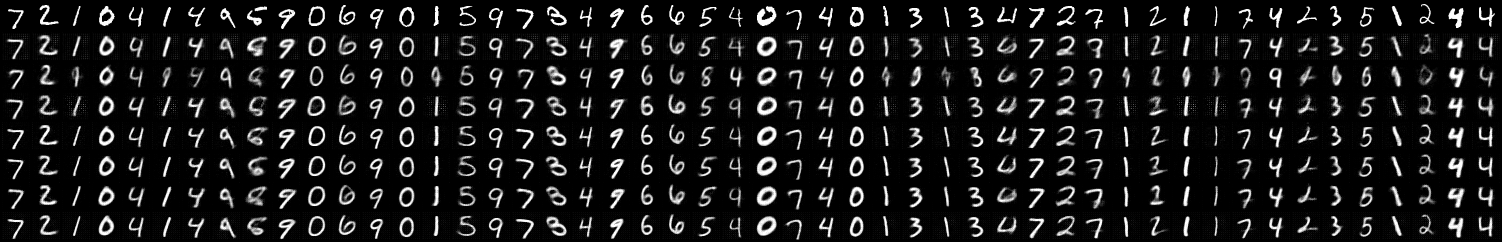}
    }
    \subfigure[CelebA]{
    \includegraphics[height=5cm]{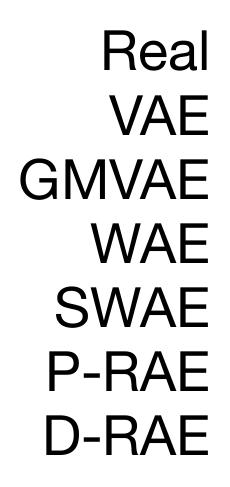}
    \includegraphics[height=5cm]{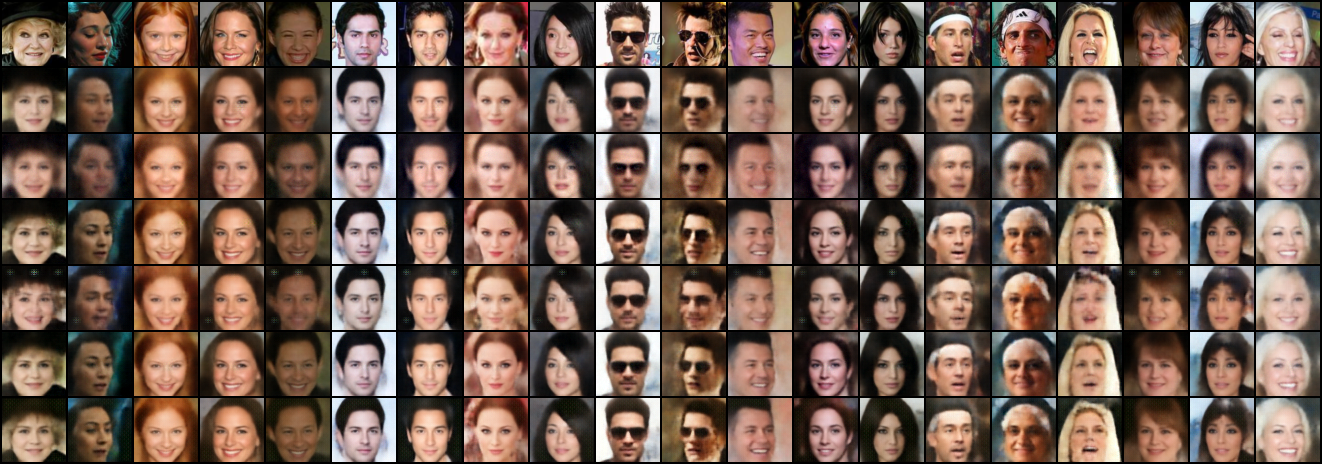}
    }
    \vspace{-10pt}
    \caption{Comparisons for various methods on the reconstruction quality of images.}
    \label{fig:cmp_rec}
\end{figure}
\begin{figure}[t]
    \centering
    \subfigure[VAE]{
    \includegraphics[width=0.22\textwidth]{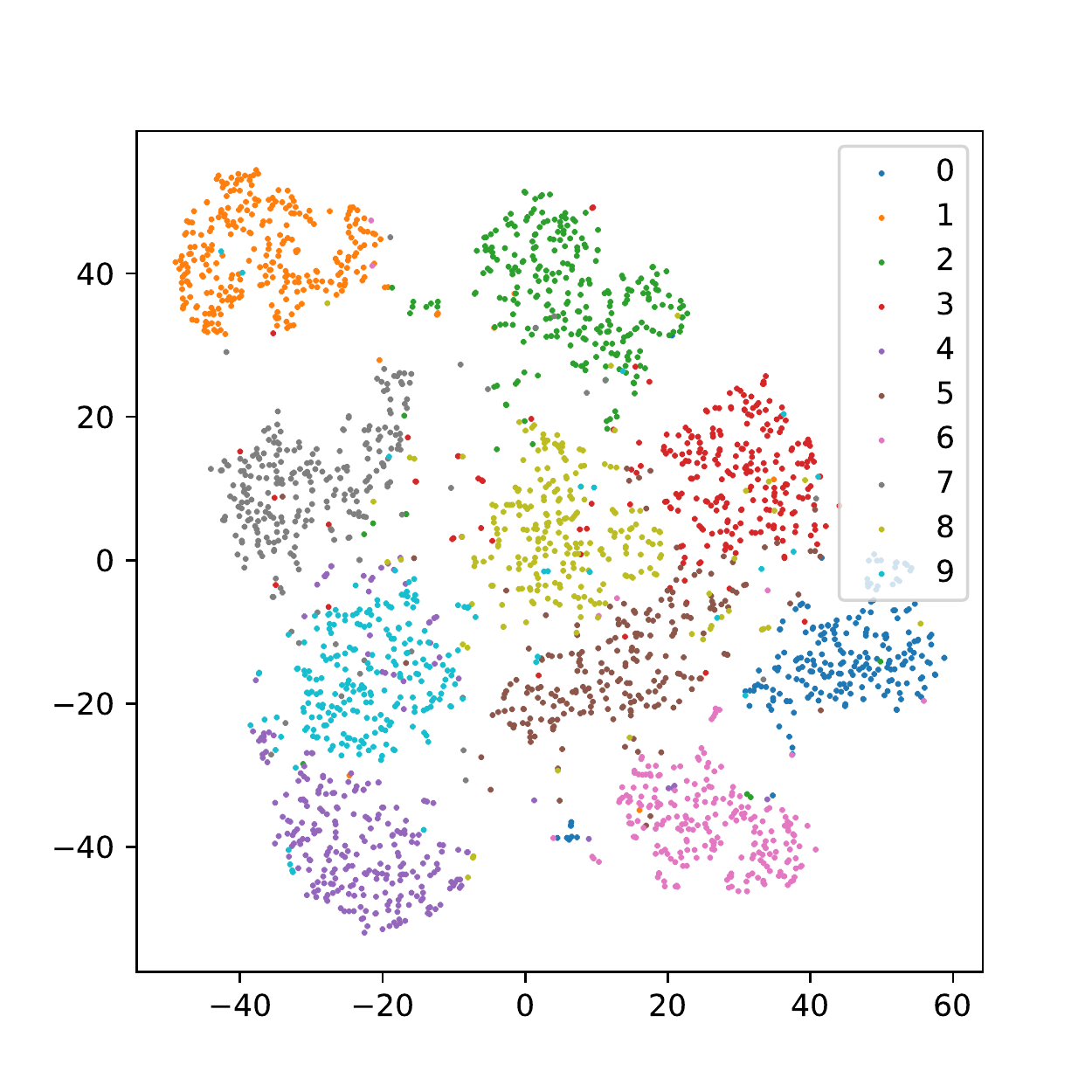}
    }
    \subfigure[VampPrior]{
    \includegraphics[width=0.22\textwidth]{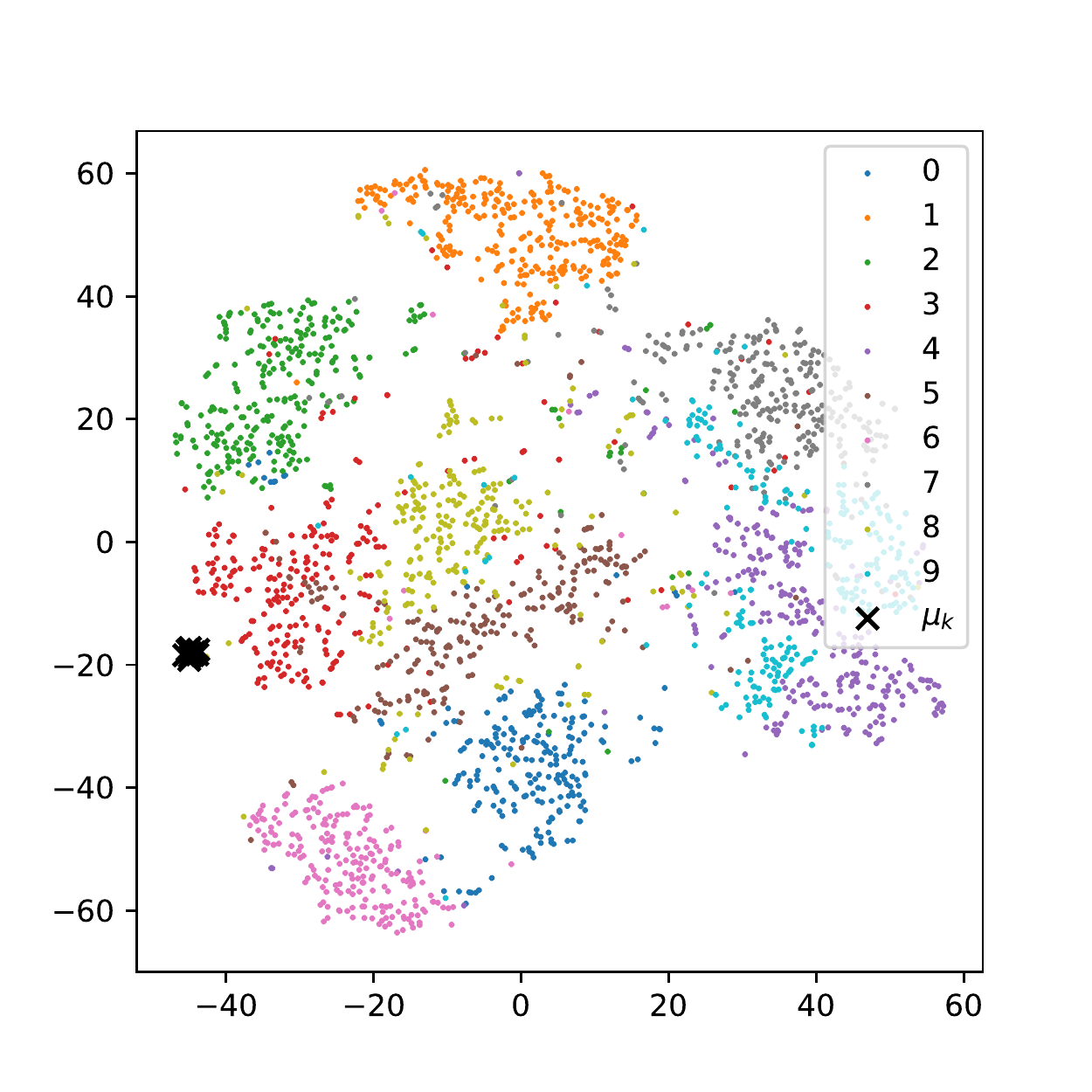}
    }
    \subfigure[GMVAE]{
    \includegraphics[width=0.22\textwidth]{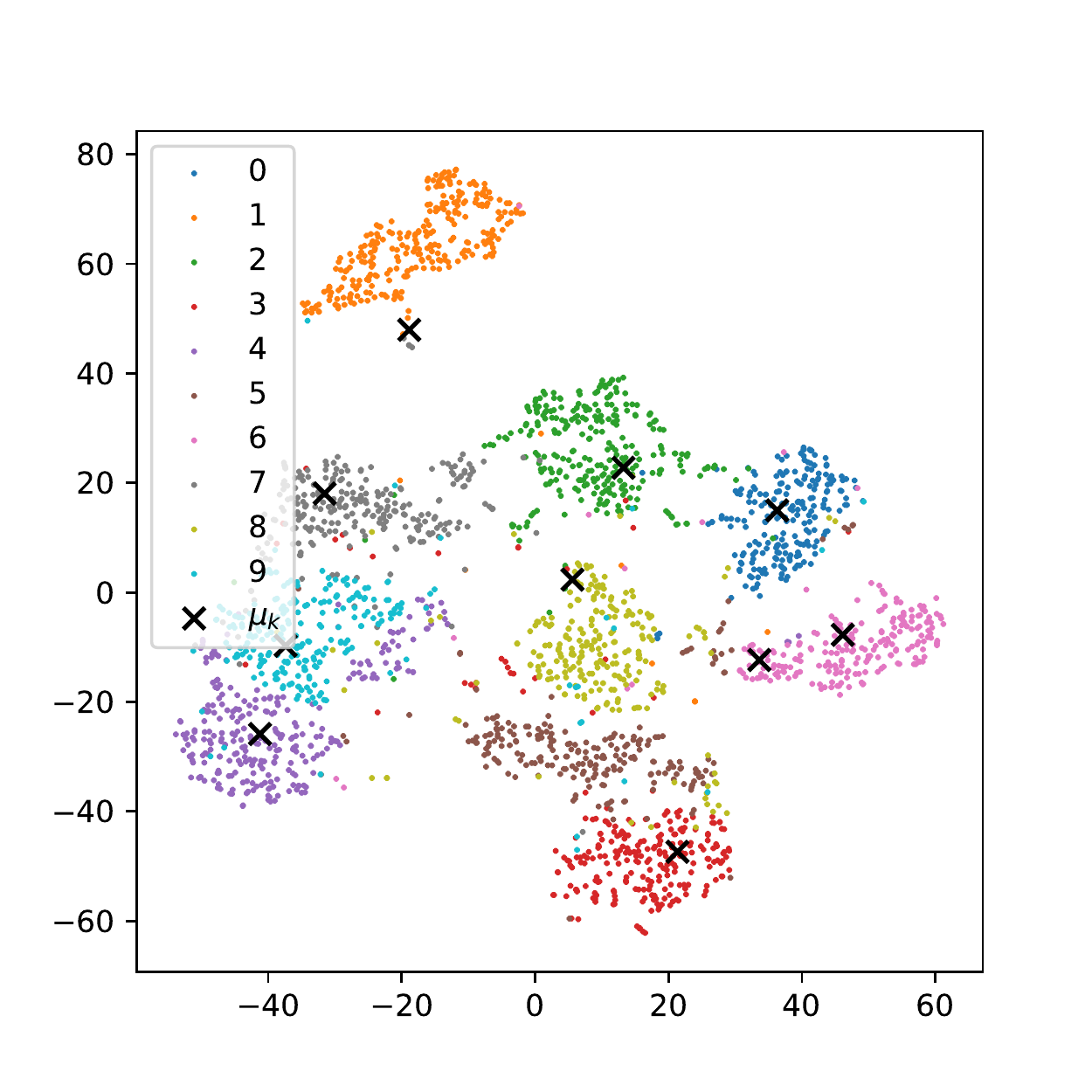}
    }
    \subfigure[WAE]{
    \includegraphics[width=0.22\textwidth]{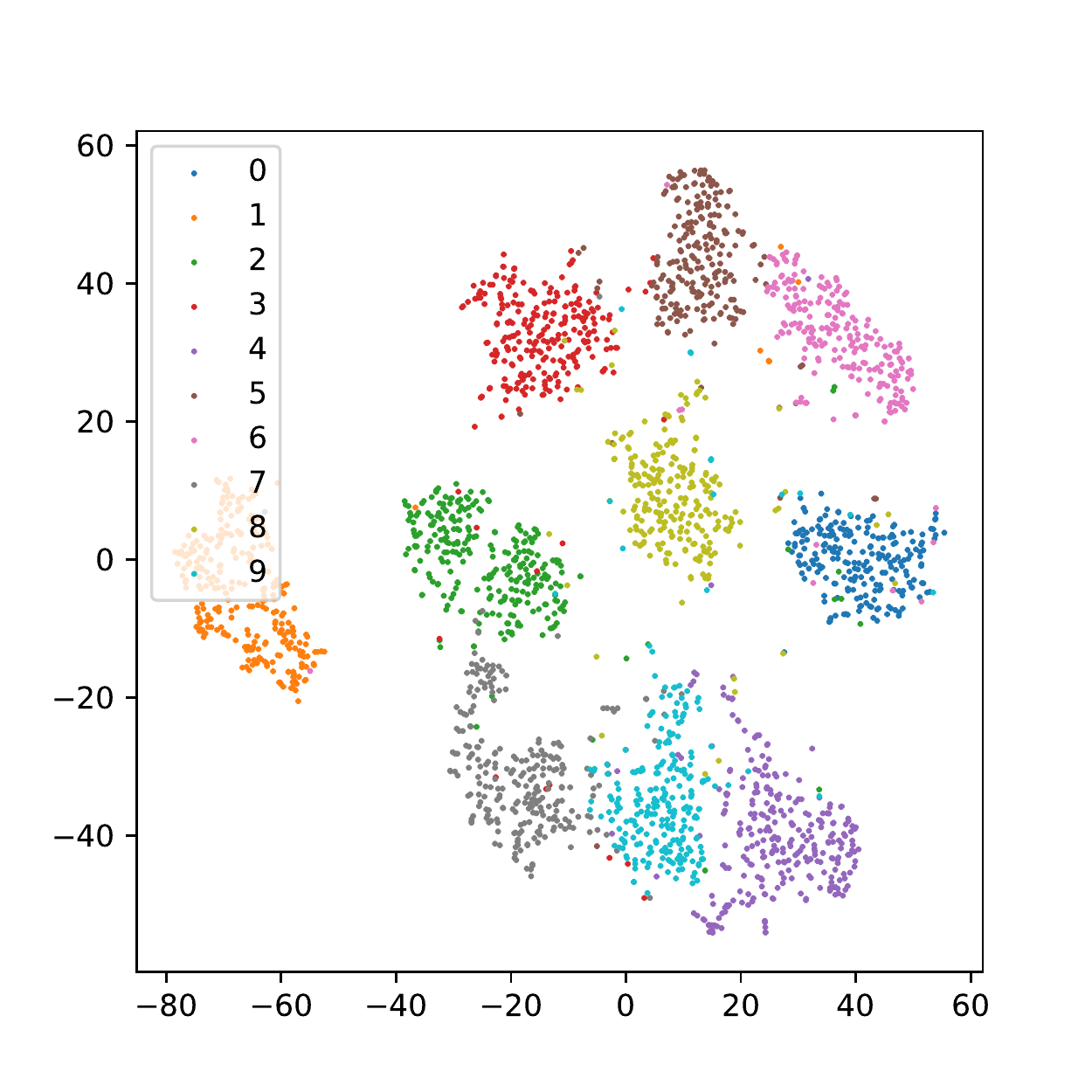}
    }
    \subfigure[SWAE]{
    \includegraphics[width=0.22\textwidth]{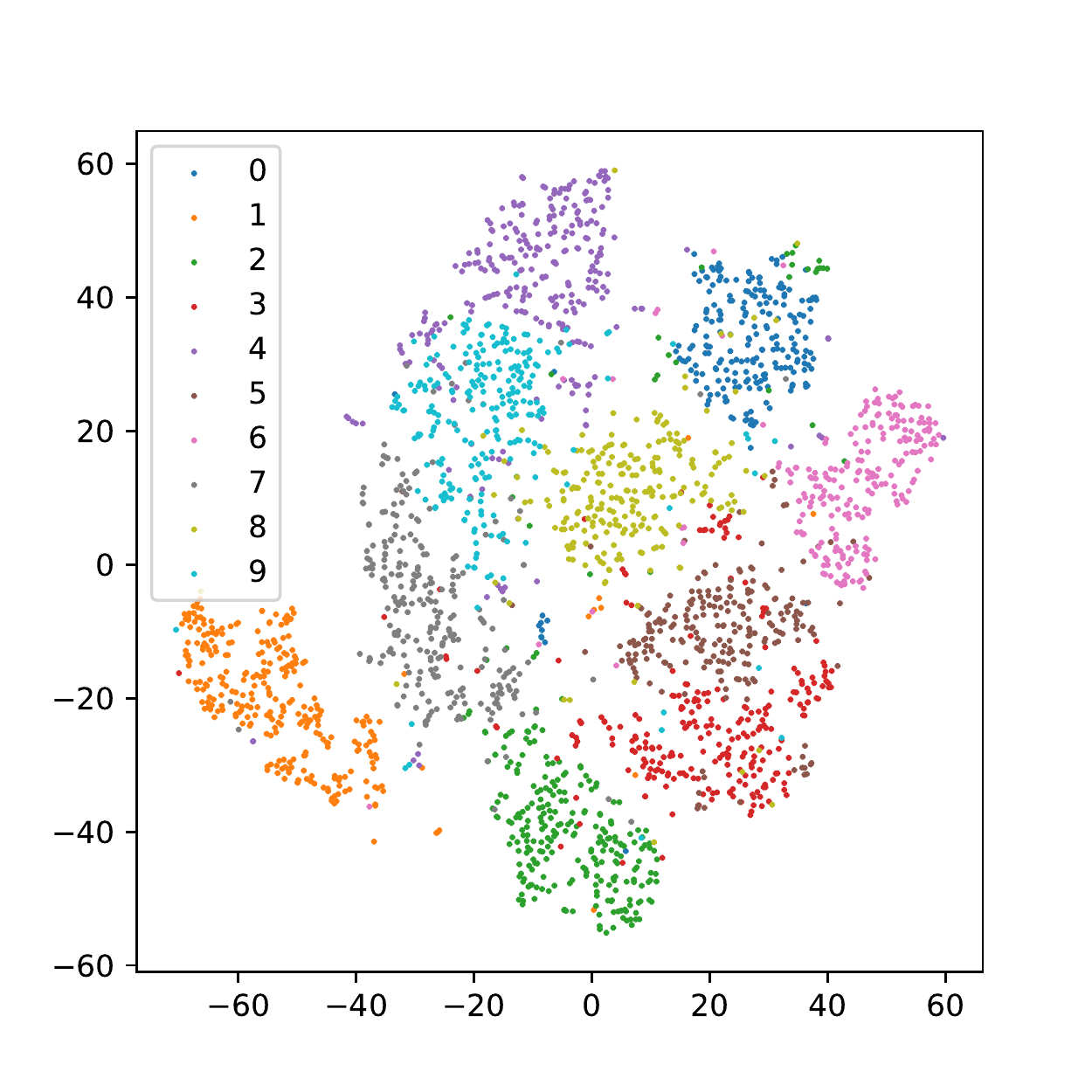}
    }
    \subfigure[Probabilistic RAE]{
    \includegraphics[width=0.22\textwidth]{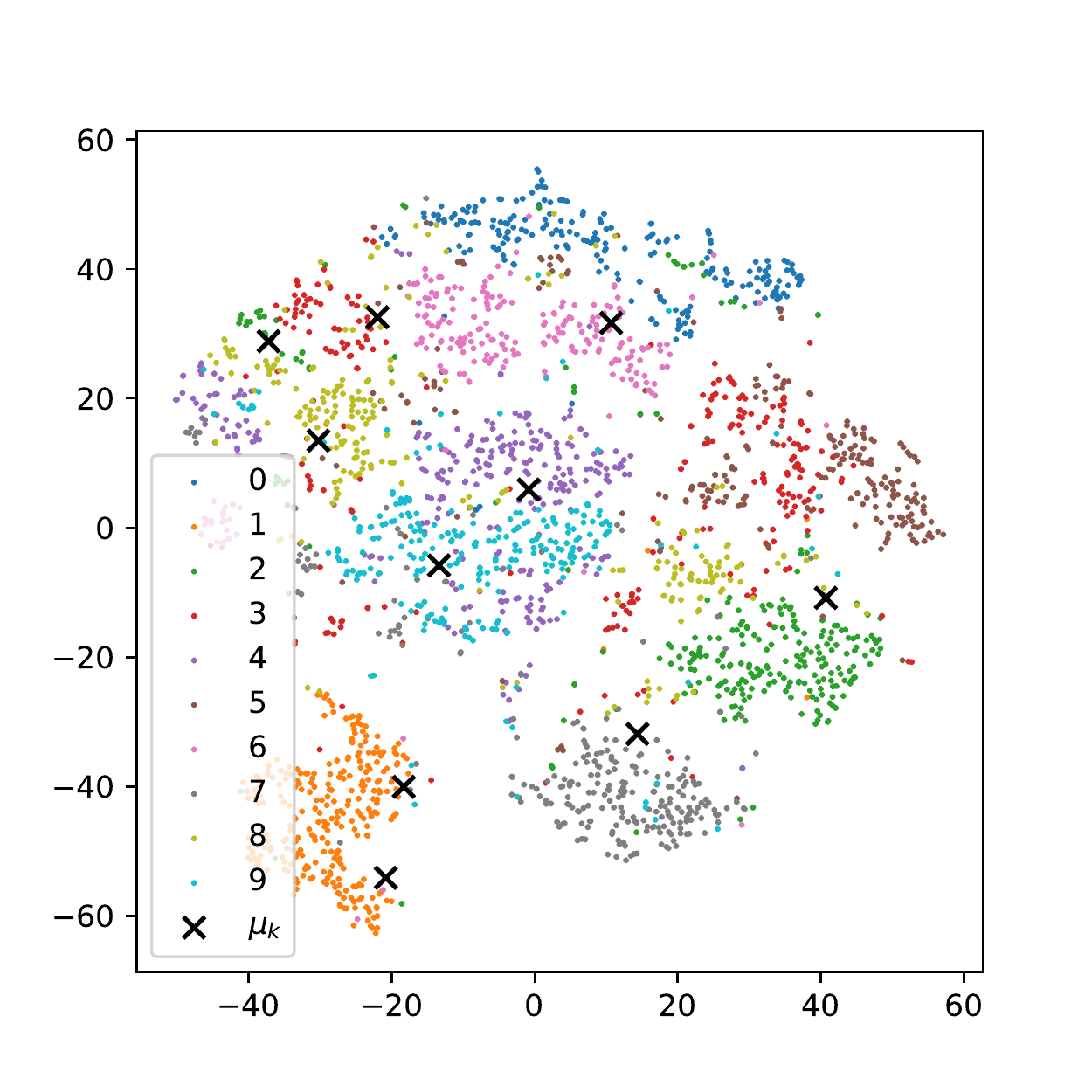}
    }
    \subfigure[Deterministic RAE]{
    \includegraphics[width=0.22\textwidth]{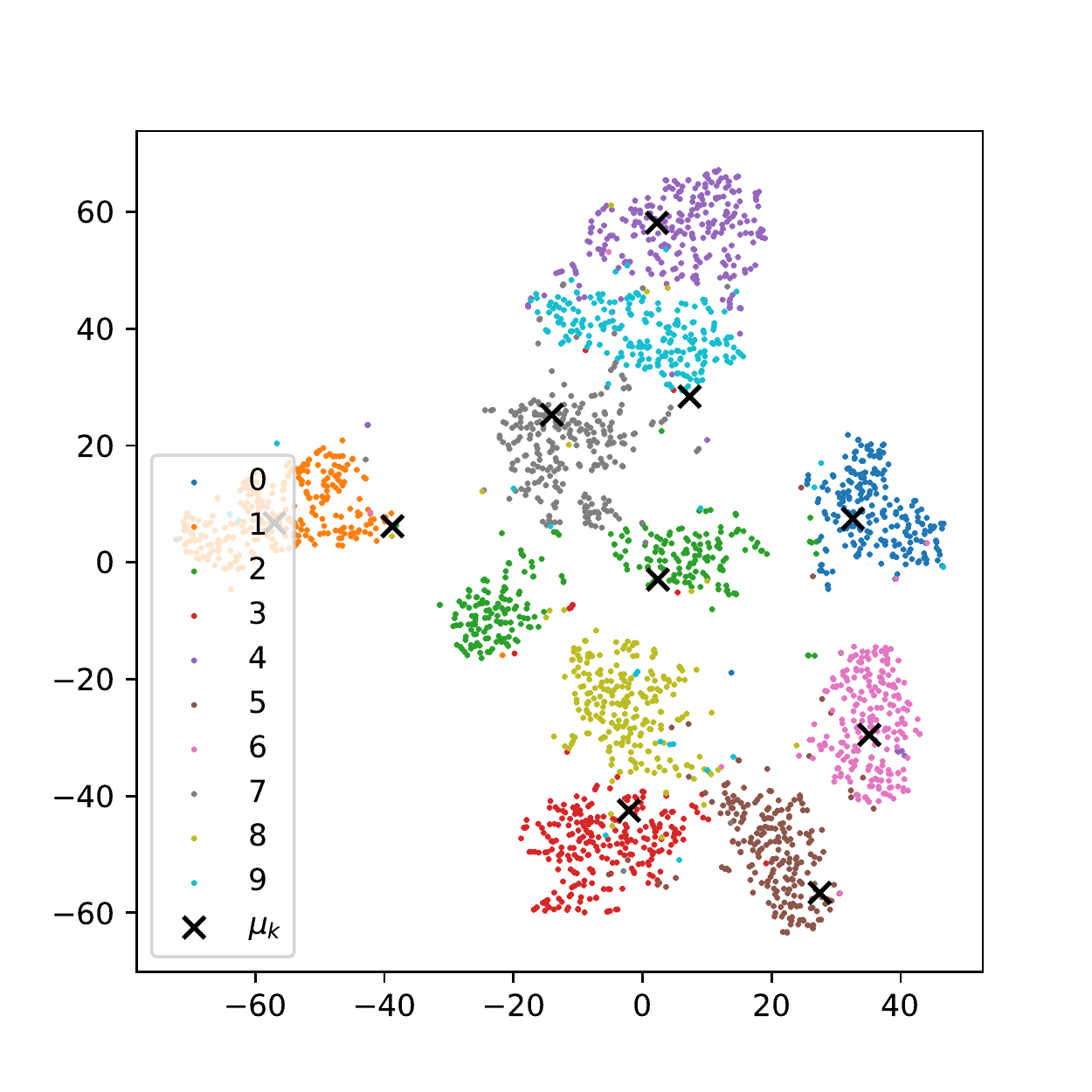}
    }
    \vspace{-10pt}
    \caption{Comparisons for various methods on the t-SNE embeddings of their latent codes of MNIST digits. 
    The embeddings corresponding to different digits are with different colors.    
    For the autoencoders using structured priors, the means of the Gaussian components are also embedded and shown as black crosses.}
    \label{fig:cmp_tsne}
\end{figure}

\section{More Experimental Results}\label{app:4}
\subsection{The influence of $\tau$ on our relational co-training}
For our relational co-training strategy, the hyperparameter $\tau$ controls the strength of the relational regularization for the posterior distributions. 
In our experiments, when $\tau=0$, (\ref{eq:cotrain}) degrades to learn two autoencoders independently. 
When $\tau=1$, on the contrary, (\ref{eq:cotrain}) ignores the constraint imposed by the predefined priors (Here, we assume the priors are normal distributions). 
In our experiments, we empirically set $\tau=0.5$ for all four multi-view learning datasets.
In Figure~\ref{eq:tau}, we further investigate the influence of $\tau$ on the learning results. 
According to the investigation results, we can find that setting $\tau\in [0.4, 0.8]$ achieves the best performance in most situtations.

\subsection{Visual comparisons on image generation}
Figure~\ref{fig:cmp_rec} compares various methods on the quality of reconstructed testing images. 
We can find that the reconstruction results obtained by our probabilistic RAE (P-RAE) and deterministic RAE (D-RAE) are at least comparable to their competitors. 

For the MNIST dataset, we derive the latent codes of the testing images for each autoencoder and compare different autoencoders on the t-SNE embeddings of their latent codes. 
Figure~\ref{fig:cmp_tsne} visualize these t-SNE embeddings. 
For the autoencoders with structured priors ($i.e.$, VampPrior, GMVAE, P-RAE, and D-RAE), the means of the Gaussian components in their priors are embeddings as well.
According to the t-SNE figures we can find that the VampPrior suffers from a serious mode collaspe problem when generating images because its Gaussian components are concentrated together, which can only represent limited modes.
On the contrary, the priors of GMVAE, P-RAE, and D-RAE have diverse Gaussian components, which cover the proposed latent space well.
For the CelebA dataset, we also compare various autoencoders on the t-SNE of their latent codes. 
In this case, we find that both GMVAE and P-RAE fail to learn structured priors --- their GMM-based priors are with poor diversity. 
However, compared with GMVAE, which generates an undesired modality, our P-RAE at least ensures that all the Gaussian components are valid and obey to the empirical distribution of the latent codes.
Our D-RAE achieves the best performance: the t-SNE results show that it successfully estimates a structured prior, whose Gaussian components are with good diversity and indicate different modalities in the latent space. 

Figures~\ref{fig:cmp_sample1} and~\ref{fig:cmp_sample2} give more randomly generated samples. 
The visual effects of these samples further demonstrate the superiority of our RAEs.

\begin{figure}[t]
    \centering
    \subfigure[VAE]{
    \includegraphics[width=0.3\textwidth]{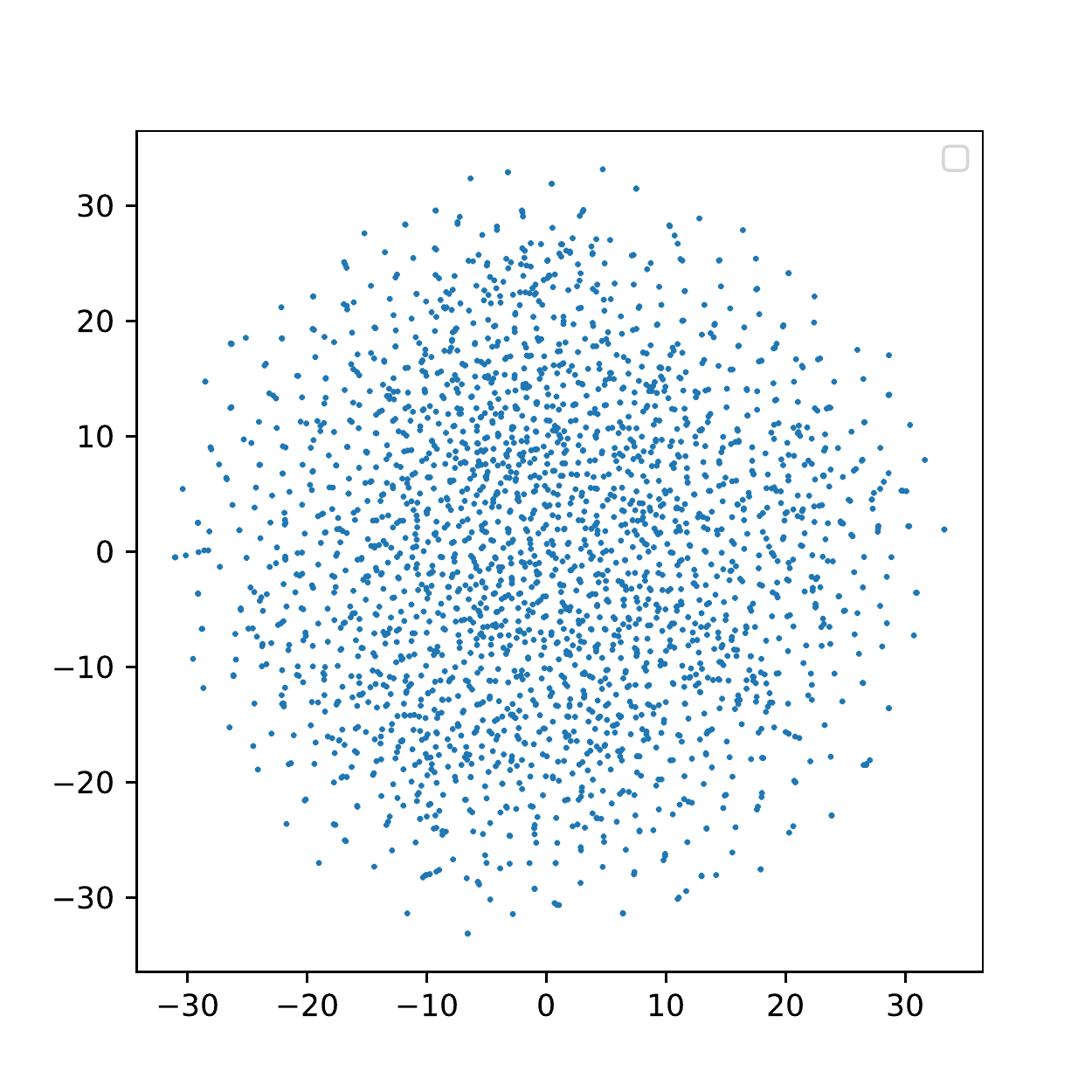}
    }
    \subfigure[GMVAE]{
    \includegraphics[width=0.3\textwidth]{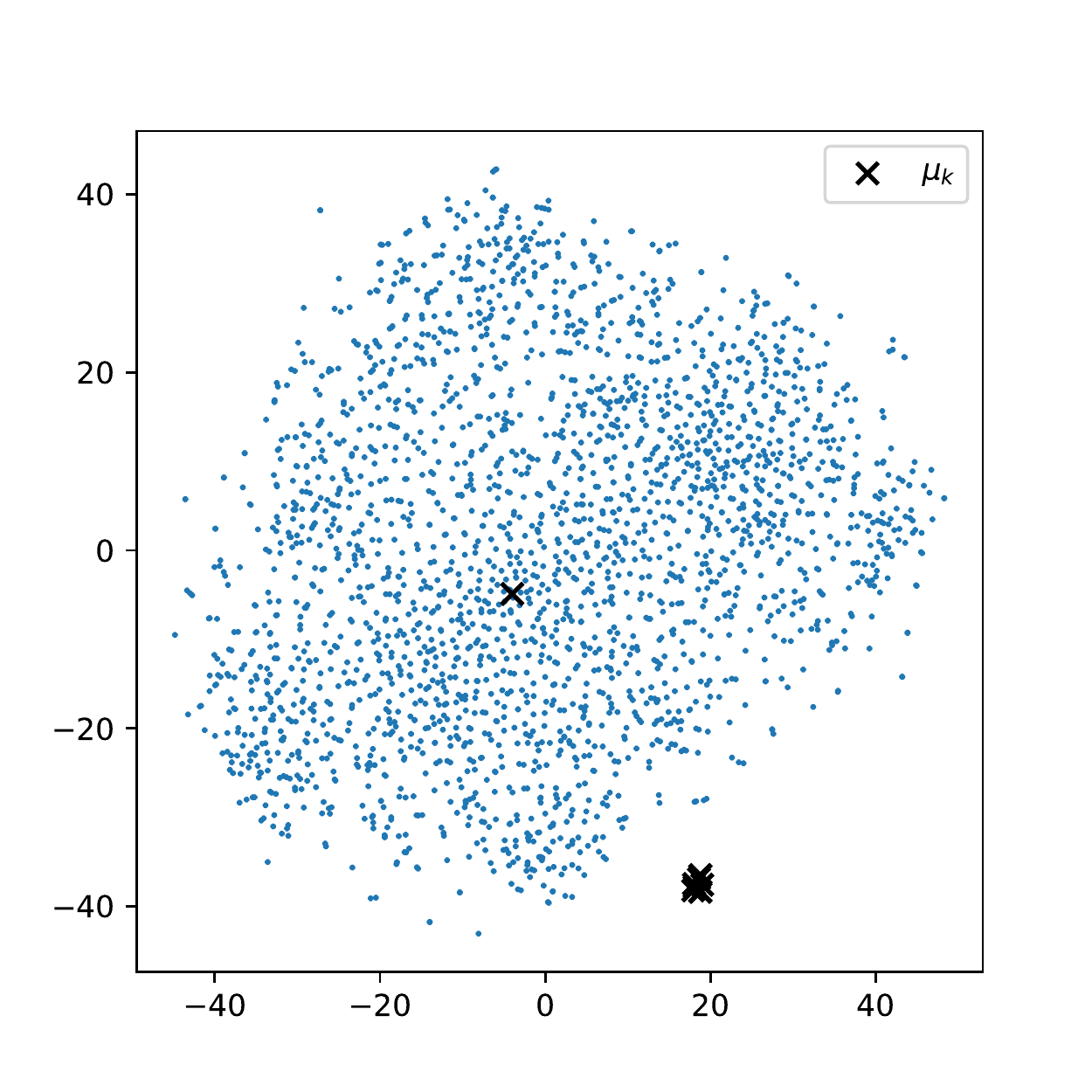}
    }
    \subfigure[WAE]{
    \includegraphics[width=0.3\textwidth]{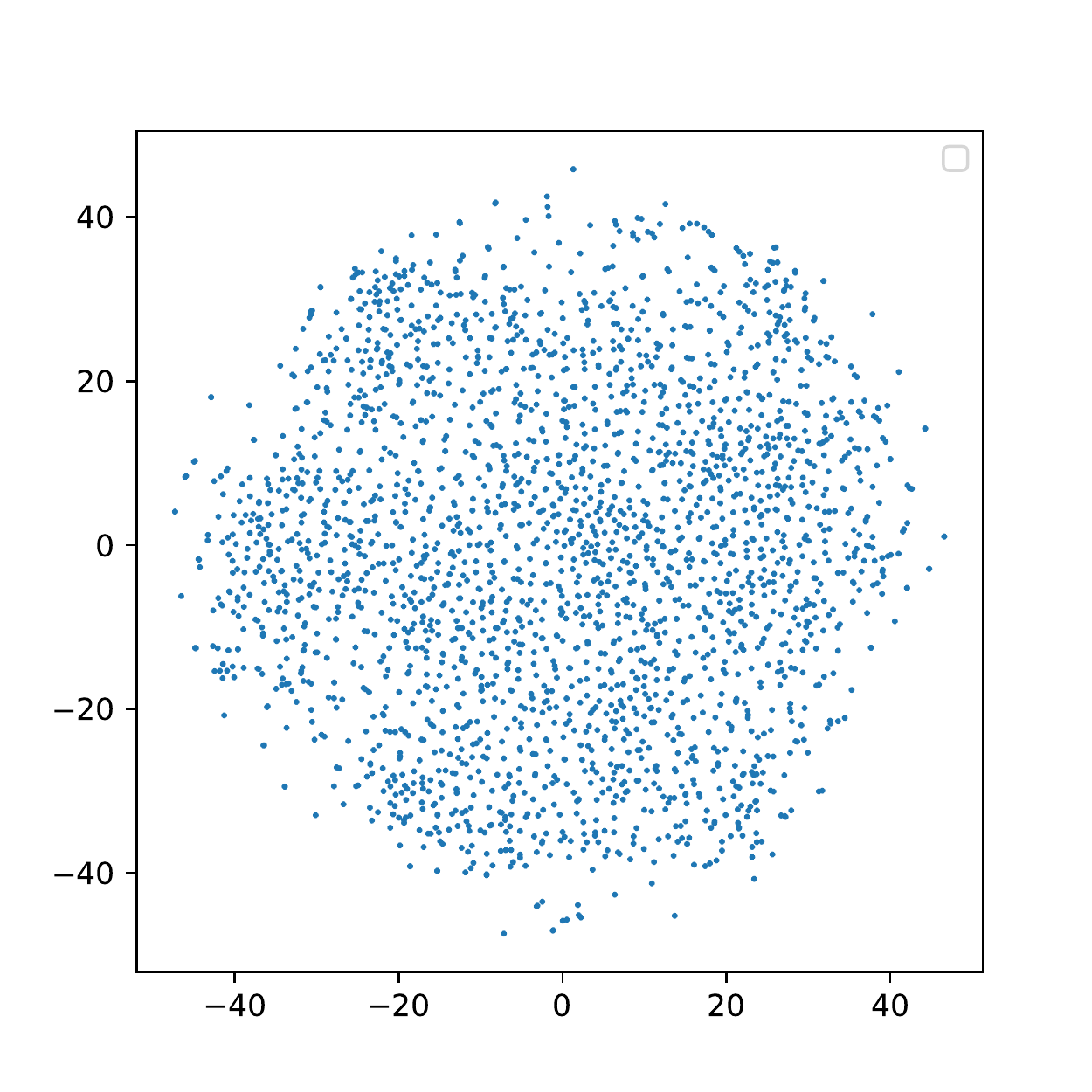}
    }
    \subfigure[SWAE]{
    \includegraphics[width=0.3\textwidth]{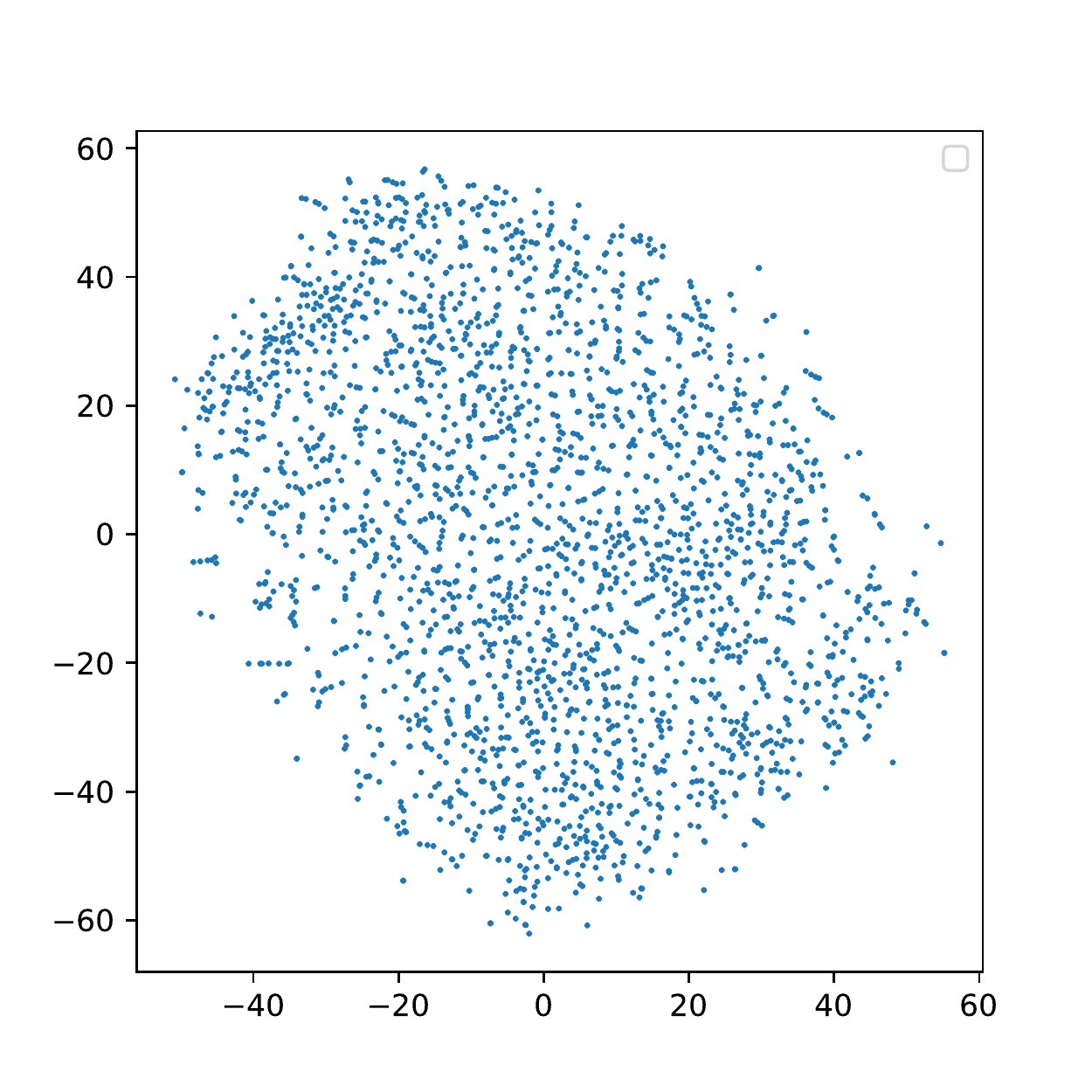}
    }
    \subfigure[Probabilistic RAE]{
    \includegraphics[width=0.3\textwidth]{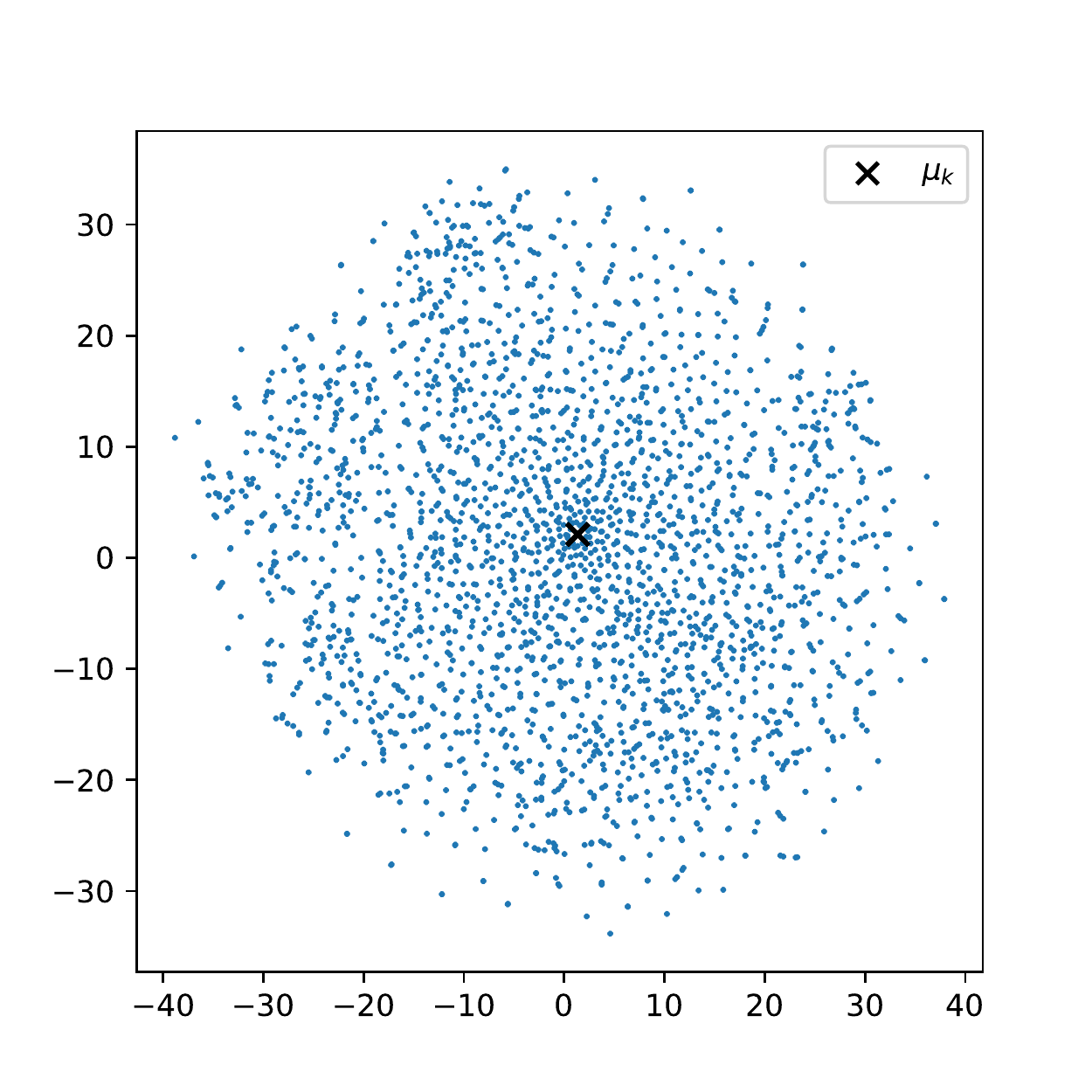}
    }
    \subfigure[Deterministic RAE]{
    \includegraphics[width=0.3\textwidth]{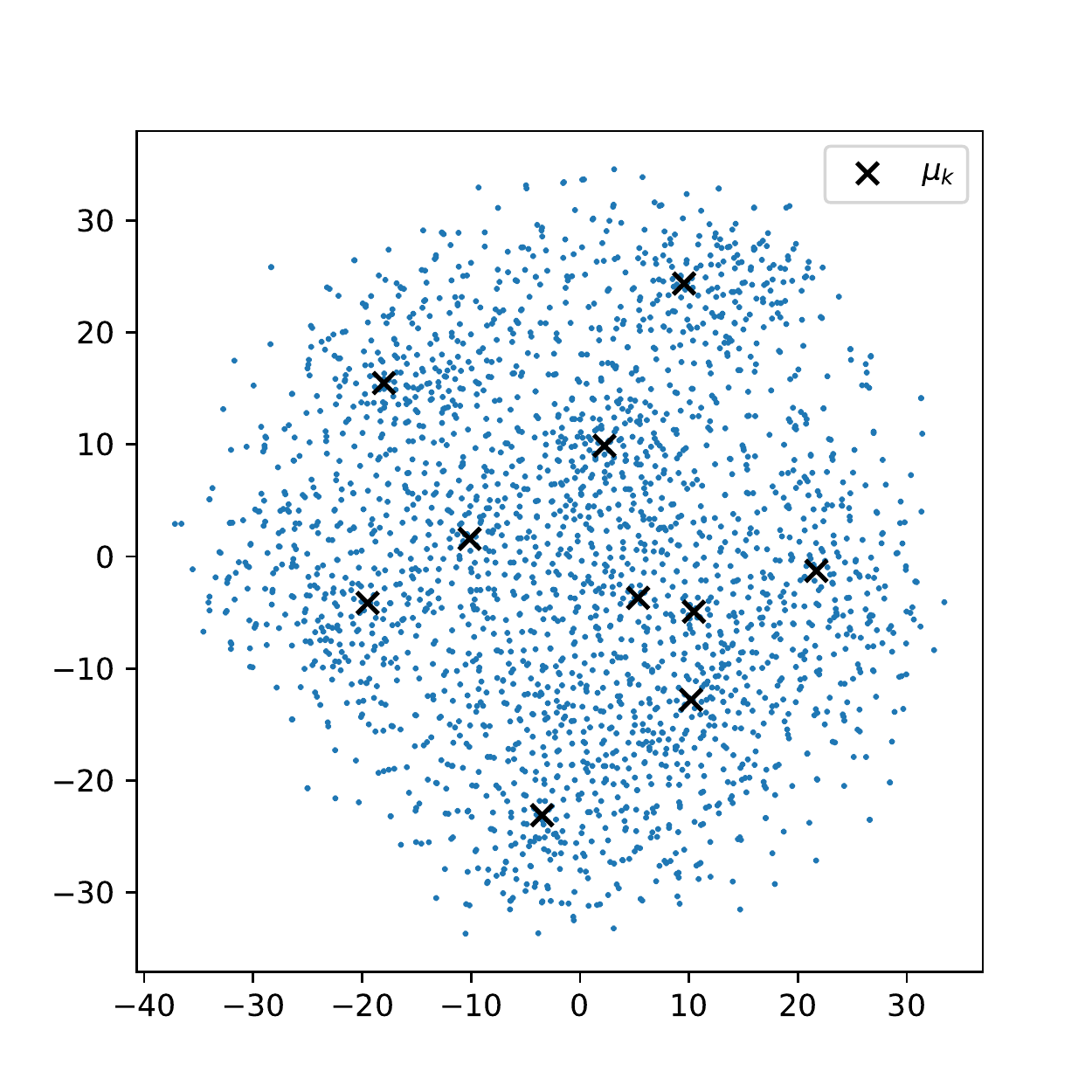}
    }
    \vspace{-10pt}
    \caption{Comparisons for various methods on the t-SNE embeddings of their latent codes of face images. 
    For the autoencoders using structured priors, the means of the Gaussian components are also embedded and shown as black crosses.}
    \label{fig:cmp_tsne2}
\end{figure}

\begin{figure}[t]
    \centering
    \subfigure[VAE]{
    \includegraphics[width=0.22\textwidth]{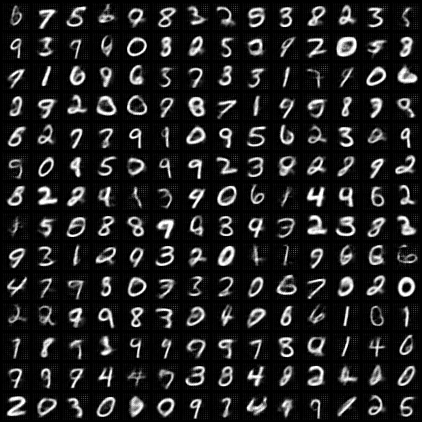}
    }
    \subfigure[VampPrior]{
    \includegraphics[width=0.22\textwidth]{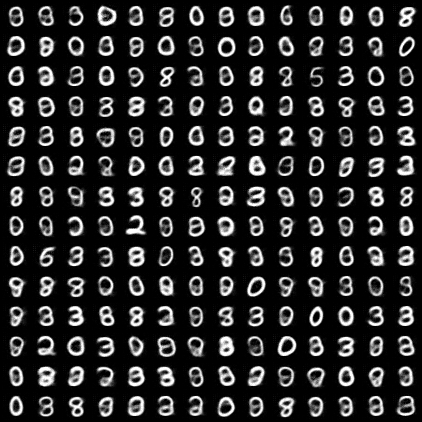}
    }
    \subfigure[GMVAE]{
    \includegraphics[width=0.22\textwidth]{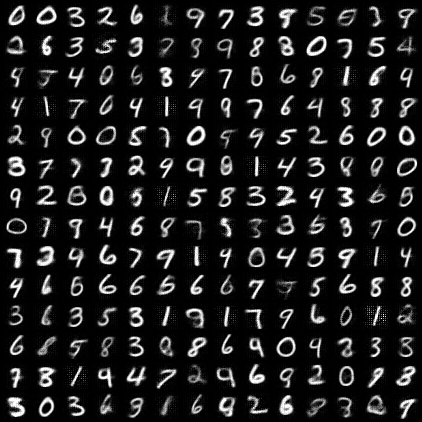}
    }
    \subfigure[WAE]{
    \includegraphics[width=0.22\textwidth]{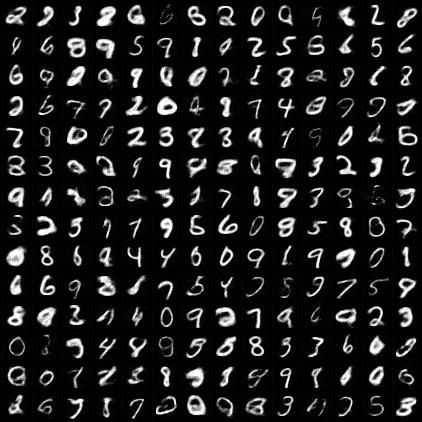}
    }
    \subfigure[SWAE]{
    \includegraphics[width=0.22\textwidth]{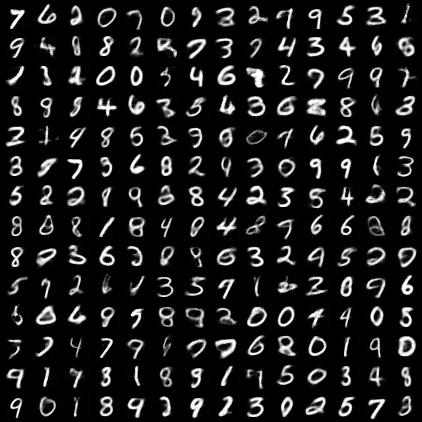}
    }
    \subfigure[Probabilistic RAE]{
    \includegraphics[width=0.22\textwidth]{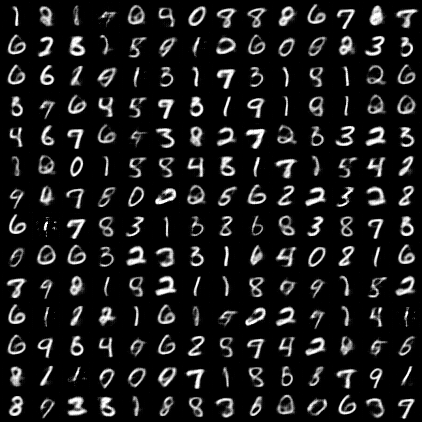}
    }
    \subfigure[Deterministic RAE]{
    \includegraphics[width=0.22\textwidth]{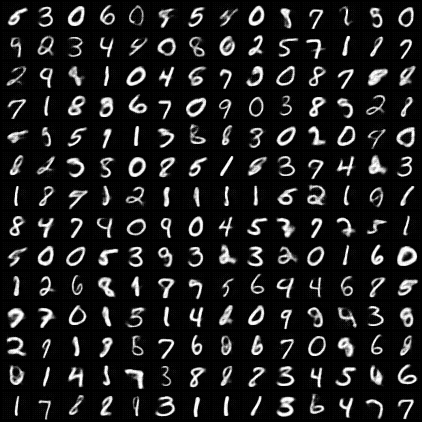}
    }
    \vspace{-10pt}
    \caption{Comparisons for various methods on digit generation.}
    \label{fig:cmp_sample1}
\end{figure}

\begin{figure}[t]
    \centering
    \subfigure[VAE]{
    \includegraphics[width=0.4\textwidth]{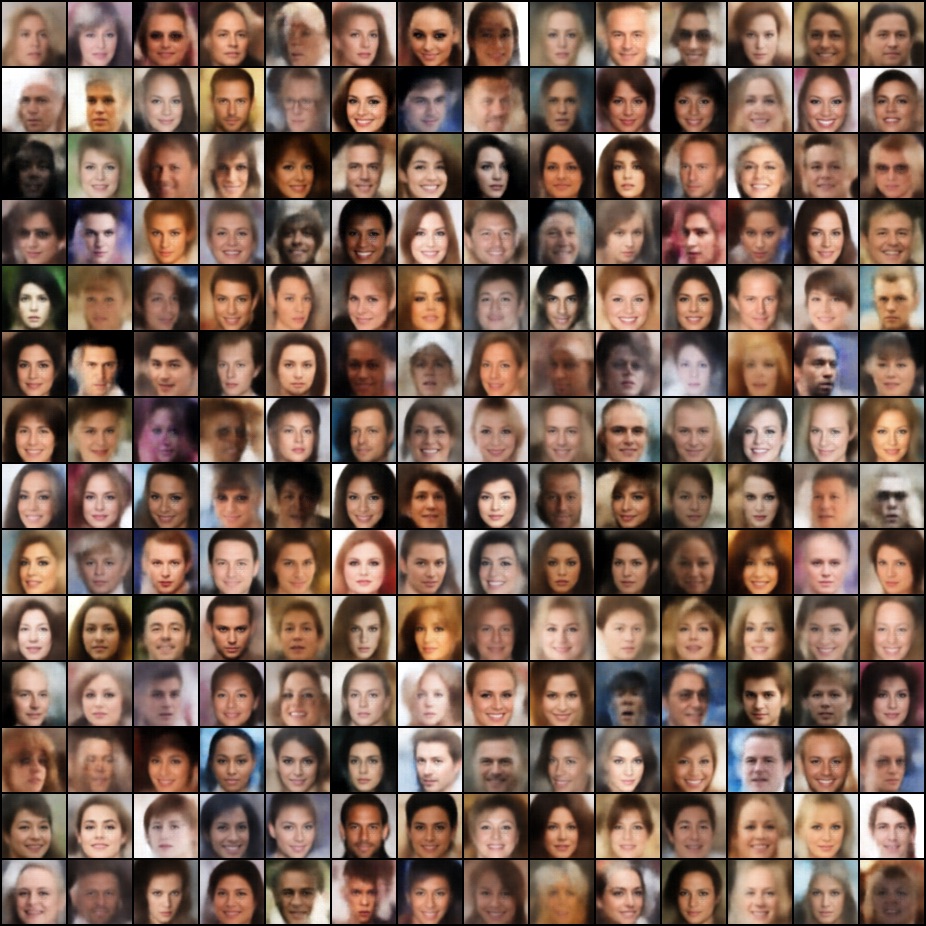}
    }
    \subfigure[GMVAE]{
    \includegraphics[width=0.4\textwidth]{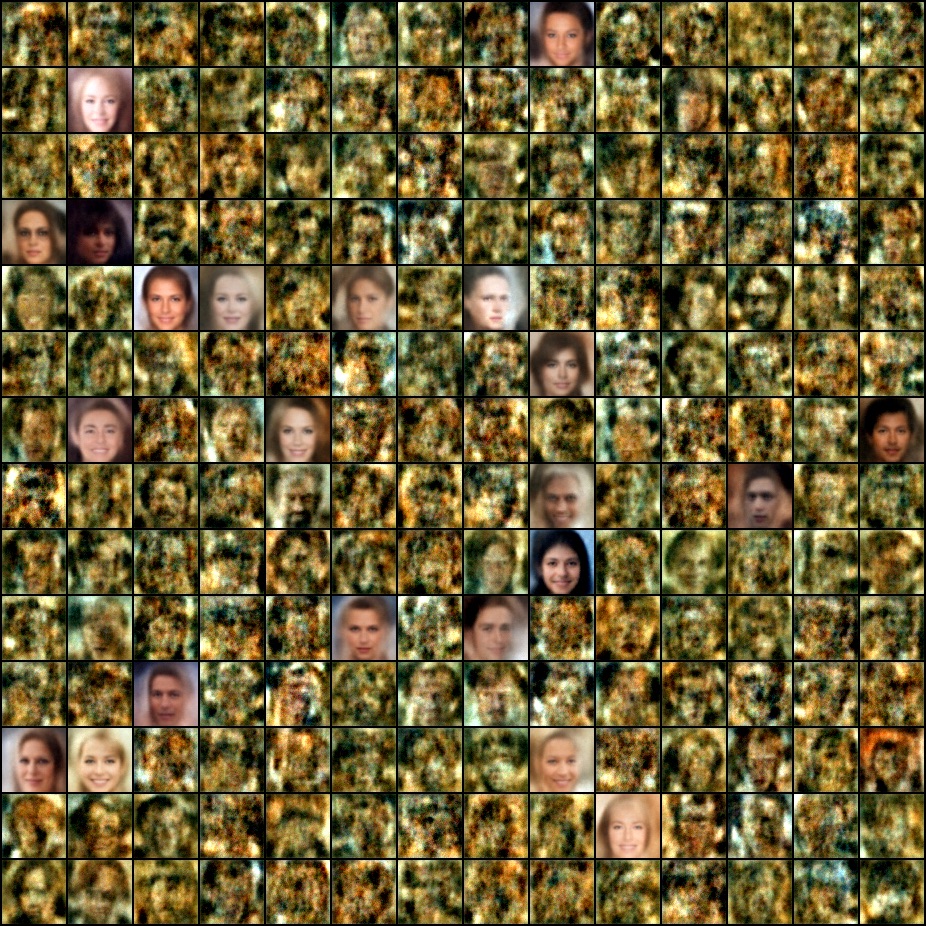}
    }
    \subfigure[WAE]{
    \includegraphics[width=0.4\textwidth]{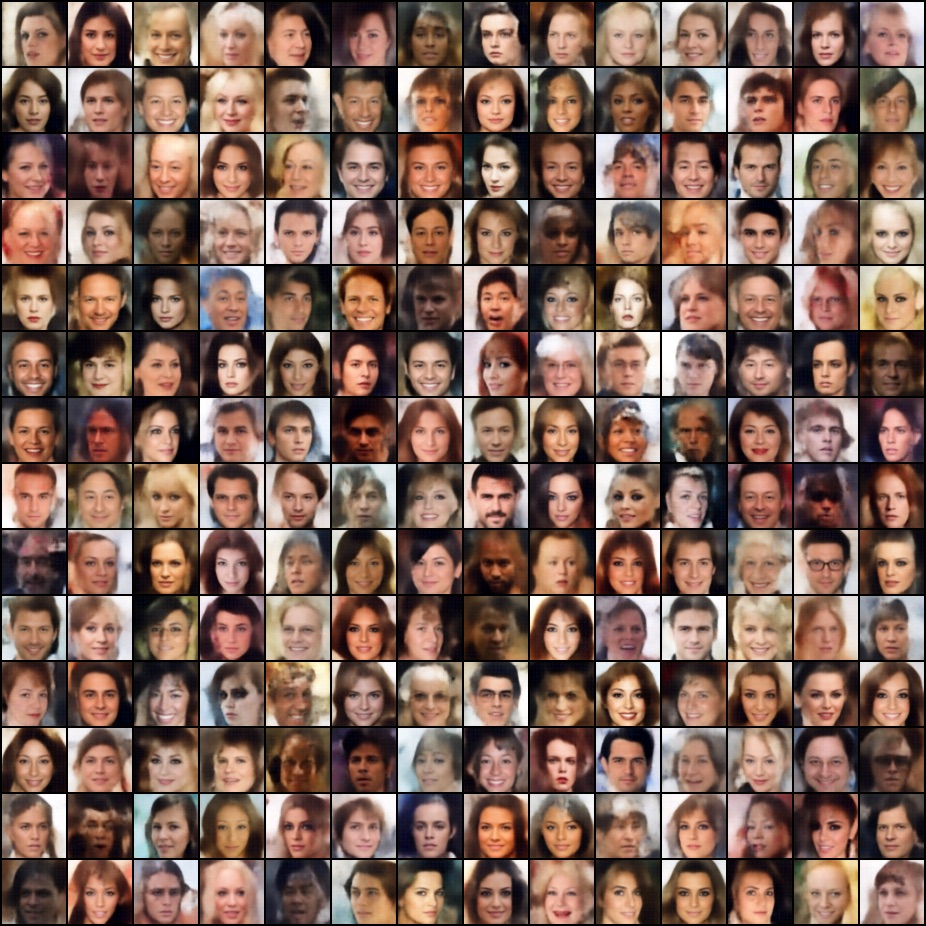}
    }
    \subfigure[SWAE]{
    \includegraphics[width=0.4\textwidth]{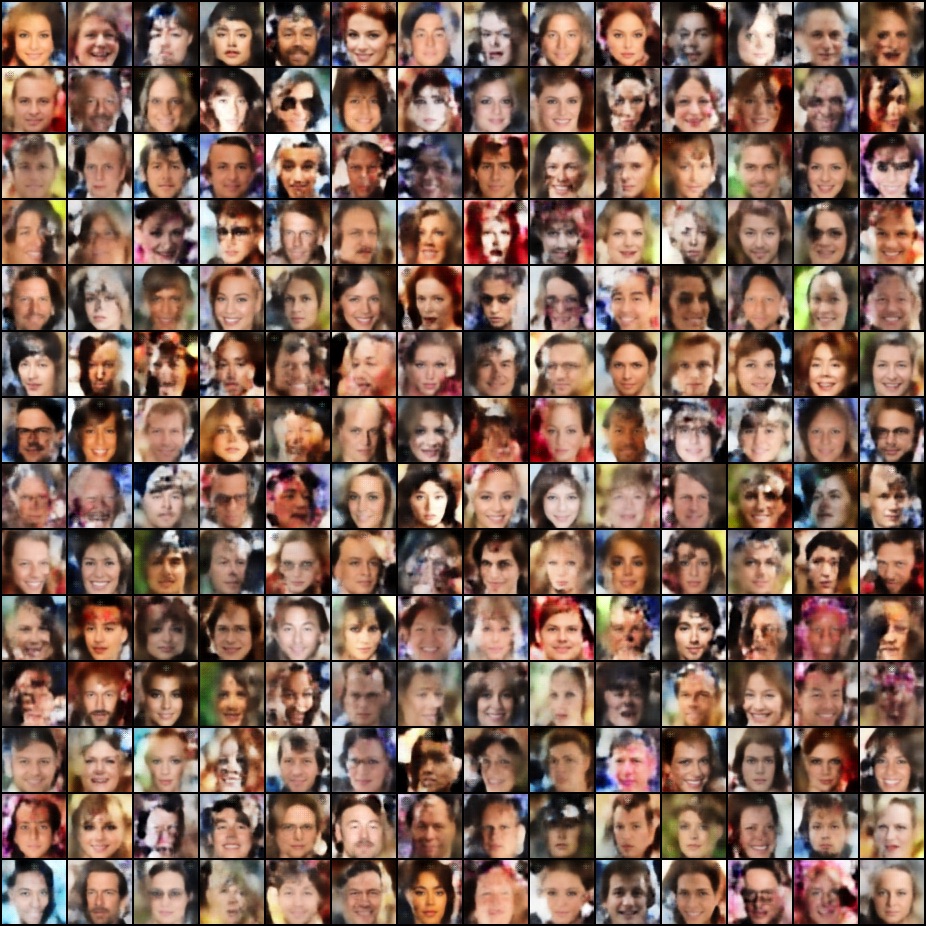}
    }
    \subfigure[Probabilistic RAE]{
    \includegraphics[width=0.4\textwidth]{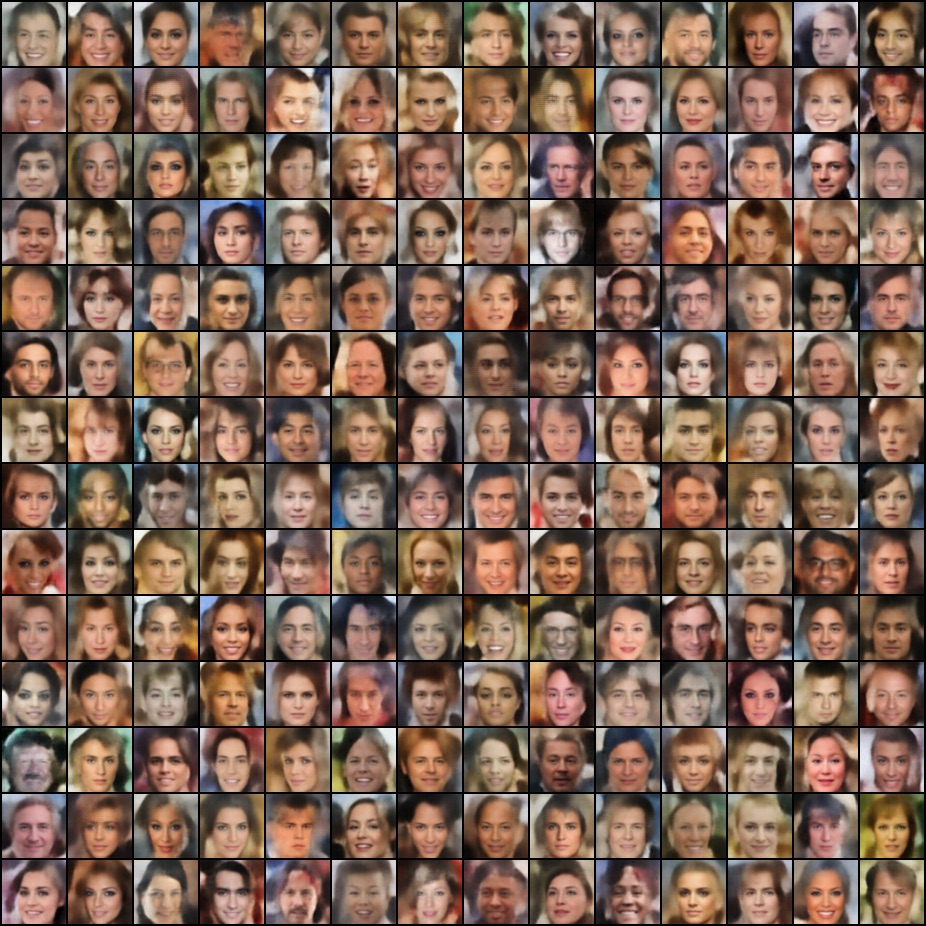}
    }
    \subfigure[Deterministic RAE]{
    \includegraphics[width=0.4\textwidth]{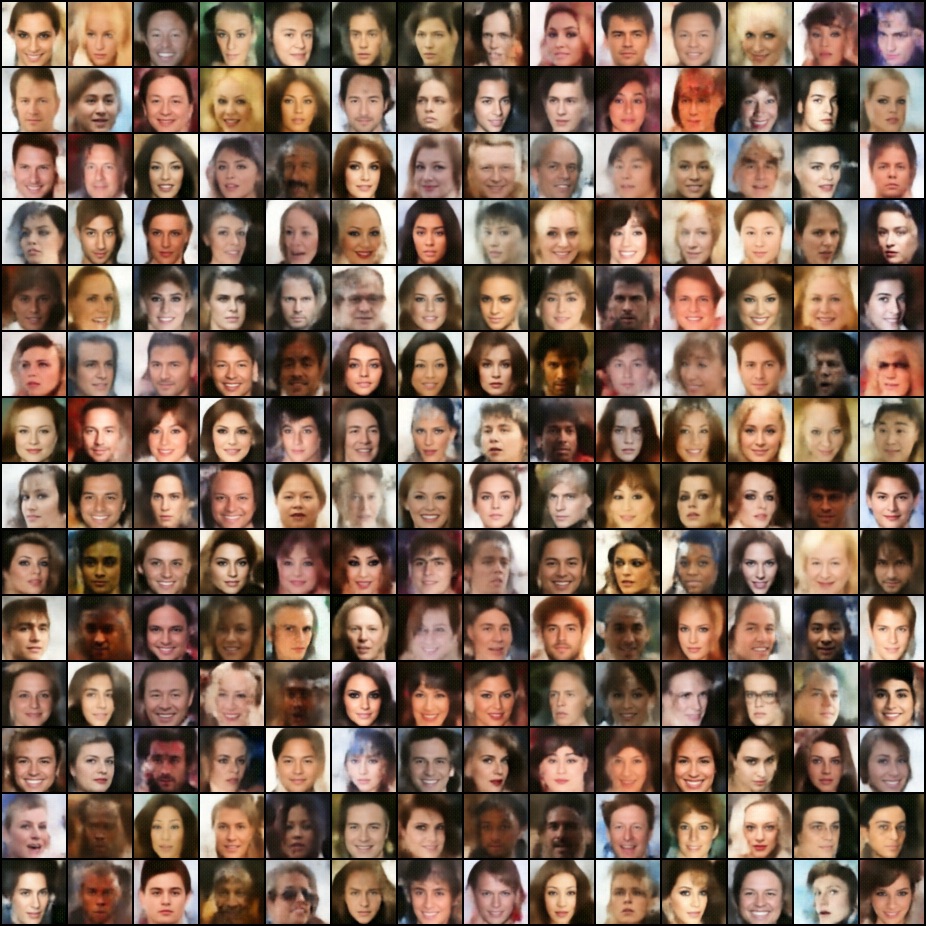}
    }
    \vspace{-10pt}
    \caption{Comparisons for various methods on face generation.}
    \label{fig:cmp_sample2}
\end{figure}

\end{document}